\documentclass[journal]{IEEEtai}
\usepackage{url}
\usepackage[utf8]{inputenc}
\usepackage[small]{caption}
\usepackage{graphicx}
\usepackage{booktabs}
\usepackage{algorithm}
\usepackage{cite}
\usepackage{algorithmic}
\urlstyle{same}
\usepackage{color}
\usepackage{multirow}
\usepackage{bm}

\usepackage{booktabs}
\usepackage{algorithm,algorithmic}

\usepackage{subfigure}
\usepackage{amsmath,amssymb, amsthm,amsfonts}

% the following package is optional:
%\usepackage{latexsym}

% See https://www.overleaf.com/learn/latex/theorems_and_proofs
% for a nice explanation of how to define new theorems, but keep
% in mind that the amsthm package is already included in this
% template and that you must *not* alter the styling.

\newtheorem{theorem}{Theorem}

\ifCLASSINFOpdf
  % \usepackage[pdftex]{graphicx}
  % declare the path(s) where your graphic files are
  % \graphicspath{{../pdf/}{../jpeg/}}
  % and their extensions so you won't have to specify these with
  % every instance of \includegraphics
  % \DeclareGraphicsExtensions{.pdf,.jpeg,.png}
\else
  % or other class option (dvipsone, dvipdf, if not using dvips). graphicx
  % will default to the driver specified in the system graphics.cfg if no
  % driver is specified.
  % \usepackage[dvips]{graphicx}
  % declare the path(s) where your graphic files are
  % \graphicspath{{../eps/}}
  % and their extensions so you won't have to specify these with
  % every instance of \includegraphics
  % \DeclareGraphicsExtensions{.eps}
\fi
\hyphenation{op-tical net-works semi-conduc-tor}

\begin{document}
%
% paper title
% Titles are generally capitalized except for words such as a, an, and, as,
% at, but, by, for, in, nor, of, on, or, the, to and up, which are usually
% not capitalized unless they are the first or last word of the title.
% Linebreaks \\ can be used within to get better formatting as desired.
% Do not put math or special symbols in the title.
%\title{Collaborative Update and Double Regularized Regressions for Auto-weighted Noisy and Incomplete Multi-view Clustering}
\title{ANIMC: A Soft Approach for Auto-weighted Noisy and Incomplete Multi-view Clustering}
%
%
% author names and IEEE memberships
% note positions of commas and nonbreaking spaces ( ~ ) LaTeX will not break
% a structure at a ~ so this keeps an author's name from being broken across
% two lines.
% use \thanks{} to gain access to the first footnote area
% a separate \thanks must be used for each paragraph as LaTeX2e's \thanks
% was not built to handle multiple paragraphs
%

\author{Xiang~Fang,
        Yuchong~Hu,~\IEEEmembership{Member,~IEEE,}
        Pan~Zhou,~\IEEEmembership{Senior Member,~IEEE,}
        and~Dapeng~Oliver~Wu,~\IEEEmembership{Fellow,~IEEE}% <-this % stops a space
\thanks{This work is supported by National Natural Science Foundation of China (NSFC) under grant no. 61972448. (\emph{Corresponding author: Pan Zhou}.)}
\thanks{X. Fang is with the Hubei Engineering Research Center on Big Data Security, School of Cyber Science and Engineering Huazhong University of Science and Technology, also with the School of Computer Science and Technology, Huazhong University of Science and Technology, Wuhan 430074, China (e-mail: xfang9508@gmail.com).}% <-this % stops a space
\thanks{Y. Hu is with the School of Computer Science and Technology, Key Laboratory of Information Storage System Ministry of Education of China, Huazhong University of Science and Technology, Wuhan 430074, China (e-mail: yuchonghu@hust.edu.cn).}% <-this % stops a space
\thanks{P. Zhou is with the Hubei Engineering Research Center on Big Data Security, School of Cyber Science and Engineering, Huazhong University of Science and Technology, Wuhan 430074, China (e-mail: panzhou@hust.edu.cn).}% <-this % stops a space
\thanks{D. O. Wu is with the Department of Electrical and Computer Engineering, University of Florida, Gainesville, FL 32611, USA (e-mail: dpwu@ieee.org).}
\thanks{$\copyright$ 2021 IEEE. Personal use of this material is permitted.  Permission from IEEE must be obtained for all other uses, in any current or future media, including reprinting/republishing this material for advertising or promotional purposes, creating new collective works, for resale or redistribution to servers or lists, or reuse of any copyrighted component of this work in other works.}
}% <-this % stops a space

\maketitle

% As a general rule, do not put math, special symbols or citations
% in the abstract or keywords.
\begin{abstract}
Multi-view clustering has wide real-world applications because it can process data from multiple sources. However, these data often contain missing instances and noises, which are ignored by most multi-view clustering methods. Missing instances may make these methods difficult to use directly, and noises will lead to unreliable clustering results. In this paper, we propose a novel Auto-weighted Noisy and Incomplete Multi-view Clustering approach (ANIMC) via a \emph{soft} auto-weighted strategy and a doubly \emph{soft} regular regression model. Firstly, by designing adaptive semi-regularized nonnegative matrix factorization (adaptive semi-RNMF), the soft auto-weighted strategy assigns a \emph{proper} weight to each view and adds a soft boundary to balance the influence of noises and incompleteness. Secondly, by proposing $\theta$-norm, the doubly soft regularized regression model adjusts the sparsity of our model by choosing different $\theta$.
Compared with previous methods, ANIMC has three unique advantages: 1) it is a soft algorithm to adjust our approach in different scenarios, thereby improving its generalization ability; 2) it automatically learns a proper weight for each view, thereby reducing the influence of noises; 3) it performs doubly soft regularized regression that aligns the same instances in different views, thereby decreasing the impact of missing instances. Extensive experimental results demonstrate its superior advantages over other state-of-the-art methods.
%We provide codes for all of our experiments in \url{https://drive.google.com/file/d/1OgI9y3Mfro6yDl-TLBqIeW25O5DHa1ma/view?usp=sharing}.
%Compared with existing methods, ANIMC has two unique advantages: 1) it automatically learns a \emph{proper} weight for each noisy and incomplete view and the \emph{optimal} latent feature matrix, thereby reducing the influence of noises; and 2) it performs a doubly regularized regression that aligns the same instances in different views and pushes the common latent feature matrix towards its consensus, thereby decreasing the impact of missing instances. Extensive Experimental experiments demonstrate its advantages over other state-of-the-art methods.
\end{abstract}

%which aims to evaluate the real repair performance of DRC, and implement and
%deploy it atop an HDFS cluster testbed.
%We design two families of explicit DRC, which not only achieve the minimal
%cross-rack repair traffic, but also are practical to be implemented in real
%storage systems because they are constructed on top of reed-solomon codes.
%We show that our DoubleR conforms to DRC theoretical results and achieves cross-rack repair traffic savings when compared to state-of-the-art regenerating codes.
%Data centers are tiered by nature, and the erasure-coded repair performance is bottlenecked by the critical network bandwidth across tiers.
%
%Motivation:
%
%oversubscription: modern data intensive computing clusters typically have oversubscription values ranging from 3:1 to 10:1 from the racks to the core.
%
%DRC: reduce the cross-rack bandwidth.
%
%In this paper:
%
%DRC theory:
%
%1 further reduce the intra-rack bandwidth
%
%2 further tolerate rack failures optimally
%
%3 deterministic DRC design
%
%DRC framework: real repair performance of DRC
%
%Experiments: Hadoop, Ceph, cross-rack, cross-datecenter.

\begin{IEEEImpStatement}
As an effective method to process data from multiple sources, multi-view clustering has attracted more and more attention. However, most previous works ignore missing instances and noises in original multi-view data, which limits the applications of these works. By a soft approach, our proposed ANIMC can effectively reduce the negative influence of missing instances and noises.
Moreover, ANIMC outperforms the state-of-the-art works by about 20\% in representative cases.
With satisfactory performance on multiple real-world datasets, ANIMC has wide potential applications including the analysis of multilingual document and image datasets.
\end{IEEEImpStatement}

% Note that keywords are not normally used for peerreview papers.
\begin{IEEEkeywords}
Noisy and incomplete multi-view clustering, Soft auto-weighted strategy, Doubly soft regularized regression.
\end{IEEEkeywords}

% For peer review papers, you can put extra information on the cover
% page as needed:
\ifCLASSOPTIONpeerreview
% \ifCLASSOPTIONpeerreview
% \begin{center} \bfseries EDICS Category: 3-BBND \end{center}
 \fi
%
% For peerreview papers, this IEEEtran command inserts a page break and
% creates the second title. It will be ignored for other modes.
\IEEEpeerreviewmaketitle

\section{Introduction}\label{section:intro}
\IEEEPARstart{R}{eal-world} data \cite{liu2018fast,yang2020shearlet,fang2021} often come from different sources, which are called \emph{multi-view data} \cite{blum1998combining,gong2016multi,tan2019individuality}.
For example, the collected images \cite{ma2019deep,zhou2017tensor,yan2020deep,9229197} can be represented by different visual descriptors (i.e., different views)~\cite{wang2020weighted,liu2023exploring,wang2025taylor,fang2026towardsicml,kuai2026dynamic,wang2025point,fang2025your,zhang2025monoattack,fang2023hierarchical,liu2024towards,yang2025eood,fang2022multi,fang2026cogniVerse,lei2025exploring,fang2023you,wang2025dypolyseg,fang2025hierarchical,yan2026fit,fang2025adaptive,wang2026topadapter,cai2025imperceptible,fang2026slap,wang2026reasoning,fang2026immuno,wang2026biologically,fang2026disentangling,wang2025reducing,fang2026advancing,fang2026unveiling,wang2026from,liu2023conditional,liu2026attacking,fang2026rethinking,wang2025seeing,fang2026towards,fang2025multi,fang2024fewer,liu2024pandora,fang2024multi,fang2025turing,fang2024not,liu2023hypotheses,fang2024rethinking,liu2024unsupervised,fang2023annotations,xiong2024rethinking,fang2020v,wang2025prototype,zhang2025manipulating,fang2026align,tang2024reparameterization,fang2025adaptivetai,tang2025simplification,fang2021uimc,cai2026towards,fang2020double}, like CTM \cite{chen2011prediction,shao2020intra,chambolle2018stochastic}, GIST \cite{min2017delicious,liu2017robust,jiang2019discrete}, SIFT \cite{zhou2016bilevel}, etc; web pages can be represented by different kinds of features based on text and hyper-links \cite{xu2013survey,sun2013survey,lu2013unified}.
Integrating the information from different views can help us analyze the data in a comprehensive manner \cite{xu2013survey,sun2013survey,yang2020fade}, which motivates multi-view clustering methods \cite{li2017implicit,zhang2017latent,he2020mv,jia2020semi}.
%In the field of multi-view learning, multi-view clustering has attracted more and more attention due to eliminating the high cost of time and money on labeling multi-view data.
The purpose of multi-view clustering is to adaptively partition data into their respective groups by fully integrating the information from multiple views \cite{lu2013unified}. Up to the present, many multi-view clustering methods have been proposed, like space-based methods \cite{zhou2018multiview,zhang2017latent,kang2021structured}, graph-based methods~\cite{ng2002spectral,nie2017self,kang2018self}, nonnegative matrix factorization (NMF) based methods~\cite{liu2013multi,liang2020multi}, etc. Although most multi-view clustering methods can solve some problems in real-world applications, they still face two major problems.

One problem is the missing instances in multi-view data~\cite{liu2018ls}. Most multi-view clustering methods require that each view has no missing instances. But real-world multi-view data always contain missing instances, which leads to the incomplete multi-view clustering problem. For example, in the camera network, some cameras may temporarily fail due to a power outage, which will result in missing instances.
%blood tests and images scanned by the magnetic resonance can be regarded as two views of each person in disease diagnosis, and it often happens that some individuals would like to only perform one test.
As such, this incompleteness may lead to the lack of columns or rows in the view matrix, which will result in the degradation or failure of previous methods.
%In most applications, the characteristics of incomplete multi-view data are as follows: (i) the common instances presented in all views can be used to extract the consistent information from different views and to integrate these views; (ii) these instances that only exist in partial views can help learn the unique information of the corresponding views.

The other problem is the noises in multi-view data. For instance, real-world images often contain some noises~\cite{chang2000wavelet,zhu2020structured,el2020blind}. For instance, many landscape images often contain fog and rain, which are common noises in image processing.
If we directly cluster these landscape images, these noises may cause deviations in the calculation, which will damage the performance.

Up to now, there is no effective method to solve these two problems at the same time.
To tackle the first problem, several incomplete multi-view clustering methods have been proposed~\cite{shao2015multiple,hu2018doubly,wen2019unified}.
But they ignore the influence of noises (i.e., the second problem), which are ubiquitous in the real-world clustering tasks~\cite{nie2018auto,gu2017empirical}. Specifically, these methods directly conduct the procedure by constructing a basis matrix for each view and a common latent subspace for all views that are rarely modified. But real-world datasets always contain noises that result in unreliable basis matrices and unavailable latent subspace. These noises will affect the clustering calculation, which may lead to inaccurate clustering results.
For the second problem, a simple solution is to assign a proper weight to each view~\cite{nie2017self,zhu2020self,huang2020auto,shi2020auto}. However, previous incomplete multi-view clustering methods are difficult to effectively weight different views due to the following challenges~\cite{li2014partial,zhao2016incomplete,shao2015multiple,hu2018doubly,wen2019unified}:
\begin{itemize}
  \item The combined effects of noises and incompleteness make these incomplete multi-view clustering methods obtain unsatisfactory clustering results. When we cluster these multi-view datasets, both missing instances and noises will influence the weights of these views. Previous methods are difficult to consider the effects of missing instances and noises simultaneously. Besides, when the missing rate and the noisy rate changes, these methods are difficult to balance this effect.
  \item As the missing rate increases, the availability of each view also changes. If we weigh each view based on parameters, the selection of parameters for each case will have a high cost of time and it will be difficult for us to assign a proper weight to each view. To our best knowledge, it is still an open problem to select the optimal values for these parameters in different clustering tasks, which limits the application of parameter-weighted incomplete multi-view clustering methods.
  \item Defined in \cite{smola2001regularized}, the generalization ability of clustering methods is valued by many works \cite{tao2005new,deng2021tri,carbonnelle2020intraclass}. Strong generalization ability often means that a method can keep satisfactory performance in different datasets.
      To effectively cluster different data sets, we often need different objective functions. A clustering method with strong generalization ability often designs different objective functions rather than a single objective function. Thus, a series of effective objective functions should be designed to improve the generalization ability. The generalization ability of previous methods is limited because these methods are based on a single objective function. The single objective function often performs well only in some cases.
\end{itemize}
Therefore, multi-view clustering still faces significant challenges.
%An effective way to solve the problem is to distinguish the availability of different views to reduce the impact of noisy views.
In this paper, we propose a novel Auto-weighted Noisy and Incomplete Multi-view Clustering (ANIMC) approach to meet these challenges. ANIMC is a joint of a \emph{soft} auto-weighted strategy and a doubly \emph{soft} regularized regression model\footnote{A soft model is a series of functions with similar structures.}.
First, by designing adaptive semi-regularized nonnegative matrix factorization (adaptive semi-RNMF), we propose a soft auto-weighted strategy to assign a \emph{proper} weight to each view.
%Also, a soft bound is added to the strategy.
%In the strategy, a soft bound is adopted to balance the influence of noises and incompleteness.
Second, by devising $\theta$-norm, we perform doubly soft regularized regression to align the same instances in different views.
%Moreover, different $\theta$ can be chosen to adjust the sparsity of our model.
%First, ANIMC performs adaptive semi regularized nonnegative matrix factorization (adaptive semi-RNMF) to construct four matrices (common feature matrix, consensus feature matrix, basis matrix, and consensus basis matrix), and with the help of Lagrangian function, both the optimal common feature matrix and each view's proper weight are learned automatically and updated adaptively.
%Further, ANIMC adopts a doubly regularized regression based on $\theta$-norm to align the same instances in different views.
%(i) $L_F$-Norm regularized regression based on the common feature matrix and the consensus feature matrix for all the views, and (ii) $L_F$-Norm regularized regression based on the basis matrix and the consensus basis matrix for each view.
The main contributions of our proposed ANIMC approach are summarized as follows:
\begin{itemize}
  \item To the best of our knowledge, ANIMC is the \emph{first} auto-weighted soft approach for noisy and incomplete multi-view clustering. The soft approach can keep high-level clustering performance on different datasets, which verifies its strong generalization ability.
  \item By proposing adaptive semi-RNMF, ANIMC adaptively assigns a proper weight to each view, which diminishes the effect of noises. Also, ANIMC adds a soft boundary to the soft auto-weighted strategy, which balances the influence of noises and missing instances.
%  To diminish the effect of noises, it learns an optimal common latent feature matrix and the proper weights simultaneously and updates them collaboratively via adaptive semi-RNMF.
  \item By performing a doubly soft regularized regression model based on $\theta$-norm, ANIMC aligns the same instances in all views, which decreases the impact of missing instances. Besides, different $\theta$ can be chosen to adjust the sparsity of the model.
  \item Experimental results on six real-world datasets show that ANIMC significantly outperforms state-of-the-art methods. Moreover, by making the falling direction of the objective function closer to the gradient direction, ANIMC tremendously accelerates the convergence speed by a four-step alternating iteration procedure.
%      Compared with existing incomplete multi-view clustering methods, it tremendously accelerates the convergence speed by a four-step alternating iteration procedure. The reason is that ANIMC can make the falling direction of the objective function closer to the gradient direction.
\end{itemize}

The rest of the paper is organized as follows. Related work is covered in Section~\ref{section:related}.
%By combining RMF and semi-NMF,
The background is described in Section~\ref{section:back}. The proposed ANIMC approach is designed in Section~\ref{section:meth}.
Performance evaluations are reported in Section~\ref{section:exp}. We conclude the paper in Section~\ref{section:con}.

\section{Related Work}  \label{section:related}
%Since the most relevant work for our research is low-rank matrix factorization (MF), we will review these algorithms in this section. MF is a common technique to learn the geometry of data for clustering. Up to now, many matrix factorization methods have been proposed, such as regularized matrix factorization (RMF) and semi-nonnegative matrix factorization (semi-NMF), etc. Among them, RMF and semi-NMF are two simple and effective methods, which makes them widely used in the field of matrix completion.
Since the related work for our approach is incomplete multi-view clustering, we will review some incomplete multi-view clustering algorithms in this section.
Up to now, several incomplete multi-view clustering algorithms have been proposed~\cite{li2014partial,zhao2016incomplete,shao2015multiple,hu2018doubly,wen2019unified}. In this section, we describe some algorithms that are most relevant to our approach. Besides ignoring the influence of noises (see Section~\ref{section:intro}), each of these algorithms has distinct drawbacks. These drawbacks often lead to unsatisfactory clustering results, which limits their real-world applications.
%These algorithms can be classified into two categories: incomplete two-view clustering and incomplete multi-view clustering with more than two views.

%(i) Incomplete two-view clustering aims to cluster incomplete data with two views and the related algorithms are PVC and IMG.
%As the first work for incomplete multi-view clustering, \cite{li2014partial} proposes PVC to learn a common latent space from two incomplete views for clustering. Based on the common latent space, PVC can cluster incomplete two-view datasets.
%%via NMF~\cite{lee1999learning,cai2008sparse} and $L_{1}$-norm regularization.
%%But PVC simply projects instances from each view into a common subspace and overlooks the global information among the two views.
%By extending PVC, \cite{zhao2016incomplete} proposes IMG to remove the nonnegative constraint for better clustering results.
%%To improve the clustering performance, IMG~\cite{zhao2016incomplete} extends PVC and removes the nonnegative constraint to simplify optimization.
%However, PVC and IMG can only integrate two incomplete views, which limits their application in incomplete multi-view data.

%But both PVC and IMG can only solve the problem of two-view incomplete multi-view clustering, which makes these two algorithms limited when solving incomplete multi-view clustering problems.
%(ii)
%Incomplete multi-view clustering aims to cluster incomplete data with multiple views and the most relevant algorithms are MIC, DAIMC and UEAF.
To integrate more than two incomplete views, \cite{shao2015multiple} proposes MIC by filling the missing instances in each view with average feature values. After this filling, MIC can learn a common latent subspace via weighted NMF and $L_{2,1}$-norm regularization from multiple incomplete views.
%To solve the problem of incomplete clustering based on more than two views, MIC~\cite{shao2015multiple} first fills the missing instances in each incomplete view with average feature values, then learns a common latent subspace based on weighted NMF and $L_{2,1}$-norm regularization.
However, if we cluster multi-view datasets with relatively large missing rates, this simply filling will result in unsatisfactory results.
To align the instance information, \cite{hu2018doubly} proposes DAIMC by extending MIC via weighted semi-NMF~\cite{ding2010convex} and $L_{2,1}$-norm regularized regression. To obtain the robust clustering results, \cite{wen2019unified} proposes UEAF by the unified common embedding aligned with incomplete views inferring approach.
But DAIMC and UEAF rely too much on alignment information, which limits their application in the datasets without enough alignment information. To simultaneously clustering and imputing the incomplete base clustering matrices, \cite{liu2019efficient} proposes EE-IMVC by solving the resultant optimization problem. To obtain better clustering performance, \cite{liu2020efficient} proposes EE-R-IMVC by extending EE-IMVC. Both EE-IMVC and EE-R-IMVC need to learn a consensus clustering matrix from an incomplete multi-view dataset. When the dataset contains some noises, EE-IMVC and EE-R-IMVC are difficult to learn the correct consensus clustering matrix, which will limit their application.
To learn the consistent information between different views and the unique information of each view, V$^3$H decomposes each subspace into a variation matrix for the corresponding view and a heredity matrix for all the views to represent the unique information and the consistent information respectively. As the first effective method to cluster multiple views with different incompleteness, UIMC is proposed to integrate these unbalanced incomplete views for clustering.

%. When clustering the dataset without enough alignment information, DAIMC and UEAF always obtain unsatisfactory performance because the loss of alignment information will reduce the availability of their models.
%Therefore, the previous method is difficult to obtain good clustering performance.
%assign a proper weight to each view adaptively~\cite{nie2018auto}.
%
%In fact, some incomplete multi-view clustering methods (e.g., MIC and DAIMC) have already adopted weight assignment to views, but \emph{they assign large weights to noisy incomplete views} which causes a large deviation of results. It is because these methods define each view's weight as an indicator matrix that assigns the missing instances zero weights and the presented instances one weights in each view, making the noisy views have large weights in the calculation. Obviously, both MIC and DAIMC assign a proper weight to each view.
%\subsection{Incomplete Multi-view Clustering}
%\label{subsection:incomplete}

%Therefore, our main idea is to design \emph{an adaptive method that assigns proper weights for noisy and incomplete views automatically}, such that the incomplete views with noises have small weights, thereby reducing the impact of noises on incomplete multi-view clustering.

\section{Background}  \label{section:back}
For convenience, we first define some notations throughout the paper. Then, we introduce two matrix factorization methods for single-view clustering: Regularized Matrix Factorization (RMF, in Section~\ref{subsubsection:rmf}) and semi-nonnegative Matrix Factorization (semi-NMF, in Section~\ref{subsubsection:snmf}). By extending RMF and semi-NMF, we design semi-RNMF (in Section~\ref{subsection:semirnmf}) and leverage semi-RNMF for incomplete multi-view clustering (in Section~\ref{subsection:self}).

%We first define some notations throughout the paper.
\noindent\textbf{Notation}: For a multi-view dataset $\{\bm{X}^{(v)}\}_{v=1}^m\in \mathbb{R}^{d_{v} \times n}$ with $n$ instances and $m$ views, we can cluster these instances into $c$ clusters, where $d_{v}$ is the feature dimension of the $v$-th view; $[n]\overset{\text{def}}{=}\{1,2,\ldots,n\}$; $\bm{U}^{(v)}\in \mathbb{R}^{d_{v}\times c}$ is the basis matrix of the $v$-th view, $\bm{V}\in \mathbb{R}^{n\times k}$ is the common latent feature matrix of all the views; for any matrix $\bm{B}$, $\bm{B}_{i,j}$ is the element in its $i$-th row and $j$-th column, and $\bm{B}_{i,:}$ is its $i$-th row; $\bm{I}$ denotes the identity matrix; $\bm{g}^{(v)}\in \mathbb{R}^{n\times 1}$ is an $n$-dimensional vector to indicate the incompleteness of the $v$-th view, and we diagonalize $g^{(v)}$ to the diagonal matrix $\bm{T}^{(v)}\in \mathbb{R}^{n\times n}$; $||\cdot||_F$ is F-norm; $||\cdot||_{2,1}$ is $L_{2,1}$-norm; $||\cdot||_{\theta}$ is $\theta$-norm; for any number $q$, $q^+$ is its right limit; $\alpha$ and $\beta$ are the nonnegative parameters; $\theta$ and $r$ are the parameters adjustable as needed.
\begin{figure*}[t]
\centering
\includegraphics[width=1\textwidth]{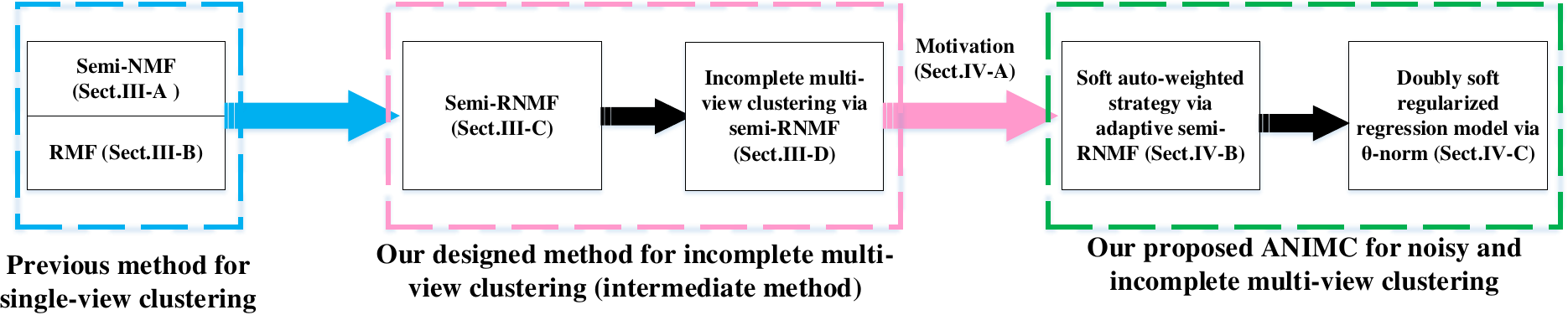}
\caption{The structure of the paper. ``Sect.'' denotes ``Section''.}
\label{fig:framework}
\end{figure*}
%$\bm{1}_n$ is an $n$-dimensional column vector and each element of the vector is 1.
\subsection{Regularized Matrix Factorization}
\label{subsubsection:rmf}
%\subsubsection{Regularized Matrix Factorization}
%\label{subsubsection:rmf}
As a popular latent feature learning method,  regularized matrix factorization (RMF) \cite{li2009relation,gunasekar2017implicit,qi2018unsupervised} outperforms other latent feature learning methods based on KNN or co-clustering. For a data matrix $\bm{X}\in \mathbb{R}^{d\times n}$, RMF approximates $\bm{X}$ with a matrix $\bm{V} \in \mathbb{R}^{n\times c}$ and a matrix $\bm{U} \in \mathbb{R}^{d\times c}$. Therefore, the objective function is as follows:
\begin{align}\label{danrmf}
\min\limits_{\bm{U},\bm{V}} ||\bm{X}-\bm{U}\bm{V}^T||_F^2+\alpha(||\bm{U}||_F^2+||\bm{V}||_F^2),
\end{align}
where $\alpha$ is a nonnegative parameter. For ease of description, we name $\bm{U}$ as the \emph{basis matrix} and $\bm{V}$ as the \emph{latent feature matrix}.

Note that Eq.~\eqref{danrmf} is a biconvex problem. Therefore, it is unrealistic to expect an algorithm to find the global optimal solution. Similar to \cite{li2009relation,gunasekar2017implicit}, we can update $\bm{U}$ and $\bm{V}$ by
%easily get the updating schemes to find the locally optimal solution for this problem as follows:

\noindent\textbf{(i)} Update $\bm{U}$ (while fixing $\bm{V}$) by $\bm{U}=\bm{X}\bm{V}(\alpha\bm{I}+\bm{V}^T\bm{V})^{-1}$.
%using the scheme
%\begin{align}\label{gengurmf}
%\bm{U}=\bm{X}\bm{V}(\alpha\bm{I}+\bm{V}^T\bm{V})^{-1}.
%\end{align}

\noindent\textbf{(ii)} Update $\bm{V}$ (while fixing $\bm{U}$) by $\bm{V}=\bm{X}^T\bm{U}(\alpha\bm{I}+\bm{U}^T\bm{U})^{-1}$.
%\begin{align}\label{gengvrmf}
%\bm{V}=\bm{X}^T\bm{U}(\alpha\bm{I}+\bm{U}^T\bm{U})^{-1}.
%\end{align}
%Semi-NMF based method gives K-means clustering interpretations when $VV^T=I$.
%Regularized matrix factorization is known to be one of the most successful methods for rating prediction outperforming other methods like pearson-correlation based kNN or co-clustering Regularized Matrix Factorization (RMF) is
\subsection{Semi-nonnegative Matrix Factorization}
\label{subsubsection:snmf}
%\subsubsection{Semi-nonnegative Matrix Factorization}
%\label{subsubsection:snmf}
Since semi-nonnegative matrix factorization (semi-NMF)~\cite{ding2010convex} can extract the latent feature information from the data, it has been widely used in single view clustering. For a data matrix $\bm{X}\in \mathbb{R}^{d\times n}$, semi-NMF approximates $\bm{X}$ with a nonnegative latent feature matrix $\bm{V} \in \mathbb{R}^{n\times c}$ and a basis matrix $\bm{U}\in\mathbb{R}^{d\times c}$. Since $\bm{U}$ can be negative, semi-NMF can handle negative input, which extends NMF. The framework of semi-NMF is %as follows:
\begin{align}\label{danseminmf}
&\min\limits_{\bm{U},\bm{V}} ||\bm{X}-\bm{U}\bm{V}^T||_F^2  \nonumber\\
&\mbox{s.t. } \bm{V}_{i,j} \geq 0, i\in[n], j\in[k].
\end{align}
Similar to RMF, the objective function of Eq.~\eqref{danseminmf} is biconvex. \cite{ding2010convex} proposes an iterative updating algorithm to find the locally optimal solution as follows:

\noindent\textbf{(i)} Update $\bm{U}$ (while fixing $\bm{V}$) by $\bm{U}=\bm{X}\bm{V}(\bm{V}^T\bm{V})^{-1}$.
%using the scheme
%\begin{align}\label{gengurmf}
%\bm{U}=\bm{X}\bm{V}(\bm{V}^T\bm{V})^{-1}.
%\end{align}

\noindent\textbf{(ii)} Update $\bm{V}$ (while fixing $\bm{U}$) by
\begin{align}\label{gengvrmf}
\bm{V}_{i,j}=\bm{V}_{i,j}\cdot\sqrt{\frac{(\bm{X}^T\bm{U})_{i,j}^{+}+[\bm{V}(\bm{U}^T\bm{U})]_{i,j}^{-}}{(\bm{X}^T\bm{U})_{i,j}^{-}+[\bm{V}(\bm{U}^T\bm{U})]_{i,j}^{+}}},
\end{align}
where $\bm{B}_{i,j}^{+}=(|\bm{B}_{i,j}|+\bm{B}_{i,j})/2$, $\bm{B}_{i,j}^{-}=(|\bm{B}_{i,j}|-\bm{B}_{i,j})/2$,
%we separate the positive and negative parts of a matrix $\bm{A}$ as:
%\begin{align}\label{A}
%\bm{B}_{i,j}^{+}&=(|\bm{B}_{i,j}|+\bm{B}_{i,j})/2,\bm{B}_{i,j}^{-}&=(|\bm{B}_{i,j}|-\bm{B}_{i,j})/2,
%\end{align}
where $\bm{B}^{+}$ is the positive part of a matrix $\bm{B}$ and $\bm{B}^{-}$ is the negative part of a matrix $\bm{B}$. Note that for any $\bm{B}_{i,j}$, we have $\bm{B}_{i,j}^{+}>0$ and $\bm{B}_{i,j}^{-}$. Thus, the separate can ensure the square root in Eq.~\eqref{gengvrmf} is meaningful. In addition, $\bm{B}_{i,j}^{+}+\bm{B}_{i,j}^{-}=|\bm{B}_{i,j}|$ and $\bm{B}_{i,j}^{+}-\bm{B}_{i,j}^{-}=\bm{B}_{i,j}$.
\subsection{Extending to Semi-RNMF}
\label{subsection:semirnmf}
In real-word applications, we often predict clusters by RMF and extract the latent feature information by semi-NMF. To improve the clustering performance, a natural idea is to combine RMF and semi-NMF, so we propose the semi-RNMF framework as follows:
\begin{align}\label{dansrmf}
&\min\limits_{\bm{U},\bm{V}} ||\bm{X}-\bm{U}\bm{V}^T||_F^2+\alpha(||\bm{U}||_F^2+||\bm{V}||_F^2)  \nonumber\\
&\mbox{s.t. } \bm{V}_{i,j} \geq 0, i\in[n], j\in[k].
\end{align}
We can update $\bm{U}$ (while fixing $\bm{V}$) by $\bm{U}=\bm{X}\bm{V}(\alpha\bm{I}+\bm{V}^T\bm{V})^{-1}$.
%\begin{align}\label{gengusrmf}
%\bm{U}=\bm{X}\bm{V}(\alpha\bm{I}+\bm{V}^T\bm{V})^{-1}.
%\end{align}

Based on the Karush-Kuhn-Tucker (KKT) complementarity condition for the nonnegativity of $\bm{V}$, we can update $\bm{V}$ (while fixing $\bm{U}$) by
\begin{align}\label{gengvsrmf}
\bm{V}_{i,j}=\bm{V}_{i,j}\cdot\sqrt{\frac{(\bm{X}^T\bm{U})_{i,j}^{+}+[\bm{V}(\alpha\bm{I}+\bm{U}^T\bm{U})]_{i,j}^{-}}{(\bm{X}^T\bm{U})_{i,j}^{-}+[\bm{V}(\alpha\bm{I}+\bm{U}^T\bm{U})]_{i,j}^{+}}}.
\end{align}
\subsection{Extending to Incomplete Multi-view Clustering}
\label{subsection:self}
To solve the incomplete multi-view clustering problem, we need to extend Eq.~\eqref{dansrmf}.
For an incomplete multi-view dataset $\{\bm{X}^{(v)}\}_{v=1}^{m}$, we assume that different views have distinct basis matrices $\{\bm{U}^{(v)}\}_{v=1}^{m}$ and a common latent feature matrix $\bm{V}$.

First, we define an $n$-dimensional column vector $\bm{g}^{(v)}$ to indicate the incompleteness:
%. If the $i$-th instance is in the $v$-th view, $\bm{g}_{i}^{(v)}=1$; otherwise $\bm{g}_{i}^{(v)}=0$.
\begin{align}\label{g}
  \bm{g}_{i}^{(v)} \!=\! \left\{ \begin{array}{ll}
                     1, & \textrm{if the }i\textrm{-th instance is in the }v\!\textrm{-th view}; \\
                     0, & \textrm{otherwise.}
                   \end{array}
  \right.
\end{align}
To facilitate matrix operations, we extend $\bm{g}^{(v)}$ into an incomplete indicator matrix $\bm{G}^{(v)}\in\mathbb{R}^{d_v\times n}$:
% and extend Eq.~\eqref{danseminmf} to the incomplete multi-view clustering as follows: we define a diagonal matrix $\bm{G}^{(v)}\in\mathbb{R}^{n\times n}$ to indicate incompleteness of the $v$-th view:
\begin{align}\label{G}
  \bm{G}_{:,i}^{(v)} \!=\! \left\{ \begin{array}{ll}
                     1, & \textrm{if the }i\textrm{-th instance is in the }v\!\textrm{-th view}; \\
                     0, & \textrm{otherwise,}
                   \end{array}
  \right.
\end{align}
where $\bm{G}_{:,i}^{(v)}=1$ denotes that the elements in the $i$-th column of matrix $\bm{G}^{(v)}$ are all 1. Note that when the $v$-th view contains all the instances, the matrix $\bm{G}^{(v)}$ is an all-one matrix. If the $v$-th view miss some instances, the view matrix $\bm{X}^{(v)}$ will miss the corresponding columns, and the corresponding columns in $\bm{G}^{(v)}$ will become 0, i.e., $\sum_{i=1}^n\bm{G}_{t,i}^{(v)}<n$ (where $t\in [d_v]$).

%For a complete multi-view data, we define a complete indicator matrix $\{\bm{C}^{(v)}\}_{v=1}^{m}\in\mathbb{R}^{d_v\times n}$. In addition, $\bm{C}^{(v)}$ is an all-1 matrix. Note that
Second, we extend Eq.~\eqref{dansrmf} to
%We extend Eq.~\eqref{dansrmf} to the following incomplete multi-view clustering framework:
\begin{align}\label{incompseminmf}
\min\limits_{\bm{U}^{(v)},\bm{V}}\sum_v&(||\bm{G}^{(v)}\odot(\bm{X}^{(v)}-\bm{U}^{(v)}\bm{V}^T)||_F^2\nonumber\\
&+\alpha(||\bm{U}^{(v)}||_F^2+||\bm{V}||_F^2))  \nonumber\\
\mbox{s.t. }\bm{V}_{i,j} \geq& 0, i\in[n], j\in[k],
\end{align}
where $\bm{U}^{(v)} \in \mathbb{R}^{d_{v}\times c}$, $\bm{V}\in\mathbb{R}^{n\times c}$, and $\odot$ is the operation that multiplies two matrices by multiplying corresponding elements.
%Particularly, $\bm{V}$ often serves as the cluster index matrix when clustering.
But this extension is not satisfactory, we will propose a better method in the next section.

\section{Proposed ANIMC Approach}   \label{section:meth}
%To solve the auto-weighted problem of incomplete multi-views clustering, we present our Auto-weighted Incomplete Multi-view Clustering(SwIMC) in this section.
%\subsection{Auto-weighted Incomplete Multi-View Clustering}
%Given an incomplete multi-view dataset, let $X^{(1)},X^{(2)},\ldots,X^{(n)}$ be the data matrix for each view. The data matrix $X_v\in \mathbb{R}^d\times n$ for the $v$th view. The objective function using the Incomplete Multi-View Clustering via Semi-NMF method is:
%\begin{align}\label{imvcsNMF}
%\min ||(X^{(v)}-U^{(v)}V^{(v)^T})G^{(v)}||_F^2 \nonumber\\
%s.t. V^{(v)}\geqslant 0
%\end{align}
%where $||\cdot||_F$ is the Frobenius norm, $U^{(v)}\in \mathbb{R}^{d_v\times K}$ ,$V^{(v)}\in \mathbb{R}^{n\times K}$,and $K$ is the dimension of subspace.
%
%Eq.~\eqref{imvcsNMF} only considers the diversity of different views, and does not consider the consistency of between views. To solve this problem, different views are assumed to share the same feature space($i.e. V^{(1)}=V^{(2)}=\cdots=V^{(v)}=V$). To ensure the local consistency of the feature space, letting $V^{\ast}$ be the consensus latent feature matrix, we need to align $V$ and $V^{\ast}$ when constructing the objective function. Then Eq.~\eqref{imvcsNMF} is rewritten as follows:
%\begin{align}\label{rewrnmf}
%\min ||(X^{(v)}-U^{(v)}V^{(v)^T})G^{(v)}||_F^2+\mu||(V^{\ast}-V)||_F^2 \nonumber\\
%\mbox{s.t.} V \geqslant 0
%\end{align}
%where $\mu$ is the hyper-parameter.
%%ÎÒÃÇ¼ÌÐøÑØÓÃÕâ¸öËã·¨£¬½«Æä×÷ÎªÎÒÃÇµÄ»ù´¡²¿·Ö£¬±í´ïÐÎÊ½Îª£º
By showing the drawback of the direct extension to incomplete multi-view clustering, we first present the motivation of our proposed Auto-weighted Noisy and Incomplete Multi-view Clustering (ANIMC) approach. Then we model ANIMC as the joint of a soft auto-weighted strategy and a doubly soft regularized regression model.
%We first show the drawback of simple extending to incomplete multi-view clustering, which is the motivation of our.
%propose an adaptive semi-RNMF to automatically assign an optimal weight to each view, and then we design doubly regularized regressions which designs two $L_F$-Norm regularized regressions to improve the clustering performance.
\subsection{Motivation} \label{subsection:motivation}
Note that Eq.~\eqref{incompseminmf} is the root function to integrate different incomplete views. For different tasks, we may need different functions with stronger generalization ability than Eq.~\eqref{incompseminmf}.
%for stronger generalization ability than Eq.~\eqref{incompseminmf}.
A feasible idea is extending Eq.~\eqref{incompseminmf} to a series of exponential functions. Thus, we can rewrite Eq.~\eqref{incompseminmf} as
\begin{align}\label{incompseminmfduo}
\min\limits_{\bm{U}^{(v)},\bm{V}}\sum_v&(||\bm{G}^{(v)}\odot(\bm{X}^{(v)}-\bm{U}^{(v)}\bm{V}^T)||_F^r \nonumber\\
&+\alpha(||\bm{U}^{(v)}||_F^2+||\bm{V}||_F^2))  \nonumber\\
\mbox{s.t. } \bm{V}_{i,j} \geq & 0, i\in[n], j\in[k],
\end{align}
where $0<r\leq 2$. As $r$ changes, we can obtain different exponential functions. By choosing different exponential functions, Eq.~\eqref{incompseminmfduo} can integrate all views on different tasks.
%Since each view shares the same latent feature matrix, the goal of assigning each instance to the most suitable cluster in each view and ensuring clustering consistency across views is achieved.
%Note that there is no weight factor explicitly defined in Eq.~\eqref{incompseminmfduo}.

However, Eq.~\eqref{incompseminmfduo} cannot distinguish the availability of different views because Eq.~\eqref{incompseminmfduo} does not define the weight factor (i.e., importance) for each view. Intuitively, Eq.~\eqref{incompseminmfduo} only simply fills the missing instances into each view, which cannot effectively leverage the consistent information between views. If we use $\bm{E}^{(v)}=\bm{X}^{(v)}-\bm{U}^{(v)}\bm{V}^T$ to represent the noises, and $(\bm{G}^{(v)}\odot\bm{E}^{(v)})$ can represent the combination of noises and incompleteness. For a noisy and incomplete multi-view dataset, we assume that a view with more noises will have a larger $\bm{E}^{(v)}$. Eq.~\eqref{incompseminmfduo} is difficult to deal with noises effectively because these noisy views have a greater impact on the objective function.
What is worse, Eq.~\eqref{incompseminmfduo} pushes the noisy view with the lowest clustering availability hardest, which
hurts the clustering performance.
%causes the degradation of performance.
Therefore, we need a soft auto-weighted model to adaptively weight each view.
%an adaptive semi-RNMF model that can adaptively adjust the weights according to the clustering capacity of different views is desirable. Doubly regularized regression is used to bridge all the inter-view instances.
%pushes the view which has lowest clustering capacity towards zero harder than the other views, which degrades the performance drastically.
\subsection{Soft Auto-weighted Strategy Via Adaptive Semi-RNMF} \label{subsection:adaptive}
Different views have distinct availability, but the common latent feature matrix $\bm{V}$ does not directly contain the information about the availability of each view. The information can distinguish different views and will also affect clustering results. Therefore, a clever way is to obtain the optimal $\bm{V}$ and distinguish the availability of different views simultaneously through one iteration. Although Eq.~\eqref{incompseminmfduo} can integrate multiple views, it is difficult to distinguish the availability of different views after obtaining $\bm{V}$. It is because Eq.~\eqref{incompseminmfduo} does not assign weights to these views.
%In addition, Eq.~\eqref{incompseminmfduo} does not treat different instances equally because each row of $\bm{V}$ sums up to different values.

Therefore, we propose a novel model named adaptive semi-RNMF to assign a proper weight to each view and learn optimal $\bm{V}$, simultaneously. It relies on the following two intuitive assumptions for a noisy and incomplete multi-view dataset $\{\bm{X}^{(v)}\}_{v=1}^{m}$: (i) $\bm{X}^{(v)}$ is the perturbation of $\bm{U}^{(v)}\bm{V}^T$ due to noises; and (ii) incomplete views with more noises should have smaller weights.
%To treat each instance equally, we add a constraint $\bm{V}_i\bm{1}_n=1$ to ensure that each row of $\bm{V}_i$ sums up to 1.
To learn optimal latent feature matrix $\bm{V}$, we can design the Lagrangian function as follows:
\begin{align}\label{lap}
\sum_v||\bm{G}^{(v)}\odot(\bm{X}^{(v)}-\bm{U}^{(v)}\bm{V}^T)||_F^r+\alpha||\bm{V}||_F^2+\zeta(\Lambda,\bm{V}),
\end{align}
where $\Lambda$ is the Lagrange multiplier and $\zeta(\Lambda,\bm{V})$ is a proxy for the constraints. Setting the derivative of Eq.~\eqref{lap} w.r.t. $\bm{V}$ to zero, we can obtain
\begin{align}\label{jiaw}
\sum_v&w_{v} \frac{\partial ||\bm{G}^{(v)}\odot(\bm{X}^{(v)}-\bm{U}^{(v)}\bm{V}^T)||_F^2}{\partial\bm{V}}+\frac{\partial\alpha||\bm{V}||_F^2}{\partial\bm{V}}\nonumber\\
&+\frac{\partial\zeta(\Lambda,\bm{V})}{\partial \bm{V}}=\bm{0},
\end{align}
where
\begin{align}\label{wgengbian}
w_{v}=0.5r||\bm{G}^{(v)}\odot(\bm{X}^{(v)}-\bm{U}^{(v)}\bm{V}^T)||_F^{0.5r-1}.
\end{align}
In Eq.~\eqref{wgengbian}, $w_{v}$ depends on the target variable $\bm{V}$, so we cannot directly obtain $w_{v}$. To solve Eq.~\eqref{wgengbian}, we first set $w_{v}$ fixed and update $w_{v}$ after obtaining $\bm{U}^{(v)}$ and $\bm{V}$.
If we fix $w_{v}$, Eq.~\eqref{jiaw} can be viewed as the solution to the following problem:
\begin{align}\label{wgengxin}
\min\limits_{w_{v},\bm{U}^{(v)},\bm{V}}\sum_v&(w_{v}||\bm{G}^{(v)}\odot(\bm{X}^{(v)}-\bm{U}^{(v)}\bm{V}^T)||_F^2\nonumber\\
&+\alpha(||\bm{U}^{(v)}||_F^2+||\bm{V}||_F^2))\\
w_{v}=&0.5r||\bm{G}^{(v)}\odot(\bm{X}^{(v)}-\bm{U}^{(v)}\bm{V}^T)||_F^{0.5r-1}.\nonumber
\end{align}
But there is a case that fails Eq.~\eqref{wgengxin}. As the missing rate grows, the zero elements of $\bm{G}^{(v)}$ in Eq.~\eqref{wgengxin} will also increase, and the view weight will enlarge. However, too-large weights will focus too much on the incompleteness and ignore the noises, which is difficult to reflect the impact of noises on clustering.
%If the $v$-th view has a large disagreement on the feature dimension of $\bm{U}^{(v)}$ with most of the other views, it often has a large weight $w_v$. In fact, it does not desire a large weight $w_v$. Moreover, a too large weight will cause us to focus too much on this view and ignore other views.
Therefore, we need to add a soft boundary to $w_v$:
\begin{align}\label{wzuixin}
w_{v}=\min(&0.5r||\bm{G}^{(v)}\odot(\bm{X}^{(v)}-\bm{U}^{(v)}\bm{V}^T)||_F^{0.5r-1},\nonumber\\
&||\bm{X}^{(v)}-\bm{U}^{(v)}\bm{V}^T||_F^{-0.5}),
\end{align}
where $0.5||\bm{X}^{(v)}-\bm{U}^{(v)}\bm{V}^T||_F^{-0.5}$ is the boundary value of Eq.~\eqref{wgengbian}, which is a special case that the dataset is a complete multi-view dataset with $r=1$. Based on function $\min(\cdot)$, $||\bm{X}^{(v)}-\bm{U}^{(v)}\bm{V}^T||_F^{-0.5}$ can reflect the effect of noises on clustering when the missing rate is relatively large.
%is the bound value of Eq.~\eqref{wgengxin}. The bound value is a special case that the dataset is complete multi-view dataset with $r=1$. The bound value can ensure that we
%$\frac{1}{2 d_v||\bm{X}^{(v)}-\bm{U}^{(v)}\bm{V}^T||_F}$ is used to learn the availability of each view, rather than just use feature dimensions to weight the view.
Therefore, the final model of adaptive semi-RNMF is
\begin{align}\label{w}
\min\limits_{w_{v},\bm{U}^{(v)},\bm{V}}\sum_v&(w_{v}||\bm{G}^{(v)}\odot(\bm{X}^{(v)}-\bm{U}^{(v)}\bm{V}^T)||_F^2\nonumber\\
&+\alpha(||\bm{U}^{(v)}||_F^2+||\bm{V}||_F^2)) \nonumber\\
w_{v}=\min(&0.5r||\bm{G}^{(v)}\odot(\bm{X}^{(v)}-\bm{U}^{(v)}\bm{V}^T)||_F^{0.5r-1},\nonumber\\
&||\bm{X}^{(v)}-\bm{U}^{(v)}\bm{V}^T||_F^{-0.5})\\
\mbox{s.t. } \bm{V}_{ij}& \geq 0,i\in[n], j\in[k]. \nonumber
\end{align}
Note that Eq.~\eqref{w} is a soft model and we can learn different weight functions $\{w_v\}_{v=1}^m$ by changing $r$, which enhances its generalization ability.
After fixing $w_{v}$, the Lagrangian function Eq.~\eqref{lap} also applies to Eq.~\eqref{w}. After we obtain $\bm{V}$ from Eq.~\eqref{w}, we can update $w_{v}$, which inspires us to optimize Eq.~\eqref{incompseminmfduo} by an alternative method. After optimization, the updated $\bm{V}$ is at least locally optimal.
%Similarly, we can obtain the updated $w_{v}$, which is exactly the learned weight of the corresponding view. In addition,
For the $v$-th view, $\bm{G}^{(v)}$ indicates the missing rate and perturbation $(\bm{X}^{(v)}-\bm{U}^{(v)}\bm{V}^T)$ represents noises. Through the above two aspects, $w_{v}$ can effectively distinguish different views, which meets our intuitive assumptions about weight. Also, Eq.~\eqref{w} updates both the optimal $\bm{V}$ and $w_{v}$ simultaneously. Moreover, Eq.~\eqref{w} not only learns the optimal $\bm{V}$, but also distinguishes different views automatically. Besides, $w_v$ considers the combined effects of noises and incompleteness. Therefore, we regard $w_{v}$ as the weight of the $v$-th view.
\subsection{Doubly Soft Regularized Regression Model via $\theta$-norm}\label{subsection:doubly}
Note that $\alpha(||\bm{U}^{(v)}||_F^2+||\bm{V}||_F^2)$ (in Eq.~\eqref{w}) serves as the regular term of $\bm{U}^{(v)}$ and $\bm{V}$. As for $w_{v}||\bm{G}^{(v)}\odot(\bm{X}^{(v)}-\bm{U}^{(v)}\bm{V}^T)||_F^2$ (in Eq.~\eqref{w}), it is used to fill missing instances into the view matrix. In most cases, the filling strategy is difficult to achieve good results because it cannot effectively use the information of the presented instances to complete the view matrix.
%cannot decrease the impact of missing instances. As for $w_{v}||\bm{G}^{(v)}\odot(\bm{X}^{(v)}-\bm{U}^{(v)}\bm{V}^T)||_F^2$ (in Eq.~\eqref{w}), it is used to fill missing instances into the view matrix.
In the field of statistics, regularized regression is an efficient tool for matrix completion on real-world incomplete data. To decrease the impact of missing instances, we try to design a doubly regularized regression model to cluster noisy and incomplete multi-view data. Based on the model, we attempt to push the latent feature matrix $\bm{V}$ towards the consensus feature matrix (denoted by $\bm{V}^{\ast}$) and align basis matrices $\{\bm{U}^{(v)}\}_{v=1}^{m}$ to the consensus basis matrix (denoted by $\bm{U}^{\ast}$).

First, we design the F-norm regularized regression to push $V$ closer to $V^{\ast}$. To reduce the disagreement between $V$ and $V^{\ast}$, we leverage the F-norm, which corresponds to Euclidean distance. Thus, we have
\begin{align}\label{regression1}
\min\limits_{\bm{V}}(||\bm{V}-\bm{V}^{\ast}||_F^2+\eta||\bm{V}||_F^2),
\end{align}
where $\eta$ is a nonnegative parameter.
% between sparsity and accuracy of regression for all the views.
%For all the views, they share the same $\bm{V}$ (i.e., $\bm{V}^{(1)}=\bm{V}^{(2)}=\ldots=\bm{V}^{(n_v)}$). Therefore, the index $v$ of the matrix $\bm{V}$  will not present in Eq.~\eqref{regression1}.

Second, to align the basis matrices $\bm{U}^{(v)}$ of the same instance in different views, we adopt the following regularized regression function based on the F-norm:
\begin{align}\label{align}
\min\limits_{\bm{U}^{(v)},\bm{A}^{(v)}}\sum_v(||\bm{A}^{(v)^T}\bm{U}^{(v)}-\bm{U}^{\ast}||_F^2+\beta ||\bm{A}^{(v)}||_F^2),
\end{align}
where $\bm{A}^{(v)}\in \mathbb{R}^{d_v\times c}$ is the regression coefficient matrix of the $v$-th view and $\beta$ is a nonnegative parameter.

For a particular incomplete dataset, its $\bm{V}^{\ast}\in \mathbb{R}^{n\times c}$ and $\bm{U}^{\ast} \in \mathbb{R}^{c\times c}$ are constant. In general, they are low-rank for all the views, and the cluster number $c$ plays an important role in clustering. For most tasks, we need not obtain specific $\bm{V}^{\ast}$ and $\bm{U}^{\ast}$, and we will get satisfactory clustering results by learning correct $c$.
Therefore, we can find simple matrices as substitutes for $\bm{V}^{\ast}$ and $\bm{U}^{\ast}$ to simplify Eq.~\eqref{regression1} and Eq.~\eqref{align}:

\noindent(i) Substitute for $\bm{V}^{\ast}$: since all the views share the same $\bm{V}$, $\bm{V}$ contains the consistent information of all views. The available consistent information is the guarantee of satisfactory clustering results. For different clustering tasks (e.g., different datasets, different missing rates, or different noise rates), we often have different $\bm{V}^{\ast}$. For matrix $\bm{V}$, we assume that due to the influence of noises we have $0<\bm{V}_{i,j}^{\ast}<\bm{V}_{i,j}$.

For example, for a noisy and incomplete multi-view dataset, under different cases (different noises or different incompleteness), we will get different $\bm{V}^{\ast}$. To ensure that we can obtain a suitable regularized regression model in different situations, we need to ensure that the regularized regression model is suitable for different cases.
To improve the generalization ability of Eq.~\eqref{regression1} (i.e., for different $\bm{V}^{\ast}$, we can obtain satisfactory clustering results), we should simplify Eq.~\eqref{regression1} reasonably.
%for real-world multi-view data, the matrix $\bm{V}^{\ast}$ contains a large number of zero elements and we can obtain $\bm{V}-\bm{V}^{\ast}\approx \bm{V}$. In addition, $\bm{V}$ and $\bm{V}^{\ast}$ have the same matrix dimension. Therefore, $\bm{V}$ can serve as the approximate substitutes for $\bm{V}-\bm{V}^{\ast}$, and we can set $||\bm{V}-\bm{V}^{\ast}||_F=||\bm{V}||_F$ to simplify Eq.~\eqref{regression1}.
%\begin{theorem}\label{vjianhuadingli}
For any matrix $\bm{V}^{\ast}$ ($0<\bm{V}^{\ast}_{i,j}<\bm{V}_{i,j}$), we have $\lim\limits_{\bm{V}^{\ast}_{i,j}\rightarrow 0^{+}}||\bm{V}-\bm{V}^{\ast}||_F=||\bm{V}||_F$. When $\bm{V}^{\ast}\rightarrow \bm{V}$, we have $(||\bm{V}-\bm{V}^{\ast}||_F^2+\eta||\bm{V}||_F^2)\rightarrow\eta||\bm{V}||_F^2$; when $\bm{V}^{\ast}\rightarrow \bm{0}$, we have $(||\bm{V}-\bm{V}^{\ast}||_F^2+\eta||\bm{V}||_F^2)\rightarrow(1+\eta)||\bm{V}||_F^2$.
Note that $\eta||\bm{V}||_F^2<(||\bm{V}-\bm{V}^{\ast}||_F^2+\eta||\bm{V}||_F^2)<(1+\eta)||\bm{V}||_F^2$. Thus, $(||\bm{V}-\bm{V}^{\ast}||_F^2+\eta||\bm{V}||_F^2)<(1+\eta)||\bm{V}||_F^2$ in all the cases. Obviously, $(1+\eta)||\bm{V}||_F^2$ is the upper limit of $(||\bm{V}-\bm{V}^{\ast}||_F^2+\eta||\bm{V}||_F^2)$.  When we minimize $(||\bm{V}-\bm{V}^{\ast}||_F^2+\eta||\bm{V}||_F^2)$, we only need to minimize its upper limit to ensure the availability of our method in different datasets with different $\bm{V}^{\ast}$, which illustrates the generalization ability of our algorithm.  Therefore, we can shrink Eq.~\eqref{regression1} to
%can be transformed into
\begin{align}\label{regression1jianhua}
\min\limits_{\bm{V}}(1+\eta)||\bm{V}||_F^2.
\end{align}
Since $\eta$ is nonnegative, we can transform Eq.~\eqref{regression1jianhua} into
\begin{align}\label{vjian}
\min\limits_{\bm{V}}||\bm{V}||_F^2.
\end{align}
%Eq.~\eqref{vjian}, which proves Theorem~\ref{vduiqidingli}.
%For ease of description, we set $\tau=||\bm{V}||_F$, where $\tau\geq 0$. Thus, we can transform solving Eq.~\eqref{vzhengming} into minimizing:
%\begin{align}\label{vzhengminggai}
%f(\tau)=\tau^2+\eta\tau.
%\end{align}
%Deriving $f(\tau)$ w.r.t. $\tau$, we can have
%\begin{align}\label{vzhengmingpian}
%\frac{\partial f(\tau)}{\partial\tau}=2\tau+\eta.
%\end{align}
%Since $\tau\geq 0$ and $\eta\geq 0$, $\frac{\partial f(\tau)}{\partial\tau}\geq 0$ in Eq.~\eqref{vzhengmingpian} and $f(\tau)$ is monotonically increasing. In addition, as $\tau$ increases, $2\tau$ and $f(\tau)$ will approach each other because $\eta$ is constant. Therefore, $\min||\bm{V}||_F^2+\eta||\bm{V}||_F$ and $\min||\bm{V}||_F^2$ are mathematically equivalent.
%\end{proof}
%Therefore, we rewrite Eq.~\eqref{regression1} based on Theorem~\ref{vduiqidingli}:
%\begin{align}\label{vjian}
%\min\limits_{\bm{V}}||\bm{V}||_F^2.
%\end{align}
\noindent(ii) Substitute for $\bm{U}^{\ast}$: for most real-world applications, we need to integrate the consistent information of different views for clustering. Fortunately, we can maximize the consistent information by pushing the latent feature matrix towards the consensus feature matrix. When performing the regularized regression on $\bm{U}^{\ast}$, we only need to cluster $n$ instances into $c$ clusters. Thus, we prefer to obtain an effective $c$ rather than a specific $\bm{U}^{\ast}$. Since $\bm{U}^{\ast}$ is a $c$ dimensional square matrix, we can leverage a $c$-dimensional identity matrix $\bm{I}$ as an alternative to $\bm{U}^{\ast}$\footnote{The forms of different alternatives have little effect on the clustering performance and we choose $\bm{I}$ as the alternative for simplicity.}.
Therefore, we can rewrite Eq.~\eqref{align} as
\begin{align}\label{align1}
\min\limits_{\bm{U}^{(v)},\bm{A}^{(v)}}\sum_v(||\bm{A}^{(v)^T}\bm{U}^{(v)}-\bm{I}||_F^2+\beta||\bm{A}^{(v)}||_F^2).
\end{align}
However, the regularized regression based on F-norm is difficult to learn the sparse $\bm{U}^{(v)}$ and $\bm{V}$ for clustering. Because F-norm cannot select features across all data points with joint sparsity. To obtain a more robust regularized regression model, we design a doubly soft regularized regression model by designing a novel $\theta$-norm.
%extend the F-norm regularized regression to $\theta$-norm regularized regression.

For any matrix $\bm{B}$, its F-norm is defined as
\begin{align}\label{ffanshudingyi}
||\bm{B}||_F=\sqrt{\sum_i\sum_j\bm{B}_{i,j}^2}.
\end{align}
%Inspired by the definition of the F-norm,
Inspired by Eq.~\eqref{ffanshudingyi}, we define the $\theta$-norm of matrix $\bm{B}$ as
\begin{align}\label{sigema}
||\bm{B}||_{\theta}=\sum_i\frac{(1+\theta)(\sum_j\bm{B}_{i,j}^2)^2}{1+\theta\sum_j\bm{B}_{i,j}^2},
\end{align}
where we can choose the proper $\theta$ according to our needs. Based on matrix $\bm{B}$, we design a diagonal matrix $\bm{D}_B$ defined as
%$\bm{D}_{i,i}$ is
\begin{align}\label{ddingyi}
\bm{D}_{B_{i,i}}=\frac{2+\theta(\sum_j\bm{B}_{i,j}^2+2)+\theta^2\sum_j\bm{B}_{i,j}^2}{\theta^2(\sum_j\bm{B}_{i,j}^2)^2+2\theta\sum_j\bm{B}_{i,j}^2+1}.
\end{align}
\begin{theorem}\label{sigemadingli}
For any $\theta>0$, we have
\begin{align}\label{piandingli}
\frac{\partial ||\bm{B}||_{\theta}}{\partial \bm{B}}=\bm{D}_B\bm{B}.
\end{align}
\end{theorem}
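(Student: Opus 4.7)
The plan is to prove the identity entry-wise via the chain rule, exploiting the fact that the $\theta$-norm is a sum of row-wise terms. First I would introduce the shorthand $s_i := \sum_j \bm{B}_{i,j}^2$ and $f(t) := (1+\theta) t^2/(1+\theta t)$, so that $||\bm{B}||_\theta = \sum_i f(s_i)$. Since the $i$-th summand depends on $\bm{B}$ only through the $i$-th row, only the $i=k$ contribution survives in $\partial ||\bm{B}||_\theta/\partial \bm{B}_{k,l}$. This row-locality is exactly what is needed to match $\bm{D}_B \bm{B}$, because $\bm{D}_B$ is diagonal and hence $(\bm{D}_B \bm{B})_{k,l} = \bm{D}_{B_{k,k}} \bm{B}_{k,l}$.

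Next I would apply the chain rule together with $\partial s_k / \partial \bm{B}_{k,l} = 2 \bm{B}_{k,l}$ to obtain
\begin{align*}
\frac{\partial ||\bm{B}||_\theta}{\partial \bm{B}_{k,l}} \;=\; f'(s_k)\cdot 2\bm{B}_{k,l}.
\end{align*}
Using the quotient rule on $f$ and canceling a $\theta s$ in the numerator, $f'(t) = (1+\theta) t (2+\theta t)/(1+\theta t)^2$. Substituting yields an expression of the form (scalar function of $s_k$)$\,\cdot\, \bm{B}_{k,l}$, which is already in the same row-scaled shape as $(\bm{D}_B \bm{B})_{k,l}$.

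The remaining step is algebraic: I would verify that the scalar coefficient $2 f'(s_k)$ agrees with $\bm{D}_{B_{k,k}}$ from Eq.~\eqref{ddingyi}. The key identities are the factorizations $\theta^2 s_k^2 + 2\theta s_k + 1 = (1+\theta s_k)^2$ for the denominator and $2 + \theta(s_k+2) + \theta^2 s_k = 2(1+\theta) + \theta s_k(1+\theta) = (1+\theta)(2+\theta s_k)$ for the numerator. Matching these against the factored form of $f'$ converts the claim into a polynomial identity in $s_k$ that can be read off directly. Once the $(k,l)$ identity is established it holds for every entry, and assembling gives the matrix equation $\partial ||\bm{B}||_\theta/\partial \bm{B} = \bm{D}_B \bm{B}$.

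The only real obstacle is this polynomial matching in the third step: the derivative contributes a factor of $s_k$ (via $\partial s_k/\partial \bm{B}_{k,l}$ together with the $s$ coming out of the quotient rule) that must be absorbed into the numerator of $\bm{D}_{B_{k,k}}$. Care is needed to track this factor of $s_k$ and the constant $2$ so that the two rational functions of $s_k$ coincide term by term; everything else is routine calculus and the diagonal structure of $\bm{D}_B$.
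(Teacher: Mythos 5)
Your overall strategy (row-locality plus the chain rule, then matching rational functions of the row statistic) is the same as the paper's, but the step you defer as ``routine algebra'' is precisely where the argument breaks. With your notation $s_k=\sum_j\bm{B}_{k,j}^2$ and $f(t)=(1+\theta)t^2/(1+\theta t)$, your (correct) computation gives
\begin{align*}
\frac{\partial ||\bm{B}||_{\theta}}{\partial \bm{B}_{k,l}}=2f'(s_k)\,\bm{B}_{k,l}=\frac{2(1+\theta)\,s_k\,(2+\theta s_k)}{(1+\theta s_k)^2}\,\bm{B}_{k,l},
\end{align*}
whereas your own factorization of Eq.~\eqref{ddingyi} gives $\bm{D}_{B_{k,k}}=(1+\theta)(2+\theta s_k)/(1+\theta s_k)^2$. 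These two rational functions differ by the factor $2s_k$; it is not ``absorbed into the numerator of $\bm{D}_{B_{k,k}}$'' --- it is simply absent there, so the identity $2f'(s_k)=\bm{D}_{B_{k,k}}$ that your plan requires is false except when $s_k=1/2$. Under the literal reading of Eq.~\eqref{sigema} the theorem does not hold as stated, and no amount of careful bookkeeping in your third step will make the two sides coincide.

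What rescues the theorem --- and what the paper's own one-line proof tacitly does --- is to read the row statistic as the Euclidean norm $u_k=||\bm{B}_{k,:}||_2=\sqrt{s_k}$ rather than the squared norm: the $k$-th summand is $g(u_k)=(1+\theta)u_k^2/(1+\theta u_k)$, and every occurrence of $\sum_j\bm{B}_{k,j}^2$ in Eq.~\eqref{ddingyi} is likewise $u_k$. (This is also the only reading consistent with the paper's claim that $||\bm{B}||_{\theta}\rightarrow||\bm{B}||_{2,1}$ as $\theta\rightarrow\infty$; your reading gives $||\bm{B}||_F^2$ in that limit.) Then the chain rule contributes $\partial u_k/\partial\bm{B}_{k,l}=\bm{B}_{k,l}/u_k$ instead of $2\bm{B}_{k,l}$, and the factor of $u_k$ in $g'(u_k)=(1+\theta)u_k(2+\theta u_k)/(1+\theta u_k)^2$ cancels against it, leaving exactly $(1+\theta)(2+\theta u_k)/(1+\theta u_k)^2\,\bm{B}_{k,l}=\bm{D}_{B_{k,k}}\bm{B}_{k,l}$ with no stray factor. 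If you insist on the printed definition verbatim, the statement you can actually prove is $\partial||\bm{B}||_{\theta}/\partial\bm{B}=2\,\mathrm{diag}(s_1,\ldots)\,\bm{D}_B\bm{B}$; otherwise the definition must be repaired before your argument goes through.
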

\begin{proof}\label{zhengpiandingli}
\begin{align}\label{piandinglizheng}
&\frac{\partial ||\bm{B}||_{\theta}}{\partial \bm{B}_{i,:}}=\frac{\partial((1+\theta)||\bm{B}_{i,:}||_2^2/(1+\theta||\bm{B}_{i,:}||_2))}{\partial \bm{B}_{i,:}}\nonumber\\
&=\frac{2+\theta(||\bm{B}_{i,:}||_2+2)+\theta^2||\bm{B}_{i,:}||_2}{\theta^2||\bm{B}_{i,:}||_2^2+2\theta||\bm{B}_{i,:}||_2+1}\bm{B}_i=\bm{D}_{B_{i,i}}\bm{B}_{i,:}.
\end{align}
\end{proof}
Our proposed $\theta$-norm has the following characteristics:
%There are some favorable natures on $\theta$-norm:

1) $\theta$-norm is nonnegative and global differentiable;

%2) if $\forall i$ $||\bm{B}_i||_2\ll \frac{1}{\theta}$,  it holds that $||\bm{B}||_{\theta}\rightarrow(\theta+1)||\bm{B}||_F^2$;
%
%3) if $\forall i$ $||\bm{B}_i||_2\gg \frac{1}{\theta}$,  it holds that $||\bm{B}||_{\theta}\rightarrow\frac{(\theta+1)}{\theta}||\bm{B}||_F^2$;

2) when $\theta\rightarrow\infty$, we have $||\bm{B}||_{\theta}\rightarrow||\bm{B}||_{2,1}$ and $\bm{D}_{B_{i,i}}\rightarrow 1/||\bm{B}_{i,:}||_2$;

3) when $\theta\rightarrow 0$, we have $||\bm{B}||_{\theta}\rightarrow||\bm{B}||_F^2$ and $\bm{D}_B\rightarrow\bm{I}$.

As $\theta$ increases, $||\bm{B}||_{\theta}$ is closer to $||\bm{B}||_{2,1}$ and $\bm{B}$ becomes more sparse. Since we can choose different $\theta$ to adjust the sparsity of the matrix, $\theta$-norm can be viewed as a soft norm with a strong generalization ability.
%obtain a more robust regularized regression with strong generalization ability.
Besides, the global differentiability of $\theta$-norm can ensure that we can learn the correct derivative, which is significant for the following optimization (see Section~\ref{subsection:optimization}).
%The sparsity of $\bm{B}$ depends on $\theta$, $\bm{B}$ becomes more sparse
%With this relaxed norm, the sparsity of $\bm{B}$ is adjustable without depending on the regularization parameter. That is to say, when less features need to be selected, the more row-sparse $\bm{B}$ could be achieved with a smaller $\theta$ correspondingly. In addition, the global differentiability of $\theta$-norm also avoids the abnormal situations involving derivative. Since the derivative of regularization term is inevitably required in the optimization of problem, $\theta$-norm is more suitable to be adopted as regularization norm.
Therefore, we transform Eq.~\eqref{vjian} into
\begin{align}\label{vjiangai}
\min\limits_{\bm{V}}||\bm{V}||_{\theta}.
\end{align}
Similarly, we transform Eq.~\eqref{align1} into
\begin{align}\label{align1gai}
\min\limits_{\bm{U}^{(v)},\bm{A}^{(v)}}\sum_v(||\bm{A}^{(v)^T}\bm{U}^{(v)}-\bm{I}||_F^2+\beta||\bm{A}^{(v)}||_{\theta}).
\end{align}
Combining Eq.~\eqref{vjiangai} and Eq.~\eqref{align1gai}, we can obtain the doubly soft regularized regression model as follows
\begin{align}\label{dou}
\min\limits_{\bm{U}^{(v)},\bm{V},\bm{A}^{(v)}}&\sum_{v}(||\bm{A}^{(v)^T}\bm{U}^{(v)}-\bm{I}||_F^2 +\beta||\bm{A}^{(v)}||_{\theta})+||\bm{V}||_{\theta}.
\end{align}
For the soft model Eq.~\eqref{dou}, we can change its sparsity by adjusting $\theta$, which improves its generalization ability.
%By adjusting $\theta$,
% $||B||_{2,1}$ is $L_{2,1}$ norm of $B^{(v)}$, which ensures that $B^{(v)}$ is robust to noises and outliers. Similar to $V^{\ast}$, $U^{\ast}$ is an consensus basis matrix $U^{\ast}\in \mathbb{R}^{p \times K}$. And $\nu$ is a hyper-parameter. The $L_{2,1}$ norm of $B^{(v)}$ is defined as:
%\begin{align}\label{L2,1}
%||B^{(v)}||_{2,1}=\sum_{i=1}^d \sqrt{\sum_{j=1}^K B_{ij}^{(v)^2}}
%\end{align}
\subsection{Objective Function}\label{subsection:obj}
Considering the objective for soft auto-weighted strategy (Eq.~(\ref{w})) and doubly soft regularized regression model (Eq.~(\ref{dou})), we minimize the following problem:
%\begin{align}\label{quan}
%\min &||(X^{(v)}-U^{(v)} V^T)G^{(v)}||_F^2+\alpha(||B^{(v)^T} U^{(v)}-I||_F^2  \nonumber\\
%&+\beta(||V||_F^2+\gamma ||B^{(v)}||_{2,1}))  \nonumber\\
%\mbox{s.t. } &V \geqslant 0
%\end{align}
%In Eq.~\eqref{quan}, the weight of the view which belongs to the global information is not directly defined.
%
%It can be seen that $w_{v}$ is dependent on the target variable $V$. Therefore, Eq.~\eqref{jiaw} cannot be directly solved. But if $w_{v}$ is set stationary, the AwIMC can be formulated as:
\begin{align}\label{wansimc}
L=&\sum\limits_{v}(w_{v}||\bm{G}^{(v)}\odot(\bm{X}^{(v)}-\bm{U}^{(v)}\bm{V}^T)||_F^2  \nonumber\\
&+\alpha||\bm{A}^{(v)^T}\bm{U}^{(v)}-\bm{I}||_F^2+\beta||\bm{A}^{(v)}||_{\theta})+\alpha||\bm{V}||_{\theta}   \nonumber\\
w_{v}=&\min(0.5r||\bm{G}^{(v)}\odot(\bm{X}^{(v)}-\bm{U}^{(v)}\bm{V}^T)||_F^{0.5r-1},\nonumber\\
&||\bm{X}^{(v)}-\bm{U}^{(v)}\bm{V}^T||_F^{-0.5})\nonumber\\
\mbox{s.t. }\bm{V}_{i,j} &\geq 0,i\in[n], j\in[k],
\end{align}
%\begin{align}\label{wansimc}
%L=&\min\limits_{V,w_{v}}\sum\limits_{v}(w_{v}||(\bm{X}^{(v)}-\bm{U}^{(v)}\bm{V}^T)\bm{G}^{(v)}||_F^2+\alpha\mbox{Tr}(\bm{V}^T\bm{L}_S^{(v)}\bm{V}))   \nonumber\\
%&\beta||\bm{S}||_F^2  \nonumber\\
%\mbox{s.t. } &\bm{V}_{i,j} \geq 0,i\in[n], j\in[k],\bm{V}_i\bm{1}_n=1,
%\end{align}
where $\alpha$ and $\beta$ are nonnegative parameters. Next, our experiments will show its effectiveness and efficiency of Eq.~\eqref{wansimc}.
% to control the trade-off between two regularized regressions.
%Although the Eq.~\eqref{wansimc} assumes that $w_{v}$ is stationary, it can still be updated by Eq.~\eqref{wgengxin}, and the results will be more accurate. The updated solution result is at least a local optimal solution, and the weight learned by $w_{v}$ is also a linear superposition of different graphs.
\subsection{Optimization}\label{subsection:optimization}
Since Eq.~\eqref{wansimc} is not convex for all the variables simultaneously, inspired by Augmented Lagrange Multiplier (ALM) method~\cite{hestenes1969multiplier}, we design a four-step alternating iteration procedure to optimize it:

%(1) updating $U^{(v)}$ after fixing the other variables, (2) updating $B^{(v)}$ after fixing the other variables, (3) updating $V$ after fixing the other variables, (4) updating $w_v$ by Eq. \eqref{def1} and $U^{(v)}, B^{(v)}, V$ updated in previous steps.
\begin{algorithm}[t]
    \caption{ANIMC}
    \label{simc_alg}
	\begin{algorithmic}
	 \REQUIRE{Data matrices for noisy and incomplete views $\{\bm{X}^{(v)}\}_{v=1}^m\in \mathbb{R}^{d_v\times n}$, indicator matrix $\{\bm{G}^{(v)}\}_{v=1}^m\in\mathbb{R}^{d_v\times n}$, parameters \{$\alpha,\beta$\}, cluster number $c$.}
\STATE{Choose parameters \{$\theta,r$\} according to needs.}
    \STATE{Initialize regression coefficient matrix $\bm{A}^{(v)}$, basis matrix $\bm{U}^{(v)}$, view weight $w_v = 1/m$ for each view.}
    \STATE{Initialize common latent feature matrix $\bm{V}$.}
    \STATE{Initialize iteration time $i=0$, maximum iteration time $i_{\max}$.}
%	\REPEAT
\WHILE{Eq.~\eqref{wansimc} does not converge $\&\&$ $i\leq i_{\max}$}
%	\For{}
	\STATE{Update $\bm{U}^{(v)}$ by Eq.~\eqref{gengu};}
	\STATE{Update $\bm{A}^{(v)}$ by Eq.~\eqref{byou};}
	\STATE{Update $\bm{V}$ by Eq.~\eqref{gengv};}
	\STATE{Update $w_v$ by Eq.~\eqref{wzuixin};}
    \STATE{$i=i+1$;}
\ENDWHILE
%	\UNTIL{converges}
    \ENSURE{$\{\bm{A}^{(v)}\}_{v=1}^m$, $\{\bm{U}^{(v)}\}_{v=1}^m$ ,$\{w_v\}_{v=1}^m$, $\bm{V}$ and clustering results.}
	\end{algorithmic}
\end{algorithm}

\textbf{Step 1.} Updating $\bm{U}^{(v)}$. Fixing the other variables, we need to minimize the following problem:
\begin{align}\label{ju}
L(\bm{U}^{(v)})=&w_{v}||\bm{G}^{(v)}\odot(\bm{X}^{(v)}-\bm{U}^{(v)}\bm{V}^T)||_F^2\nonumber\\
&+\alpha||\bm{A}^{(v)^T}\bm{U}^{(v)}-\bm{I}||_F^2.
\end{align}
%where $\bm{P_0}^{(v)}=\bm{X}^{(v)}-\bm{U}^{(v)}\bm{V}^T$ and $\bm{P_1}^{(v)}=\bm{A}^{(v)^T}\bm{U}^{(v)}-\bm{I}$.

We can obtain $\bm{U}^{(v)}$ by setting the derivative of $L(\bm{U}^{(v)})$ w.r.t. $\bm{U}^{(v)}$ to zero as follows:
\begin{align}\label{gengu}
\frac{\partial L(\bm{U}^{(v)})}{\partial \bm{U}^{(v)}}=&2w_v\bm{G}^{(v)}\odot(\bm{X}^{(v)}-\bm{U}^{(v)}\bm{V}^T)\bm{V}\nonumber\\
&+2\alpha \bm{A}^{(v)}(\bm{A}^{(v)^T}\bm{U}^{(v)}-\bm{I})=\bm{0}.
\end{align}
%Based on the Karush-Kuhn-Tucker (KKT) complementary condition for the nonnegativity of $\bm{U}^{(v)}$, we obtain
%\begin{align}\label{genguxingshi}
%(-2w_v(\bm{G}^{(v)}\odot\bm{U}^{(v)}\bm{V}^T)\bm{V}+2\alpha\bm{A}^{(v)}(\bm{A}^{(v)^T}\bm{U}^{(v)}-\bm{I}))_{i,j}\bm{U}_{i,j}^{(v)}=\bm{0}.
%\end{align}
%Therefore, we can derive the following updating rule
%\begin{align}\label{gengu}
%\bm{U}_{i,j}^{(v)}=\bm{U}_{i,j}^{(v)}\sqrt{\frac{(w_v\bm{G}^{(v)}\odot\bm{U}^{(v)}(\bm{V}^T\bm{V})+\alpha\bm{A}^{(v)})_{i,j}}{(\alpha(\bm{A}^{(v)}\bm{A}^{(v)^T})\bm{U}^{(v)})_{i,j}}}.
%\end{align}

% $\bm{U}^{(v)}$ by

%where $\bm{P_2}^{(v)}=\bm{G}^{(v)}\bm{G}^{(v)^T}\bm{V}=\bm{G}^{(v)}\bm{V}$.
%Setting Eq.~\eqref{gengu} to $0$, we can get
%\begin{align}\label{pianu0}
%\alpha B^{(v)}(B^{(v)^T}U^{(v)}-I)-w_{v}(X^{(v)}-U^{(v)}V^T)G^{(v)}V=0
%\end{align}
The Eq.~\eqref{gengu} is the Sylvester equation w.r.t. $\bm{U}^{(v)}$ \cite{bartels1972solution}. Based on $d_v$ and $c$,
we can divide Eq.~\eqref{gengu} into two cases to facilitate the solution.
\begin{theorem}\label{tgduiying}
For the $v$-th view, denote $\bm{E}^{(v)}=\bm{X}^{(v)}-\bm{U}^{(v)}\bm{V}^T$, and we have $\bm{G}^{(v)}\odot\bm{E}^{(v)}=\bm{E}^{(v)}\bm{T}^{(v)}$.
\end{theorem}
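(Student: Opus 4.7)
The plan is to verify the identity entrywise, relying on the fact that by construction each column of $\bm{G}^{(v)}$ is constant along the row direction: either all ones (when the $i$-th instance is present in view $v$) or all zeros (when it is missing). This means $\bm{G}^{(v)}_{j,i}=\bm{g}^{(v)}_i$ for every row index $j\in[d_v]$, so the Hadamard product with $\bm{G}^{(v)}$ acts as a column-wise scaling by the entries of $\bm{g}^{(v)}$.

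First I would fix arbitrary indices $j\in[d_v]$ and $i\in[n]$ and evaluate the left-hand side:
\begin{align*}
\bigl(\bm{G}^{(v)}\odot\bm{E}^{(v)}\bigr)_{j,i}=\bm{G}^{(v)}_{j,i}\,\bm{E}^{(v)}_{j,i}=\bm{g}^{(v)}_i\,\bm{E}^{(v)}_{j,i},
\end{align*}
where the second equality uses the column-constant structure of $\bm{G}^{(v)}$ recorded in Eq.~(\ref{G}). Next I would expand the right-hand side using the diagonal structure of $\bm{T}^{(v)}=\mathrm{diag}(\bm{g}^{(v)})$:
\begin{align*}
\bigl(\bm{E}^{(v)}\bm{T}^{(v)}\bigr)_{j,i}=\sum_{k=1}^{n}\bm{E}^{(v)}_{j,k}\bm{T}^{(v)}_{k,i}=\bm{E}^{(v)}_{j,i}\bm{T}^{(v)}_{i,i}=\bm{E}^{(v)}_{j,i}\,\bm{g}^{(v)}_i,
\end{align*}
since the only nonzero entry in the $i$-th column of $\bm{T}^{(v)}$ is on the diagonal. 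Comparing the two expressions gives equality at every entry, which proves the claim.

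There is no real obstacle here; the statement is essentially a bookkeeping observation that the Hadamard mask $\bm{G}^{(v)}$ is redundant in the row direction and therefore collapses to right multiplication by the diagonal indicator matrix $\bm{T}^{(v)}$. The only point worth stressing in writing is the justification of $\bm{G}^{(v)}_{j,i}=\bm{g}^{(v)}_i$ for all $j$, which is precisely the defining property of $\bm{G}^{(v)}$ introduced in Eq.~(\ref{G}). This identity is what later lets the incompleteness mask move from the Hadamard operator into an ordinary matrix factor, which will be convenient for the Sylvester-equation manipulations in the update step for $\bm{U}^{(v)}$.
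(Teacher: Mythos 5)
Your proof is correct and follows essentially the same route as the paper's: both verify the identity entrywise, using the column-constant structure of $\bm{G}^{(v)}$ on the left and the diagonal structure of $\bm{T}^{(v)}=\mathrm{diag}(\bm{g}^{(v)})$ on the right. Your write-up is in fact slightly more explicit than the paper's (which merely states the two case expressions and asserts they coincide, with a sign slip in its description of $\bm{E}^{(v)}\bm{T}^{(v)}$), since you actually expand the matrix product.
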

%For Theorem~\ref{tgduiying}, we can prove it as follows:
\begin{proof}\label{zhengmingduiying}
For $\bm{G}^{(v)}\odot\bm{E}^{(v)}$, we have
\begin{align}\label{GE}
  (\bm{G}^{(v)}\odot\bm{E}^{(v)})_{i,j} \!=\! \left\{ \begin{array}{ll}
                     \bm{E}_{i,j}^{(v)}, & \textrm{if instance }j\textrm{ is in view }v; \\
                     0, & \textrm{otherwise.}
                   \end{array}
  \right.
\end{align}
For $\bm{E}^{(v)}\bm{T}^{(v)}$, we have
\begin{align}\label{ET}
  (\bm{E}^{(v)}\bm{T}^{(v)})_{i,j} \!=\! \left\{ \begin{array}{ll}
                     \bm{E}_{i,j}^{(v)}, & \textrm{if instance }j\textrm{ is not in view }v;\\
                     0, & \textrm{otherwise.}
                   \end{array}
  \right.
\end{align}
Note that $\bm{G}^{(v)}\odot\bm{E}^{(v)}$ and $\bm{E}^{(v)}\bm{T}^{(v)}$ have the same size. Besides, $(\bm{G}^{(v)}\odot\bm{E}^{(v)})_{i,j}=(\bm{E}^{(v)}\bm{T}^{(v)})_{i,j}$ in all cases, which proves Theorem~\ref{tgduiying}.
\end{proof}
\textbf{(i)} When both $d_v$ and $c$ are small, based on Theorem~\ref{tgduiying}, we can update $\bm{U}^{(v)}$ by
\begin{align}\label{vecu}
vec(\bm{U}^{(v)})=&[\bm{I}\otimes((w_{v}\bm{V}^T\bm{T}^{(v)}\bm{V})\otimes \bm{I}+\alpha\bm{A}^{(v)}\bm{A}^{(v)^T})]^{-1}  \nonumber\\
&vec(\alpha\bm{A}^{(v)}+w_{v}\bm{X}^{(v)}\bm{T}^{(v)}\bm{V}),
\end{align}
where $vec(\cdot)$ is the vectorization operation; $\otimes$ is the Kronecker product;  $\bm{T}^{(v)}\in\mathbb{R}^{n\times n}$ is the corresponding diagonal matrix of vector $\bm{g}^{(v)}$.

\textbf{(ii)} When both $d_v$ and $c$ are large, the solution is solved by a conjugate gradient \cite{hestenes1952methods}. For ease of calculation, Eq.~\eqref{gengu} can be approximated as the Lyapunov equation \cite{sorensen2002sylvester}.
%, which can be solved by the $lyap$ function of MATLAB.

\textbf{Step 2.}
Updating $\bm{A}^{(v)}$. Fixing the other variables, we need to minimize the following problem:
\begin{align}\label{jb}
L(\bm{A}^{(v)})=\alpha||\bm{A}^{(v)^{T}}\bm{U}^{(v)}-\bm{I}||_F^2+\beta||\bm{A}^{(v)}||_{\theta}.
\end{align}

We can obtain $\bm{A}^{(v)}$ by setting the derivative of $L(\bm{A}^{(v)})$ w.r.t. $\bm{A}^{(v)}$ to zero as follows:
\begin{align}\label{gengb0}
\alpha\bm{U}^{(v)}\bm{U}^{(v)^T}\bm{A}^{(v)}-\bm{U}^{(v)}+\beta\bm{D}_A^{(v)}\bm{A}^{(v)}=\bm{0}.
\end{align}
%where $\bm{H}^{(v)}$ is a diagonal matrix defined as:
%\begin{align}\label{dingd}
%\bm{H}_{ii}^{(v)} = \frac{1}{||\bm{A}_{i,:}^{(v)}||_2},
%\end{align}
%and $\bm{A}_{i,:}^{(v)}$ is the $i$-th row of matrix $\bm{A}^{(v)}$.

By solving Eq.~\eqref{gengb0}, $\bm{A}^{(v)}$ can be updated by
\begin{align}\label{gengb0jian}
\bm{A}^{(v)}=[\alpha\bm{U}^{(v)}\bm{U}^{(v)^T}+\beta\bm{D}_A^{(v)}]^{-1}\bm{U}^{(v)}.
\end{align}

In most image processing applications, $\bm{U}^{(v)}$ has a large number of rows and a small number of columns and $(\bm{U}^{(v)}\bm{U}^{(v)^T}+\beta\bm{D}_A^{(v)})$ is close to singular, so we can use the Woodbury matrix identity \cite{higham2002accuracy} to simplify the calculation. Thus, we have
\begin{align}\label{byou}
\bm{A}^{(v)}=&\frac{\alpha}{\beta}[\bm{D}_A^{(v)^{-1}}-\bm{D}_A^{(v)^{-1}}\bm{U}^{(v)}(\bm{U}^{(v)^T}\bm{D}_A^{(v)^{-1}}\bm{U}^{(v)}\nonumber\\
&+\beta\bm{I})^{-1}\bm{U}^{(v)^T}\bm{D}_A^{(v)^{-1}}]\bm{U}^{(v)}.
\end{align}
%where $\bm{J_1}^{(v)}=\bm{H}^{(v)^{-1}}\bm{U}^{(v)}$, $\bm{J_2}^{(v)}=\bm{U}^{(v)^T}\bm{H}^{(v)^{-1}}$, $\bm{J_3}^{(v)}=(\beta\gamma \bm{I}+2\bm{J_2}^{(v)})^{-1}$ and $\bm{I}$ is the identity matrix.

\textbf{Step 3.}
Updating $\bm{V}$. Fixing the other variables, we need to minimize the following problem:
\begin{align}\label{jv}
L(\bm{V})=\sum\limits_vw_{v}||\bm{G}^{(v)}\odot(\bm{X}^{(v)}-\bm{U}^{(v)}\bm{V}^T)||_F^2+\alpha ||\bm{V}||_{\theta}.
\end{align}
%where $\bm{A_1}^{(v)} = ||(\bm{X}^{(v)}-\bm{U}^{(v)}\bm{V}^T)\bm{G}^{(v)}||_F$.

We can obtain $\bm{V}$ by setting the derivative of $L(\bm{V})$ w.r.t. $\bm{V}$ to zero as follows:
\begin{align}\label{pianv}
\sum\limits_v&w_{v}((\bm{G}^{(v)}\odot\bm{U}^{(v)}\bm{V}^T)^T-(\bm{G}^{(v)}\odot\bm{X}^{(v)})^T)\bm{U}^{(v)} \nonumber\\
&+\beta\bm{D}_V\bm{V}=\bm{0}.
\end{align}
Similar to \cite{ding2010convex}, based on the KKT  complementarity condition for the nonnegativity of $\bm{V}$, we have
\begin{align}\label{pianv}
(\sum\limits_v&w_{v}((\bm{G}^{(v)}\odot\bm{U}^{(v)}\bm{V}^T)^T-(\bm{G}^{(v)}\odot\bm{X}^{(v)})^T)\bm{U}^{(v)} \nonumber\\
&+\beta\bm{D}_V\bm{V})_{i,j}\bm{V}_{i,j}=0.
\end{align}
Therefore, $\bm{V}$ can be updated by
\begin{align}\label{gengv}
\bm{V}_{i,j}\longleftarrow \bm{V}_{i,j}\cdot\sqrt{\frac{[\bm{Z}_1]_{i,j}^{+}+[\bm{Z}_2]_{i,j}^{-}}{[\bm{Z}_1]_{i,j}^{-}+[\bm{Z}_2]_{i,j}^{+}}},
\end{align}
where $\bm{Z}_1=\sum_vw_{v}(\bm{G}^{(v)}\odot\bm{X}^{(v)})^T\bm{U}^{(v)}$ and $\bm{Z}_2=\sum_vw_{v}(\bm{G}^{(v)}\odot\bm{U}^{(v)}\bm{V}^T)^T\bm{U}^{(v)}+\alpha\bm{D}_V\bm{V}$.

In Step 1 and Step 3, we often need to normalize $\bm{U}^{(v)}$ and $\bm{V}$ to ensure the accuracy of the updating rules \cite{shao2015multiple}, so $\bm{U}^{(v)}$ and $\bm{V}$ can be normalized by $\bm{U}^{(v)}\longleftarrow \bm{U}^{(v)}\bm{Q}$, $\bm{V}\longleftarrow \bm{V}\bm{Q}^{-1}$,
%\begin{align}\label{quv}
%\bm{U}^{(v)}\longleftarrow \bm{U}^{(v)}\bm{Q},\bm{V}\longleftarrow \bm{V}\bm{Q}^{-1},
%\end{align}
where $\bm{Q}$ is the diagonal matrix $\bm{Q}_{k,k}^{(v)}=\sum_t{\bm{V}_{t,k}}$.

\textbf{Step 4.}
Updating $w_{v}$. Fixing the other variables, we can update the variable $w_{v}$ by Eq.~\eqref{wzuixin}.

Our proposed ANIMC algorithm is shown in Algorithm~\ref{simc_alg}. We provide its codes in GitHub (\url{https://github.com/ZeusDavide/TAI_2021_ANIMC}).
%After obtaining the latent feature matrix $\bm{V}$,
%we perform K-means clustering on it for the final clustering results.
\subsection{Convergence and Complexity}\label{subsection:fuzadu}
\subsubsection{Convergence Analysis}\label{subsubsection:shoulianfenxi}
To optimize our proposed ANIMC, we need to solve four subproblems in Algorithm \ref{simc_alg}. Each subproblem is convex and has the closed solution w.r.t corresponding variable. Thus, the objective function Eq.~\eqref{wansimc} will reduce monotonically to a stationary point, which ensures that ANIMC can at least find a locally optimal solution.
%As shown in Algorithm~\ref{simc_alg}, the optimization of our proposed ANIMC can be divided into four subproblems, each of which is convex w.r.t corresponding variable. Thus, by learning the optimal solution for each subproblem alternatively, ANIMC can at least find a locally optimal solution.
\subsubsection{Complexity Analysis}\label{subsubsection:fuzadufenxi}
As shown in Algorithm \ref{simc_alg}, the operations of updating four parameters ($\bm{U}^{(v)},\bm{A}^{(v)},\bm{V}$, and $w_v$) determine the computational complexity of the algorithm. Assume $i$ is the number of iterations, the time complexities of updating $\bm{U}^{(v)},\bm{A}^{(v)},\bm{V}$ and $w_v$ are respectively $O(d_v^2c+n(d_v+c)),O(c(d_v^2+c^2)),O(ic(nd_v+cd_v+nc))$ and $O(nc^2)$. Since $c \ll \min(d_v,n)$ in most image processing applications, the total complexity is $O(\max(d_vin(ci+d_v^2)))$.

\section{Performance Evaluation}
 \label{section:exp}
\subsection{Datasets}\label{subsection:dataset}
\begin{table}[ht]
\centering
\caption{Statistics of the datasets}
\begin{tabular}{ccccccc}
\hline
Dataset  & \#. instances & \#. views  & \#. clusters & \#. features \\\hline
BBCSport    & 544  & 2    & 5 &6386  \\
BUAA    & 180  & 2  & 20 &200   \\
Caltech7 & 1474 & 6 &7 &3766\\
Digit       & 2000  & 5    & 10  &585 \\
NH-face & 4660 & 5 & 5 &12054\\
Scene   & 2688  & 4   & 8 &1248\\\hline
\end{tabular}
\label{dataset}
\end{table}
%\noindent\textbf{4.1 Datasets}
%\begin{figure*}[t]
%\centering
%\subfigure[ACCs for BBCSport]{\label{fig:acc_bbcsport} \includegraphics[width=0.315\textwidth]{figure/acc_bbcsport} }
%\subfigure[NMIs for BBCSport]{\label{fig:nmi_bbcsport} \includegraphics[width=0.315\textwidth]{figure/nmi_bbcsport} }
%\subfigure[Purities for BBCSport]{\label{fig:purity_bbcsport} \includegraphics[width=0.315\textwidth]{figure/purity_bbcsport} }
%\subfigure[ACCs for BUAA]{\label{fig:acc_buaa} \includegraphics[width=0.315\textwidth]{figure/acc_buaa} }
%\subfigure[NMIs for BUAA]{\label{fig:nmi_buaa} \includegraphics[width=0.315\textwidth]{figure/nmi_buaa} }
%\subfigure[Purities for BUAA]{\label{fig:purity_buaa} \includegraphics[width=0.315\textwidth]{figure/purity_buaa}}
%\subfigure[ACCs for Digit]{\label{fig:acc_digit} \includegraphics[width=0.315\textwidth]{figure/acc_digit}}
%\subfigure[NMIs for Digit]{\label{fig:nmi_digit} \includegraphics[width=0.315\textwidth]{figure/nmi_digit}}
%\subfigure[Purities for Digit]{\label{fig:purity_digit} \includegraphics[width=0.315\textwidth]{figure/purity_digit}}
%\subfigure[ACCs for Scene]{\label{fig:acc_scene} \includegraphics[width=0.315\textwidth]{figure/acc_scene}}
%\subfigure[NMIs for Scene]{\label{fig:nmi_scene} \includegraphics[width=0.315\textwidth]{figure/nmi_scene} }
%\subfigure[Purities for Scene]{\label{fig:purity_scene} \includegraphics[width=0.315\textwidth]{figure/purity_scene}}
%\caption{Incomplete multi-view clustering results on various four datasets.}
%% \setlength{\abovecaptionskip}{1in}
%\label{fig:AwIMC_four}
%\end{figure*}
\begin{table*}
\centering
\caption{Incomplete multi-view clustering results on various datasets. \textbf{Bold} numbers denote the best results.}
\begin{tabular}{c|c|ccc|ccc|ccccccccccccccccccccccccc}
\hline
\multirow{2}*{Dataset}&\multirow{2}*{Method}& \multicolumn{3}{c}{ACC (\%)}&\multicolumn{3}{|c|}{NMI (\%)}&\multicolumn{3}{c}{Purity (\%)}\\\cline{3-11}
 ~& ~ & PER=0.1&  PER=0.3&  PER=0.5&  PER=0.1& PER=0.3& PER=0.5&  PER=0.1& PER=0.3& PER=0.5 \\\hline
\multirow{12}*{BBCSport} & AGC$\_$IMC& 56.01& 43.64& 39.18& 50.75& 29.47& 17.43& 54.65& 40.85& 39.68\\
~&BSV & 53.19& 42.19& 30.86&47.71& 35.86& 23.75& 63.64 & 52.74& 47.02\\
~&Concat& 50.36&45.81&36.98& 50.90&32.97&19.07&68.91&53.44&44.81\\
~&DAIMC&40.42 &37.81 &33.17 &23.17 &14.08& 11.63&41.16 &36.11& 32.91\\
~&EE-IMVC& 55.97& 43.97& 40.77 & 51.66& 30.15&18.53&50.42&37.18& 30.11\\
~&EE-R-IMVC&50.12&40.03& 34.64& 50.65& 31.50& 19.64&51.96 &38.69 & 29.76\\
~& MIC& 38.71 & 31.94 & 30.15 & 20.41& 18.40& 16.85 & 48.35&40.01& 36.13\\
~&MLAN&47.13& 29.61& 19.46 & 17.42 & 5.26 & 4.80& 40.13&32.96& 26.73 \\
~&NMF-CC& 43.57& 30.01& 21.44& 14.01& 6.13 & 5.08&49.45& 35.26& 29.08  \\
~&UEAF& 58.57 & 47.26 & 45.68 & 50.41 & 32.44& 21.98& 48.97& 43.26 & 31.07\\
~&UIMC&60.03& 47.86 &39.94& 52.74& 30.41& 22.86&53.64 &42.75&40.39\\
~&Our ANIMC& \textbf{65.75} & \textbf{51.62} & \textbf{47.63}& \textbf{54.27}& \textbf{50.08}& \textbf{25.60} &\textbf{60.02} & \textbf{52.64} & \textbf{49.43}\\\hline
\multirow{12}*{BUAA} &AGC$\_$IMC&72.04&56.80&27.93&78.60&64.99&47.01&74.30&52.60&41.67\\
~&BSV&31.17&26.93&10.86&39.07&30.26&25.01& 33.06&20.31&11.48\\
~&{{Concat}}&{{32.09}}&{{27.81}}&{{13.60}}&{{41.07}}&{{32.49}}&{{26.33}}&{{35.12}}&{{20.91}}&{{13.04}}\\
~&{{DAIMC}}&{{41.75}}& {{38.26}}&{{35.53}}&{{57.64}}&{{50.17}}&{{43.10}}&{{46.26}}&{{39.21}}&{{34.52}}\\
~&{{EE-IMVC}}&{{70.01}}&{{57.23}}&{{28.07}}&{{59.00}}&{{50.08}}&{{42.66}}&{{73.01}}&{{51.77}}&{{40.04}}\\
~&{{EE-R-IMVC}}&{{71.08}}&{{58.99}}&{{30.03}}&{{63.71}}&{{52.80}}&{{45.96}}&{{75.99}}&{{53.87}}&{{40.08}}\\
~&{{MIC}}&{{34.11}}&{{28.33}}&{{22.78}}&{{50.61}}&{{46.89}}&{{38.90}}&{{40.00}}&{{28.87}}& {{15.46}}\\
~&{{MLAN}}&{{34.44}}&{{24.06}}&{{11.12}}&{{36.88}}&{{28.19}}&{{6.33}}&{{36.67}}&{{25.56}}& {{11.74}} \\
~&{{NMF-CC}}&{{50.13}}&{{39.41}}&{{22.41}}&{{55.10}}&{{51.64}}&{{25.11}}&{{48.11}}&{{41.71}}& {{25.01}} \\
~&{{UEAF}}&{{38.89}}&{{31.67}}&{{21.08}}&{{52.86}}&{{42.83}}&{{33.86}}&{{40.56}}&{{37.22}}& {{33.33}}\\
~&{{UIMC}}&{{72.83}}&{{60.17}}&{{33.96}}&{{77.71}}&{{65.10}}&{{47.93}}&{{76.04}}&{{58.85}}&{{43.07}}\\
~&{{Our ANIMC}}& {{\textbf{78.23}}} & {{\textbf{63.81}}}& {{\textbf{40.16}}} & {{\textbf{82.07} }}&{{ \textbf{69.57} }}& {{\textbf{51.72} }}&{{\textbf{80.06} }}&{{ \textbf{62.11}}}&{{ \textbf{50.42}}}\\\hline
\multirow{12}*{{Caltech7}}&{{AGC$\_$IMC}}&{{61.83}}&{{56.77}}&{{51.32}}&{{60.76}}&{{58.99}}&{{52.37}}&{{70.42}}&{{63.71}}&{{64.32}}\\
~&{{ BSV}}& {{44.74}}&{{33.61}}&{{30.87}}&{{47.32}}&{{40.87}}&{{36.08}}&{{66.47}}&{{60.83}}&{{53.61}}\\
~&{{Concat}}&{{46.51}}&{{35.70}}&{{32.18}}&{{49.13}}&{{42.41}}&{{37.80}}&{{69.44}}&{{61.09}}&{{55.72 }} \\
~&{{DAIMC}}&  {{51.86}}&{{46.33}}&{{43.85}}&{{56.43}}&{{54.17}}&{{48.26}}&{{73.01}}&{{71.18}}&{{67.64}} \\
~&{{EE-IMVC}}&{{61.26}}&{{54.97}}&{{49.93}}&{{61.47}}&{{58.93}}&{{50.14}}&{{76.88}}&{{67.43}}&{{64.52}}\\
~&{{EE-R-IMVC}}&{{61.77}}&{{55.08}}&{{50.16}}&{{61.87}}&{{59.43}}&{{51.08}}&{{77.15}}&{{67.48}}&{{65.72}}\\
~& {{MIC}}&{{37.52}}&{{34.86}}&{{27.99}}&{{48.53}}&{{41.71}}&{{35.75}}&{{72.33}}&{{69.13}}&{{60.07}}\\
~&{{MLAN}}&{{38.01}}&{{29.33}}&{{19.20}}&{{47.62}}&{{30.98}}&{{21.76}}&{{70.42}}&{{51.71}}&{{37.14}}\\
~&{{NMF-CC}}&{{39.15}}&{{24.84}}&{{18.53}}&{{49.80}}&{{27.91}}&{{20.43}}&{{68.77}}&{{50.38}}&{{35.91}}\\
~&{{UEAF}}&{{62.81}}&{54.88}&{{48.97}}&{{61.73}}&{{60.01}}&{{52.77}}&{{78.41}}&{{70.80}}&{{66.42}}\\
~&{{UIMC}}&{{64.27}}&{{54.40}}&{{47.89}}&{{60.72}}&{{57.38}}&{{51.82}}&{{77.21}}&{{70.49}}&{{67.16}}\\
~&{{Our ANIMC}}&{{\textbf{66.76}}}&{{\textbf{56.79}}}&{{\textbf{51.73}}}&{{\textbf{63.42}}}&{{\textbf{62.88}}}&{{\textbf{54.71}}}&{{\textbf{80.06}}}&{{\textbf{74.53}}}&{{\textbf{68.72}}}\\\hline
\multirow{12}*{{Digit }}&{{AGC$\_$IMC}}&{{85.72}}&{{82.66}}&{{70.33}}&{{80.46}}&{{72.89}}&{{69.48}}&{{84.92}}&{{83.08}}&{{71.26}}\\
~&{{BSV}}&{{61.73}}&{{54.36}}&{{35.72}}&{{59.94}}&{{50.17}}&{{48.73}}&{{63.58}}&{{56.41}}&{{34.55}}\\
~&{{Concat}}&{{63.15}}&{{55.72}}&{{37.46}}&{{60.73}}&{{52.18}}&{{50.24}}&{{63.77}}&{{57.03}}&{{35.49}}\\
~& {{DAIMC}}&{{90.35}}&{{87.25}}&{{85.05}}&{{82.32}}&{{77.91}}&{{73.31}}&{{90.17}}&{{87.16}}&{{84.89}}   \\
~&{{EE-IMVC}}&{{56.48}}&{{49.67}}&{{30.76}}&{{52.14}}&{{48.49}}&{{33.62}}&{{56.98}}&{{40.07}}&{{33.89}}\\
~&{{EE-R-IMVC}}&{{57.06}}&{{50.38}}&{{33.61}}&{{54.82}}&{{49.66}}&{{34.83}}&{{58.72}}&{{42.96}}&{{35.17}}\\
~& {{MIC}}&{{52.75}}&{{20.90}}&{{12.05}}&{{45.99}}&{{12.30}}&{{2.08}}&{{52.05}}&{{19.60}}&{{11.95}}\\
~&{{MLAN}}&{{19.75}}&{{10.21}}&{{4.37}}&{{14.52}}&{{8.71}}&{{3.16}}&{{23.10}}&{{6.04}}&{{1.63}}\\
~&{{NMF-CC}}&{{61.25}}&{{59.45}}&{{34.90}}&{{53.02}}&{{47.53}}&{{31.59}}&{{62.60}}&{{56.45}}&{{41.93}}\\
~&{{UEAF}}&{{55.10}}&{{38.85}}&{{30.42}}&{{49.54}}&{{37.50}}&{{28.32}}&{{55.83}}&{{40.16}}&{{32.55}}\\
~&{{UIMC}}&{{92.15}}&{{88.73}}&{{84.62}}&{{87.01}}&{{78.54}}&{{70.88}}&{{92.45}}&{{84.76}}&{{80.24}}\\
~&{{Our ANIMC}}&{{\textbf{95.00}}}&{{\textbf{90.46}}}&{{\textbf{85.85}}}&{{\textbf{89.48}}}&{{\textbf{82.37}}}&{{\textbf{73.04}}}&{{\textbf{95.13}}}&{{\textbf{90.68}}}&{{\textbf{85.90}}}\\\hline
\multirow{12}*{{NH-face}}&{{AGC$\_$IMC}}&{{79.06}}&{{72.83}}&{{63.16}}&{{68.40}}&{{56.73}}&{{46.25}}&{{78.94}}&{{70.12}}&{{66.31}}\\
~&{{BSV}}&{{65.32}}&{{54.26}}&{{47.38}}&{{54.27}}&{{36.71}}&{{22.84}}&{{71.43}}&{{57.88}}&{{48.26}}\\
~&{{Concat}}&{{76.57}}&{{58.42}}&{{50.73}}&{{73.45}}&{{54.88}}&{{44.39}}&{{80.43}}&{{59.16}}&{{48.73}}\\
~& {{DAIMC}}& {{85.40}}&{{80.26}}&{{72.71}}&{{76.43}}&{{70.54}}&{{65.82}}&{{84.97}}&{{80.53}}&{{73.88}}\\
~&{{EE-IMVC}}&{{79.53}}&{{69.78}}&{{61.54}}&{{66.81}}&{{53.17}}&{{44.06}}&{{78.60}}&{{69.85}}&{{66.13}}\\
~&{{EE-R-IMVC}}&{{81.75}}&{{73.84}}&{{64.73}}&{{67.48}}&{{54.22}}&{{45.18}}&{{79.14}}&{{70.98}}&{{67.41}}\\
~& {{MIC}}&{{74.26}}&{{70.50}}&{{68.35}}&{{70.32}}&{{64.59}}&{{59.63}}&{{80.27}}&{{76.40}}&{{72.18}}\\
~&{{MLAN}}&{{52.17}}&{{36.02}}&{{21.03}}&{{50.28}}&{{28.91}}&{{19.76}}&{{53.88}}&{{34.96}}&{{20.18}}\\
~&{{NMF-CC}}&{{76.59}}&{{70.06}}&{{63.84}}&{{71.87}}&{{68.43}}&{{61.80}}&{{78.46}}&{{77.25}}&{{69.43}}\\
~&{{UEAF}}&{{78.29}}&{{70.16}}&{{62.35}}&{{64.22}}&{{51.63}}&{{42.81}}&{{77.92}}&{{71.46}}&{{68.20}}\\
~&{{UIMC}}&{{85.52}}&{{80.88}}&{{73.79}}&{{77.48}}&{{70.53}}&{{65.59}}&{{86.13}}&{{81.44}}&{{72.96}}\\
~&{{Our ANIMC}}&{{\textbf{89.05}}}&{{\textbf{86.73}}}&{{\textbf{85.46}}}&{{\textbf{82.47}}}&{{\textbf{79.93}}}&{{\textbf{76.44}}}&{{\textbf{90.47}}}&{{\textbf{88.11}}}&{{\textbf{87.83}}}\\\hline
\multirow{12}*{{Scene  }}&{{AGC$\_$IMC}}&{{68.05}}&{{60.49}}&{{52.47}}&{{50.16}}&{{43.95}}&{{39.56}}&{{67.80}}&{{50.93}}&{{45.16}}\\
 ~&{{BSV}}&{{64.88}}&{{50.16}}&{{43.11}}&{{41.60}}&{{36.05}}&{{29.31}}&{{60.18}}&{{57.14}}&{{45.72}}\\
~&{{Concat}}&{{65.43}}&{{52.86}}&{{42.97}}&{{45.13}}&{{37.26}}&{{28.40}}&{{61.09}}&{{55.17}}&{{48.32}}\\
~& {{DAIMC}}&{{59.48}}&{{51.36}}&{{47.90}}&{{48.36}}&{{39.99}}&{{30.72}}&{{58.13}}&{{54.56}}&{{47.26}}   \\
~&{{EE-IMVC}}&{{63.08}}&{{54.74}}&{{51.33}}&{{49.86}}&{{43.62}}&{{37.50}}&{{64.06}}&{{55.18}}&{{48.09}}\\
~&{{EE-R-IMVC}}&{{65.88}}&{{55.06}}&{{50.19}}&{{49.72}}&{{44.03}}&{{36.84}}&{{66.95}}&{{51.46}}&{{47.82}}\\
~& {{MIC}}&{{48.22}}&{{29.42}}&{{23.79}}&{{24.88}}&{{8.69}}&{{1.56}}&{{49.33}}&{{32.12}}&{{24.43}}\\
~&{{MLAN}}&{{45.17}}&{{24.80}}&{{10.92}}&{{22.43}}&{{12.38}}&{{7.52}}&{{44.19}}&{{28.54}}&{{13.06}}\\
~&{{NMF-CC}}&{{71.53}}&{{65.26}}&{{59.87}}&{{53.24}}&{{45.41}}&{{42.10}}&{{70.36}}&{{66.72}}&{{57.25}}\\
~&{{UEAF}}&{{62.38}}&{{53.76}}&{{49.23}}&{{50.40}}&{{48.57}}&{{39.16}}&{{61.73}}&{{50.61}}&{{44.82}}\\
~&{{UIMC}}&{{70.89}}&{{68.94}}&{{65.88}}&{{52.04}}&{{51.76}}&{{40.16}}&{{72.15}}&{{70.48}}&{{62.70}}\\
~&{{Our ANIMC}}&{{\textbf{78.19}}}&{{\textbf{75.57}}}&{{\textbf{66.93}}}&{{\textbf{57.29}}}&{{\textbf{52.38}}}&{{\textbf{41.67}}}&{{\textbf{78.19}}}&{{\textbf{75.57}}}&{{\textbf{66.93}}}\\\hline
\end{tabular}
\label{buwanzhengjuleijieguo}
\end{table*}
\begin{figure*}[t]
\centering
\includegraphics[width=0.95\textwidth]{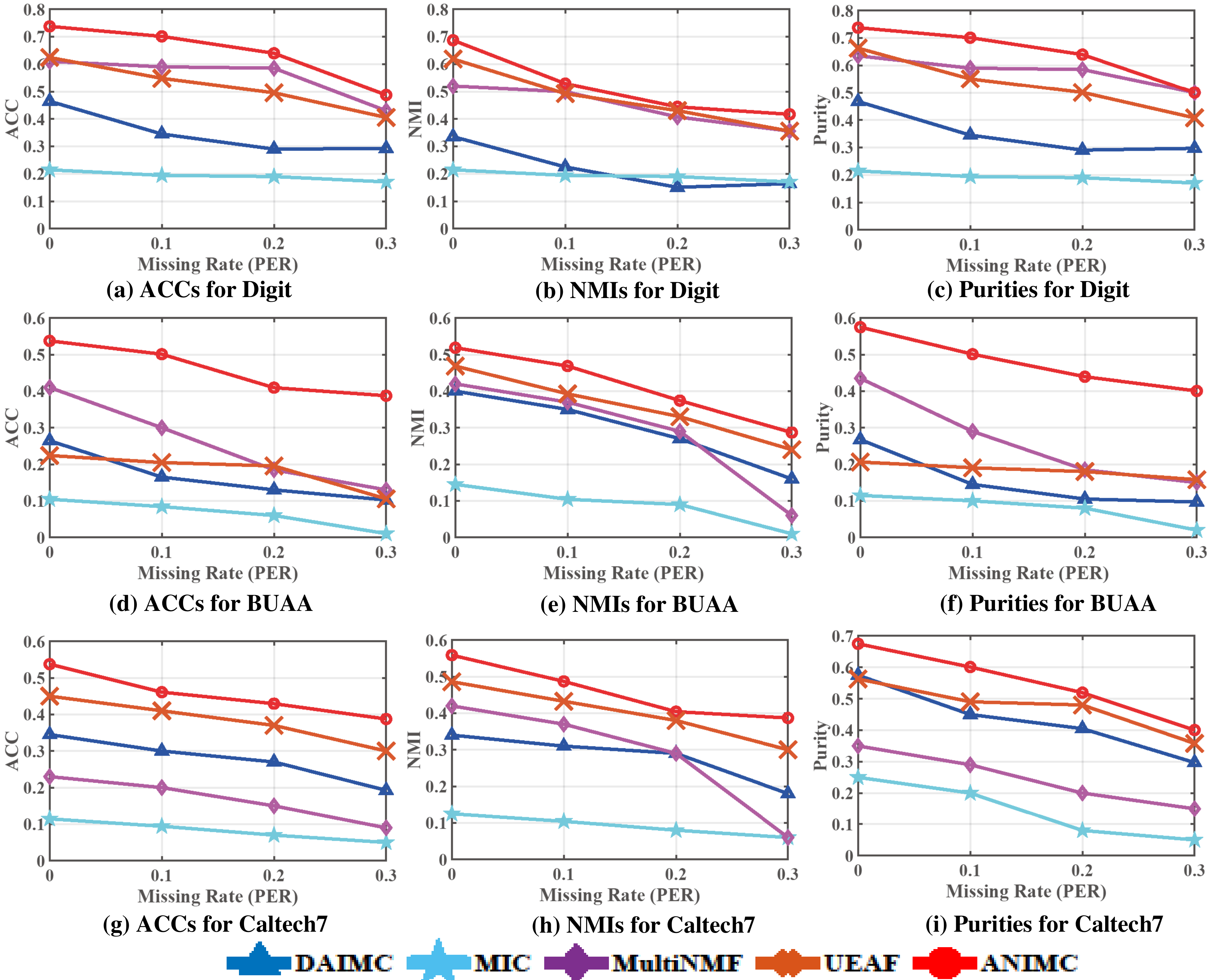}
\caption{Noisy and incomplete multi-view clustering on the Digit, BUAA, and Caltech7 datasets.}
\label{fig:AwIMC_noisy}
\end{figure*}
\begin{figure*}[t]
\centering
\subfigure[$0\leq r\leq 0.6$ (Digit)]{\label{fig:rbiangai1} \includegraphics[width=0.32\textwidth]{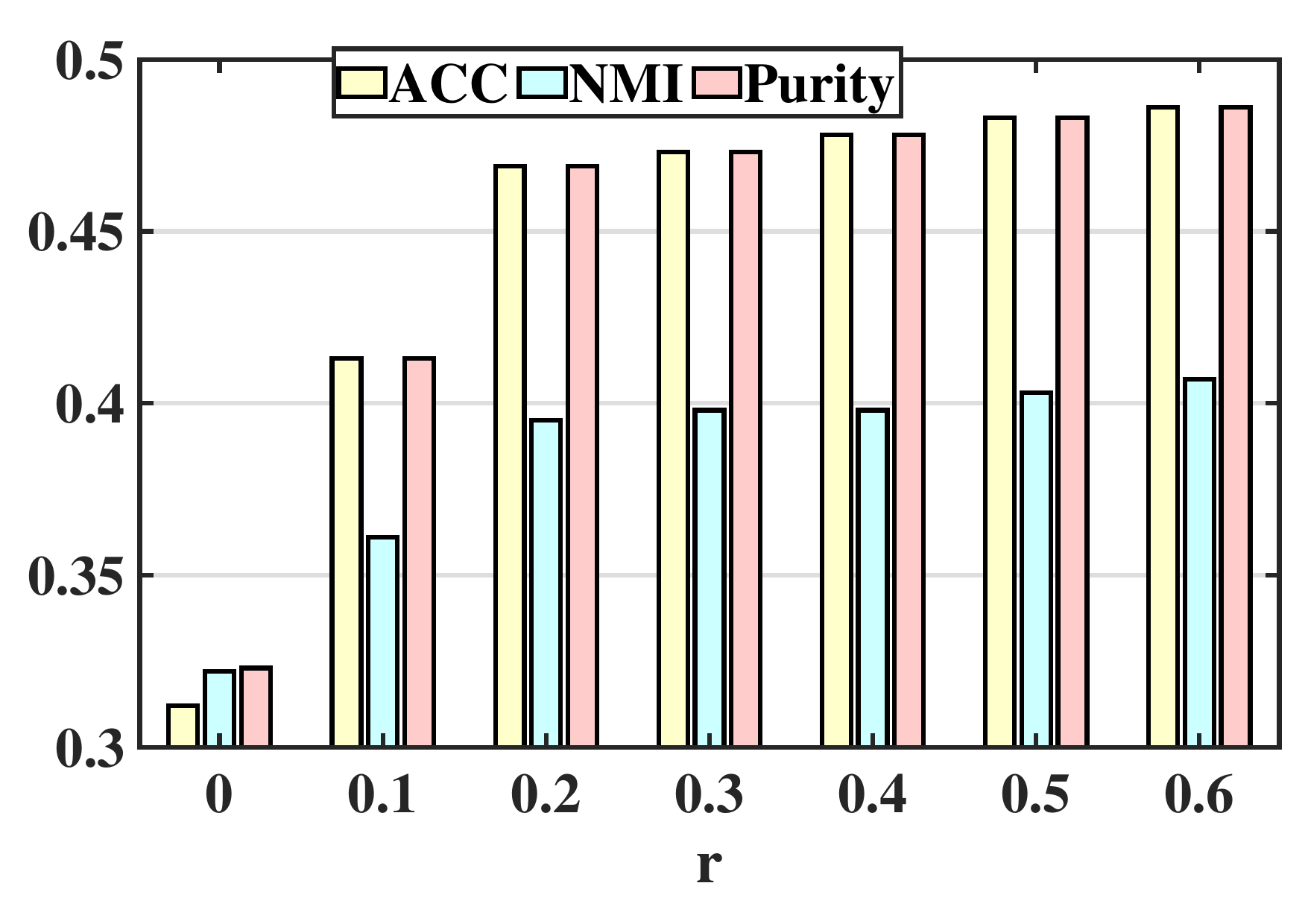}}
\subfigure[$0.7\leq r\leq 1.3$ (Digit)]{\label{fig:rbiangai2} \includegraphics[width=0.32\textwidth]{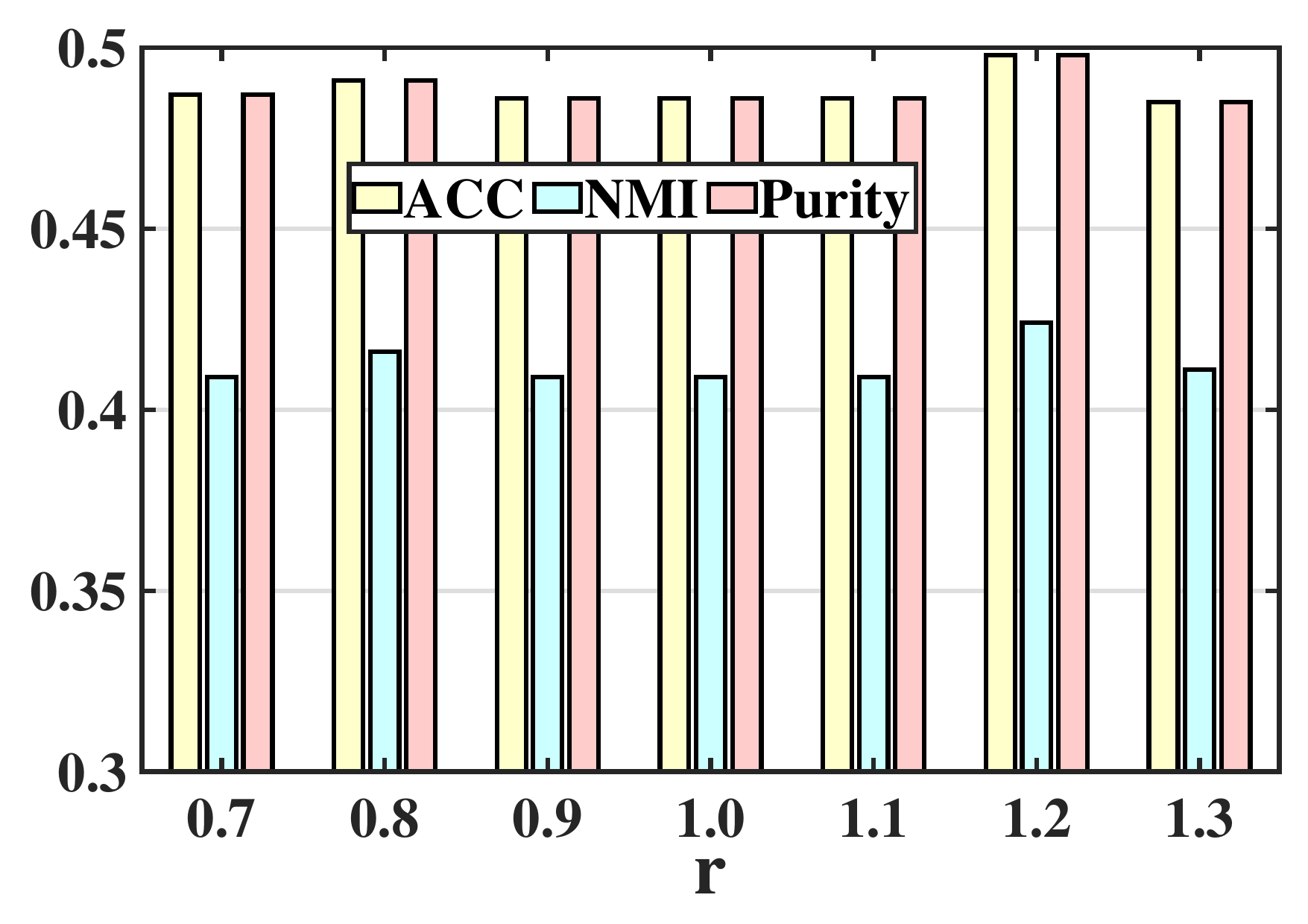}}
\subfigure[$1.4\leq r\leq 2.0$ (Digit)]{\label{fig:rbiangai3} \includegraphics[width=0.32\textwidth]{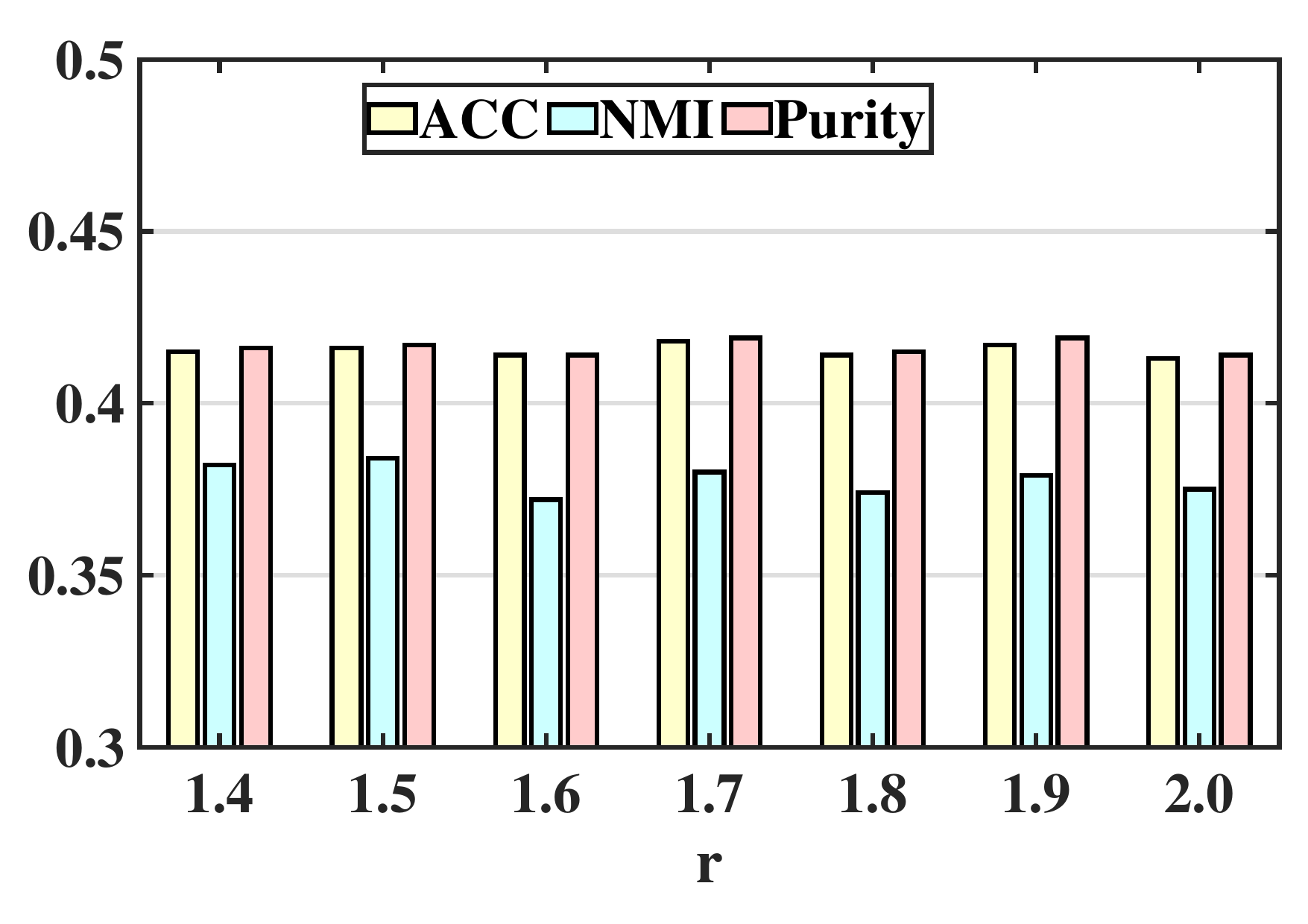}}
\subfigure[$0\leq r\leq 0.6$ (Scene)]{\label{fig:rbiangai1_scene} \includegraphics[width=0.32\textwidth]{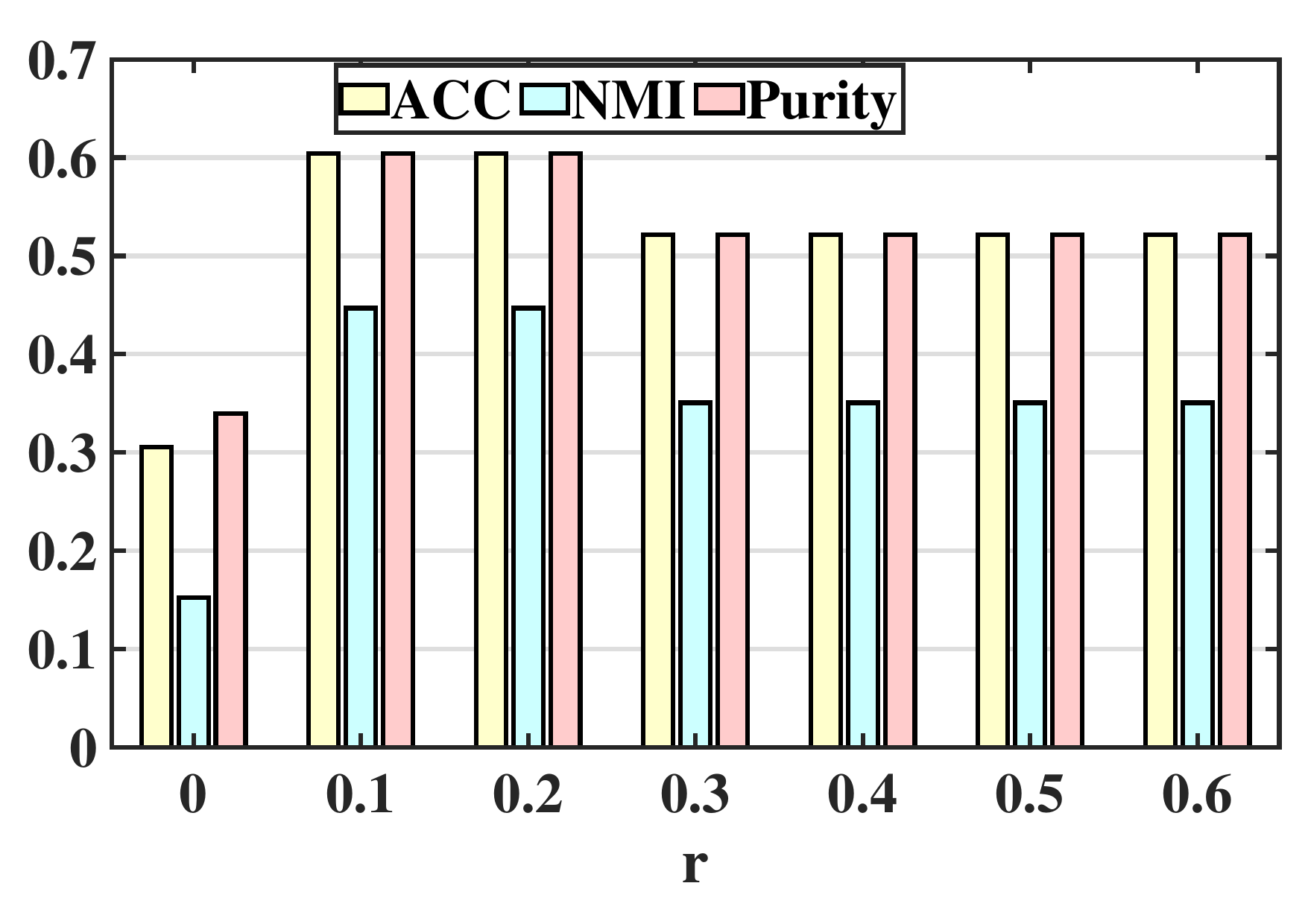}}
\subfigure[$0.7\leq r\leq 1.3$ (Scene)]{\label{fig:rbiangai2_scene} \includegraphics[width=0.32\textwidth]{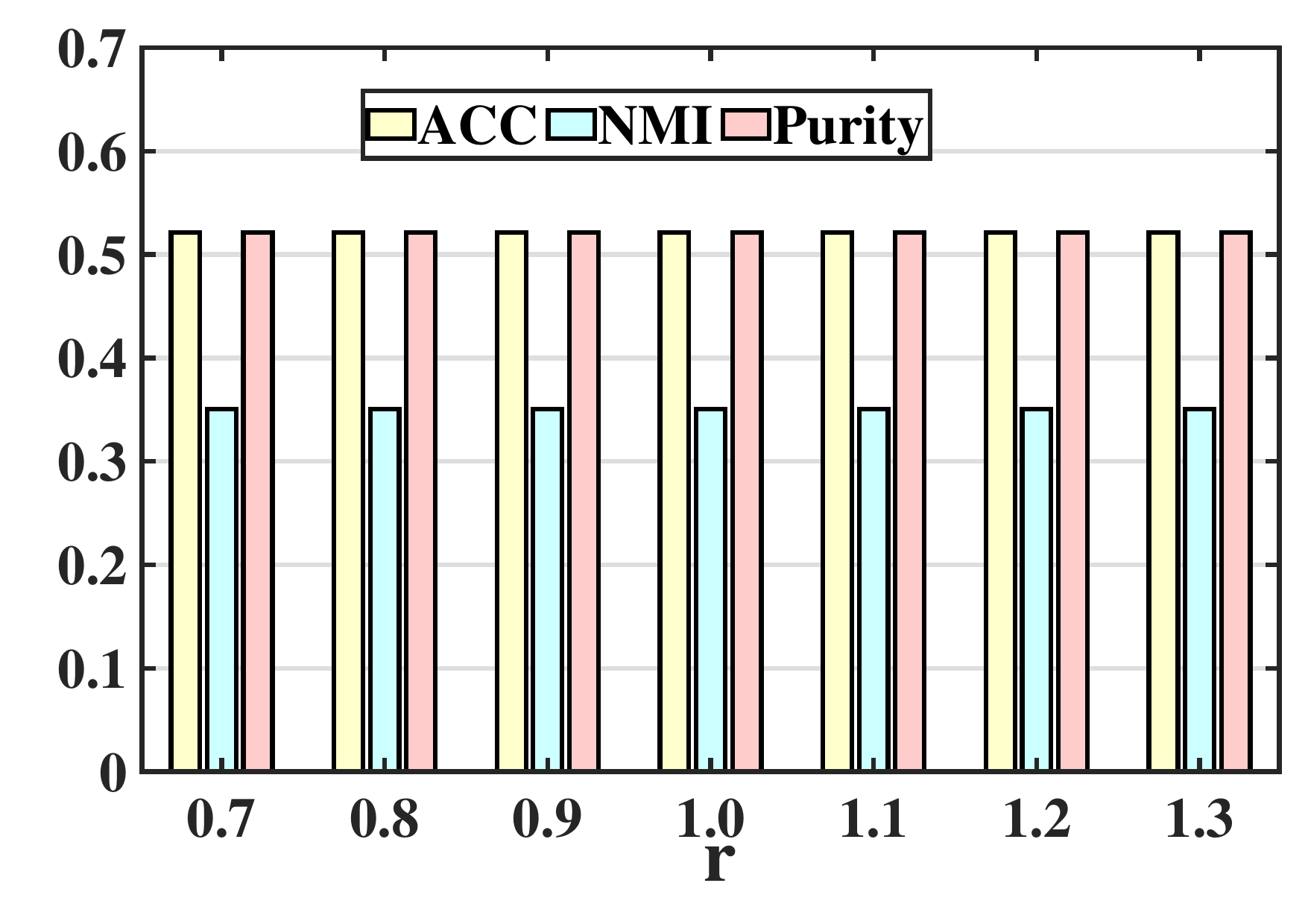}}
\subfigure[$1.4\leq r\leq 2.0$ (Scene)]{\label{fig:rbiangai3_scene} \includegraphics[width=0.32\textwidth]{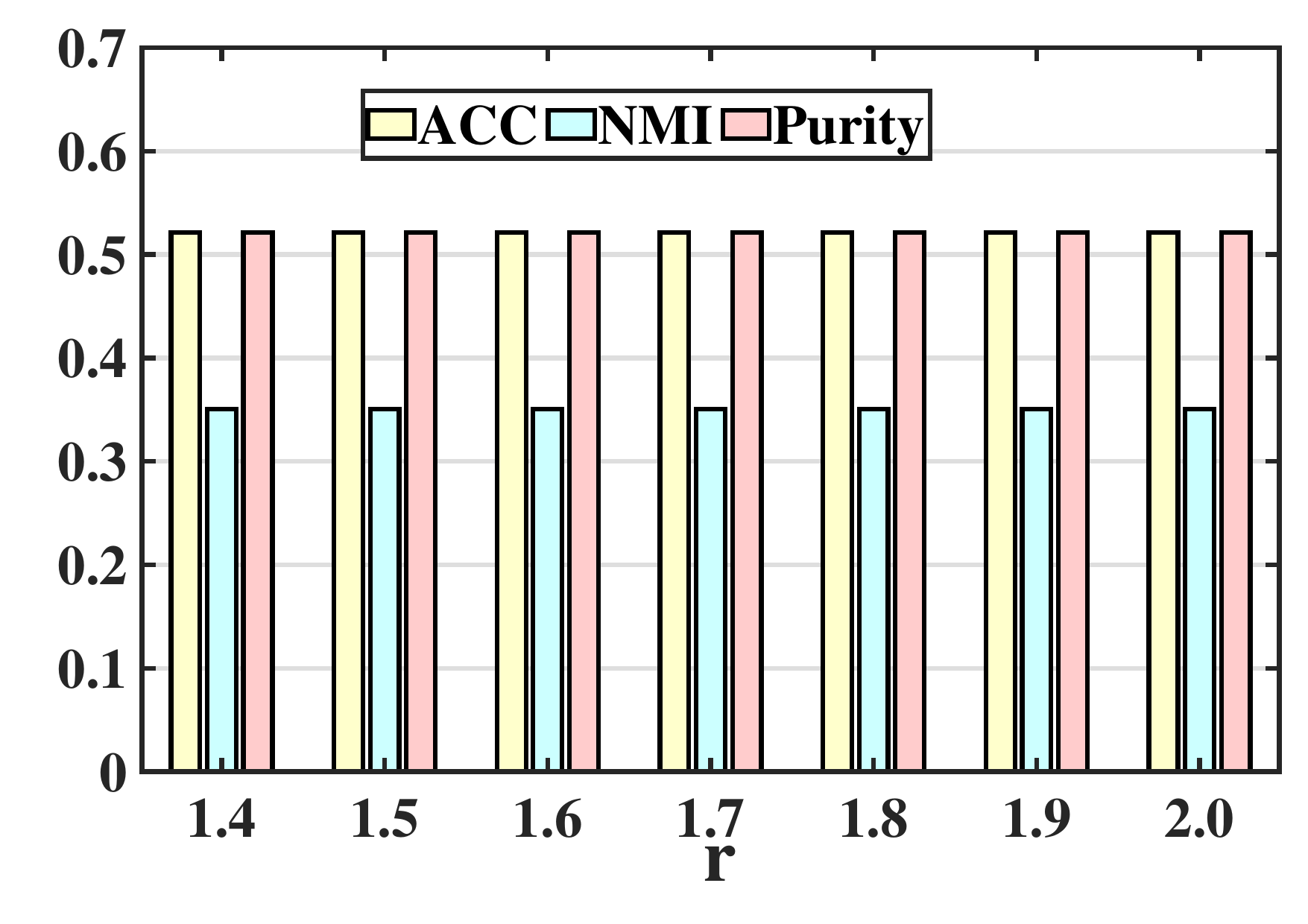}}
\caption{Noisy and incomplete multi-view clustering on the Digit and Scene datasets with different r.}
\label{fig:rbianhua}
\end{figure*}
\begin{figure*}[t]
\centering
\subfigure[PER=0]{\label{fig:shituduibi0} \includegraphics[width=0.32\textwidth]{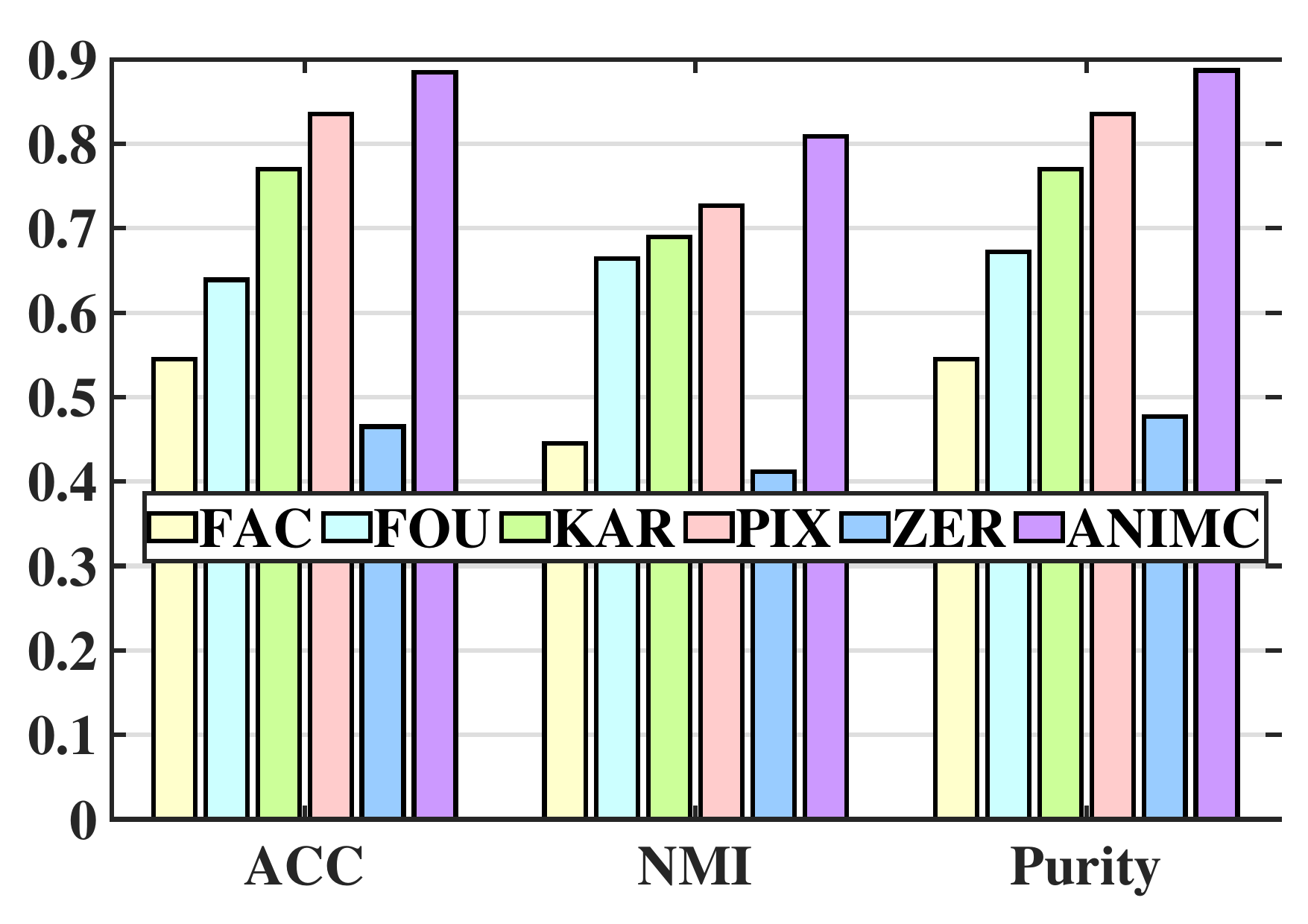}}
\subfigure[PER=0.1]{\label{fig:shituduibi1} \includegraphics[width=0.32\textwidth]{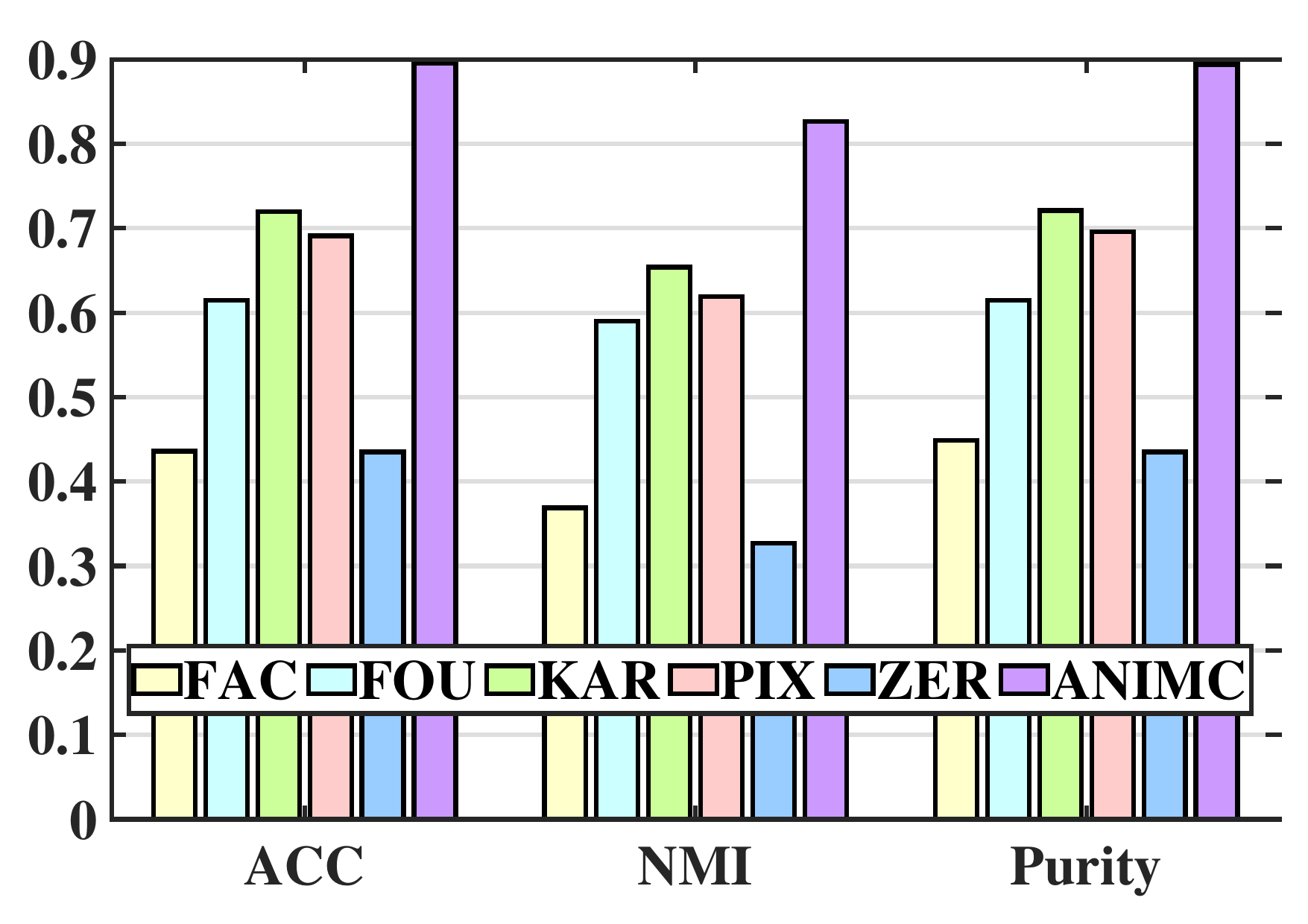}}
\subfigure[PER=0.2]{\label{fig:shituduibi2} \includegraphics[width=0.32\textwidth]{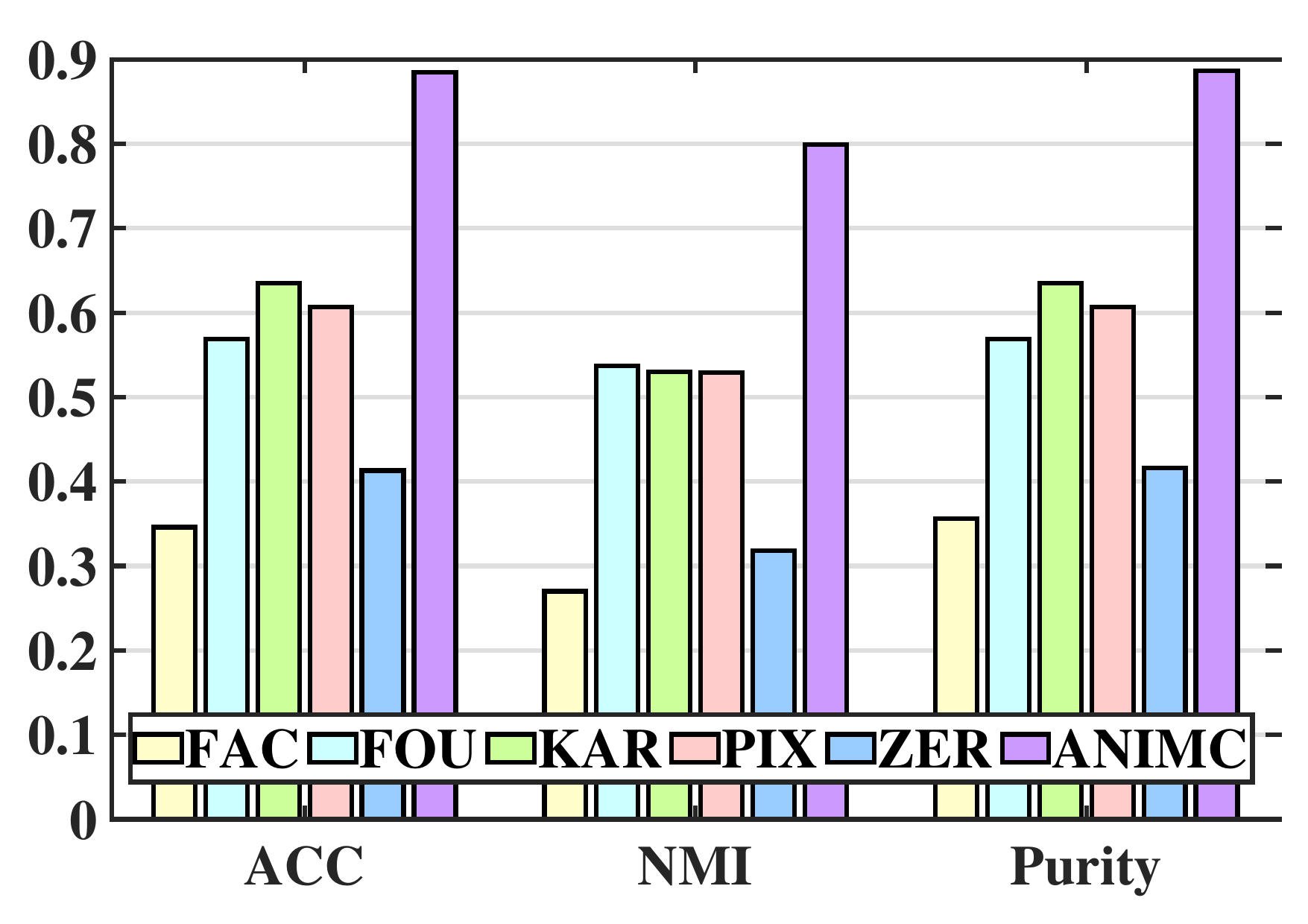}}
\subfigure[PER=0.3]{\label{fig:shituduibi3} \includegraphics[width=0.32\textwidth]{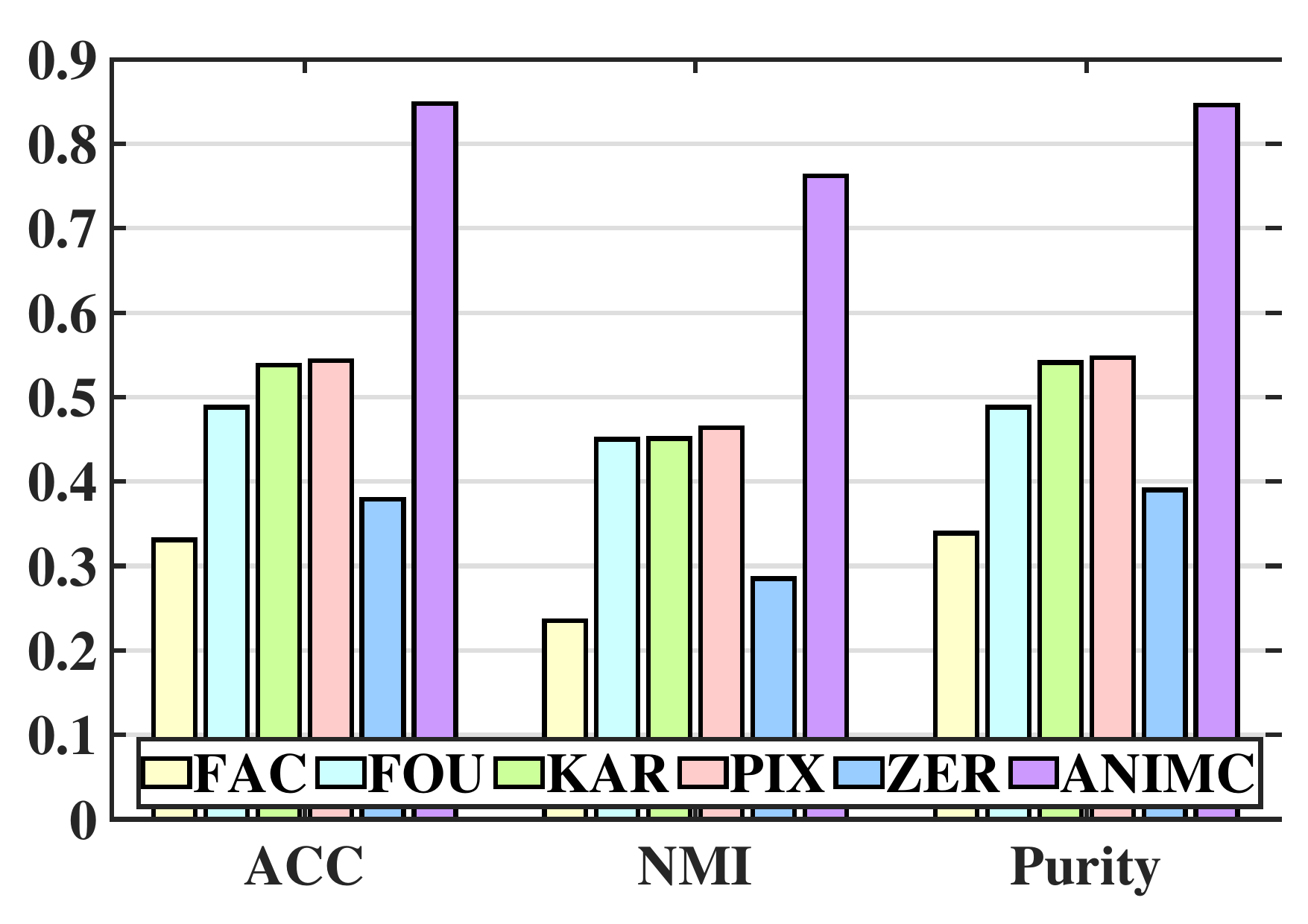}}
\subfigure[PER=0.4]{\label{fig:shituduibi4} \includegraphics[width=0.32\textwidth]{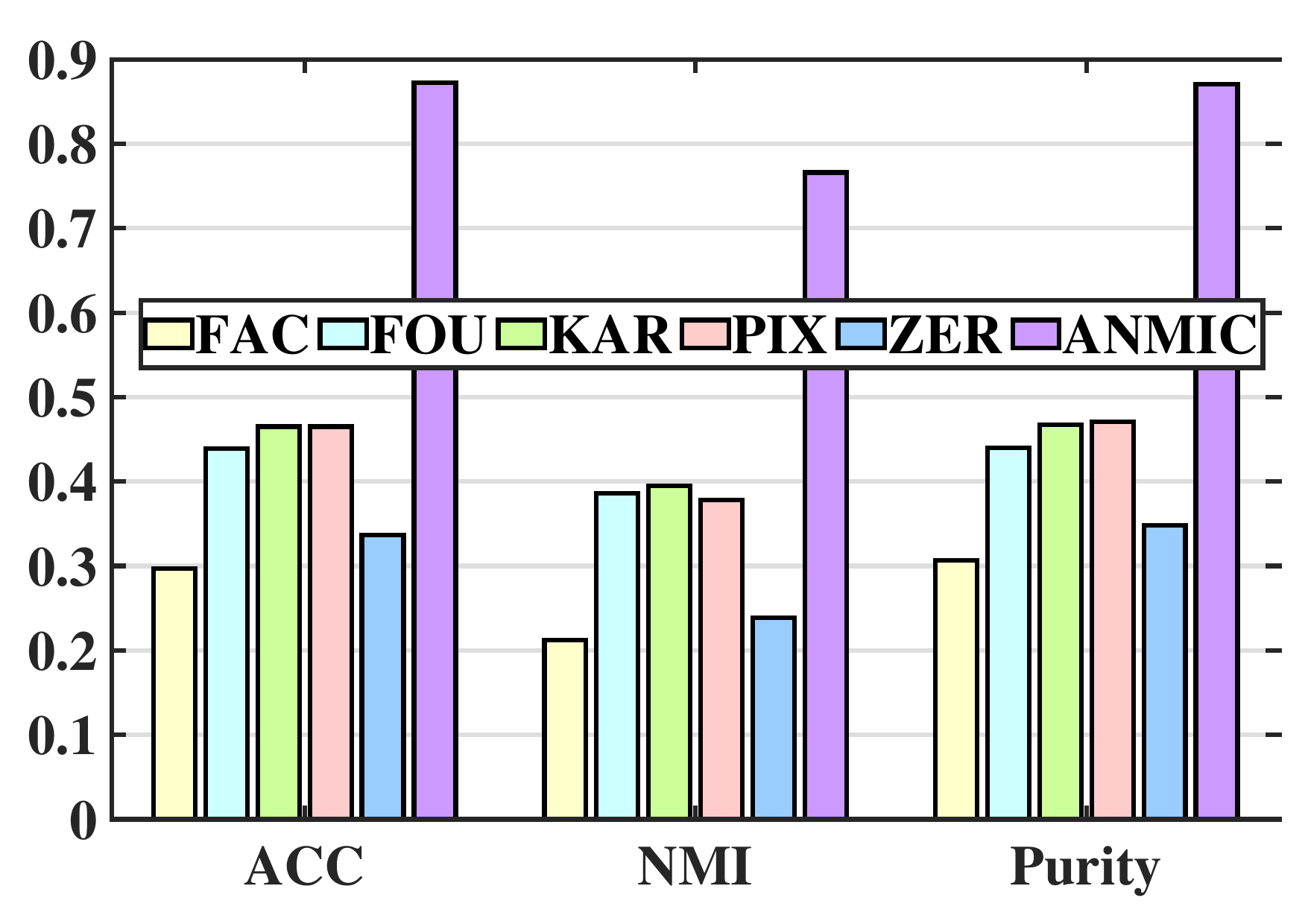}}
\subfigure[PER=0.5]{\label{fig:shituduibi5} \includegraphics[width=0.32\textwidth]{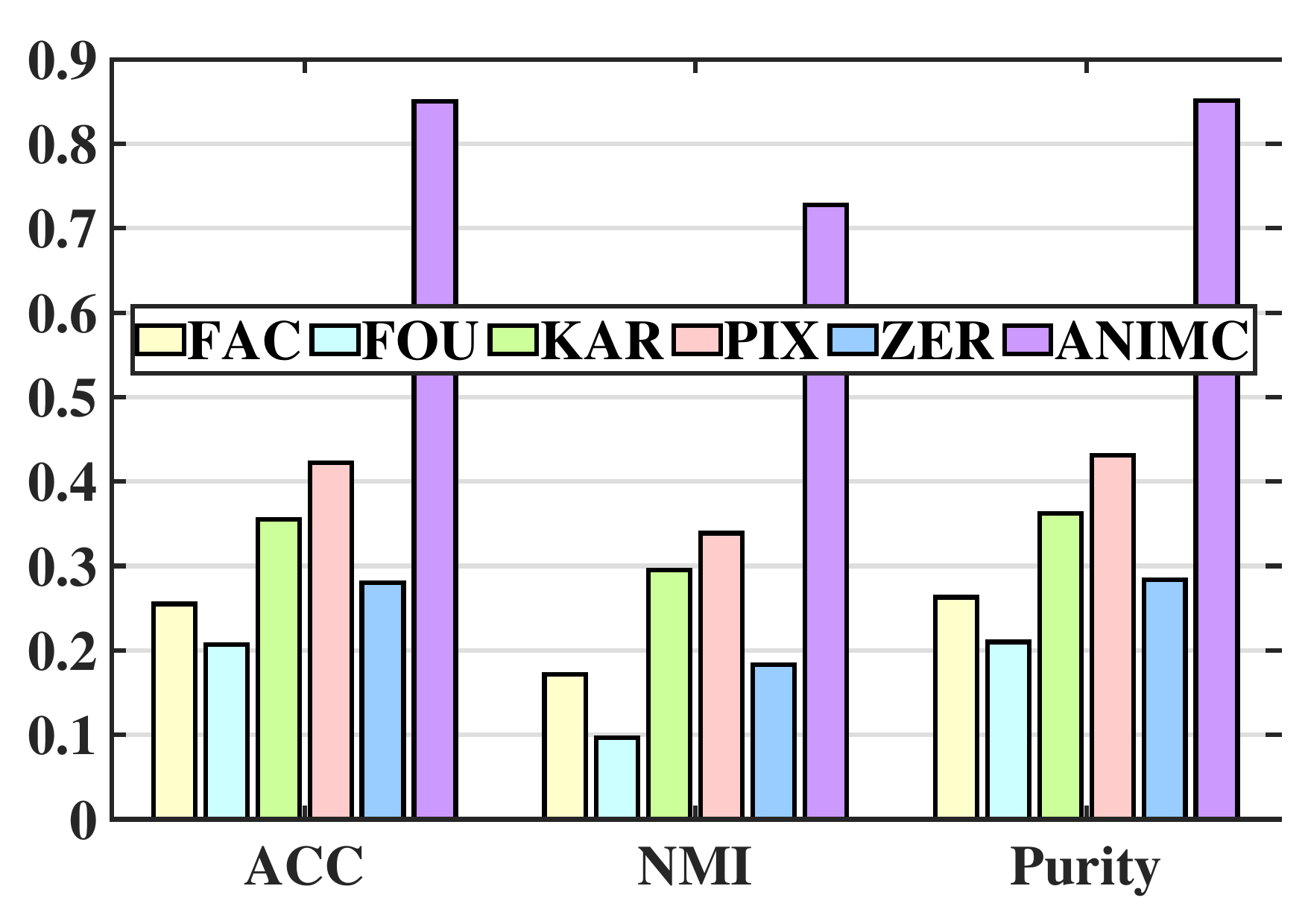}}
\caption{Availability comparison of different views for incomplete multi-view clustering on the Digit dataset.}
\label{fig:usability}
\end{figure*}
\begin{figure*}[t]
\centering
\subfigure[PER=0]{\label{fig:shituduibi0_scene} \includegraphics[width=0.32\textwidth]{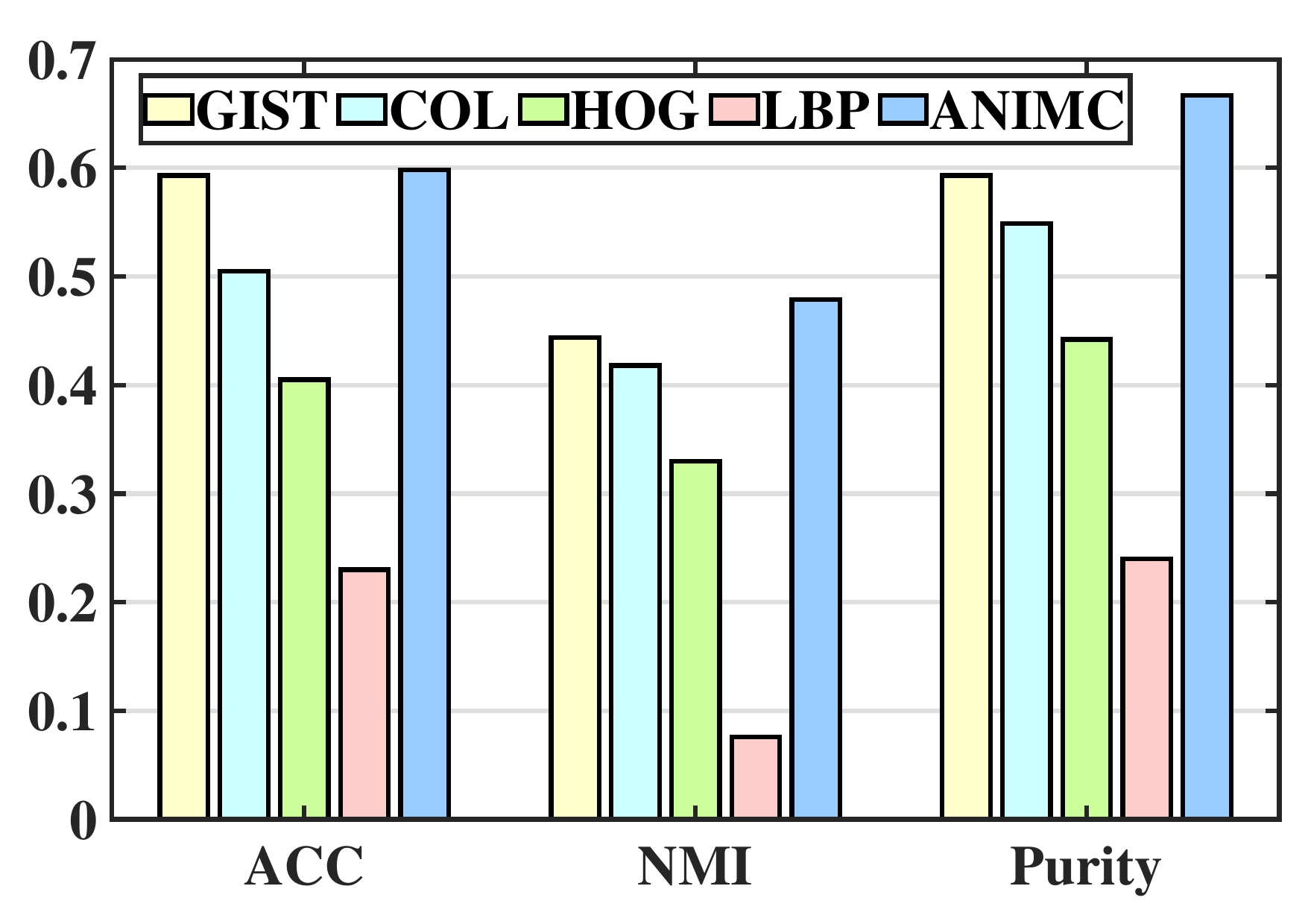}}
\subfigure[PER=0.1]{\label{fig:shituduibi1_scene} \includegraphics[width=0.32\textwidth]{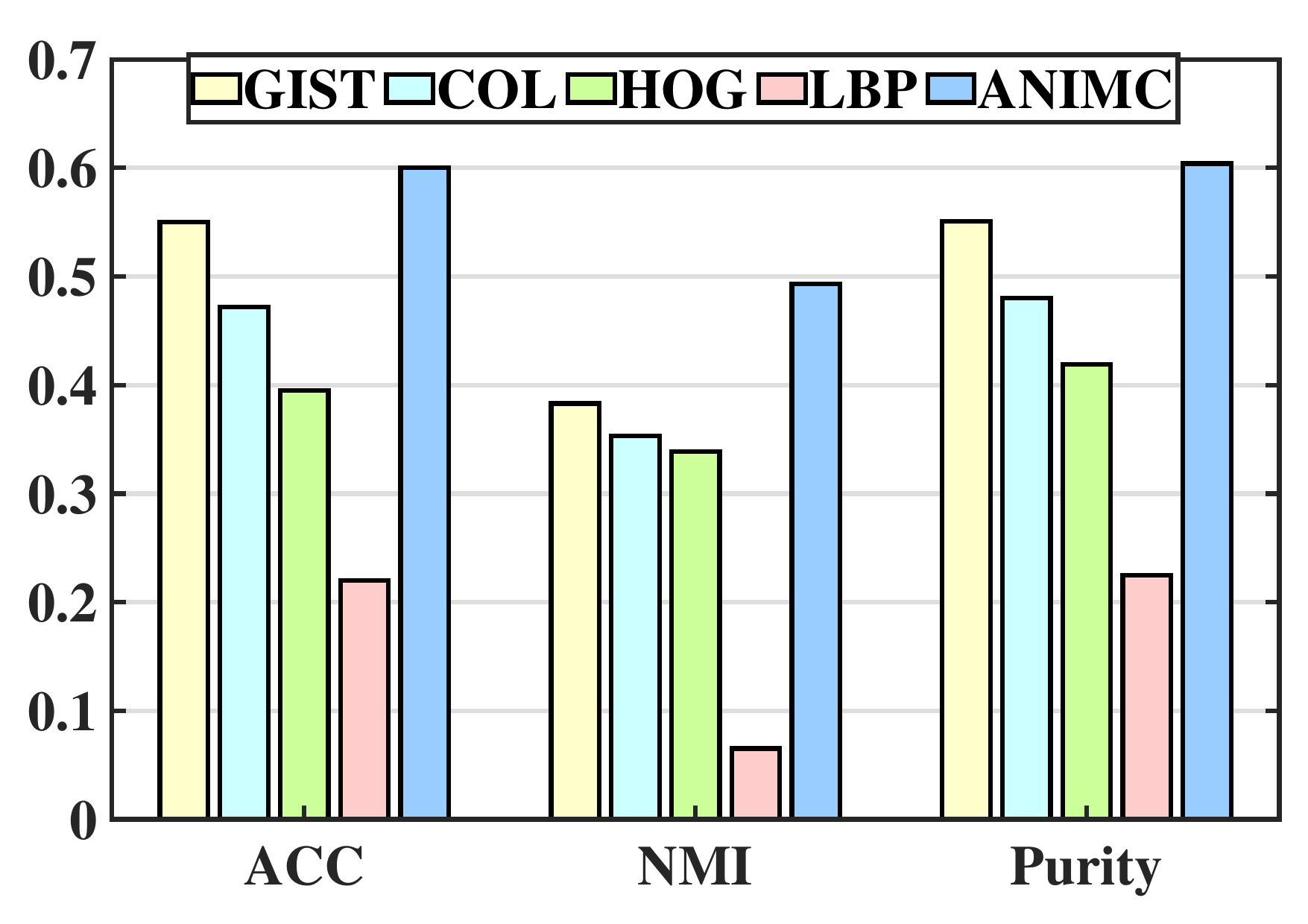}}
\subfigure[PER=0.2]{\label{fig:shituduibi2_scene} \includegraphics[width=0.32\textwidth]{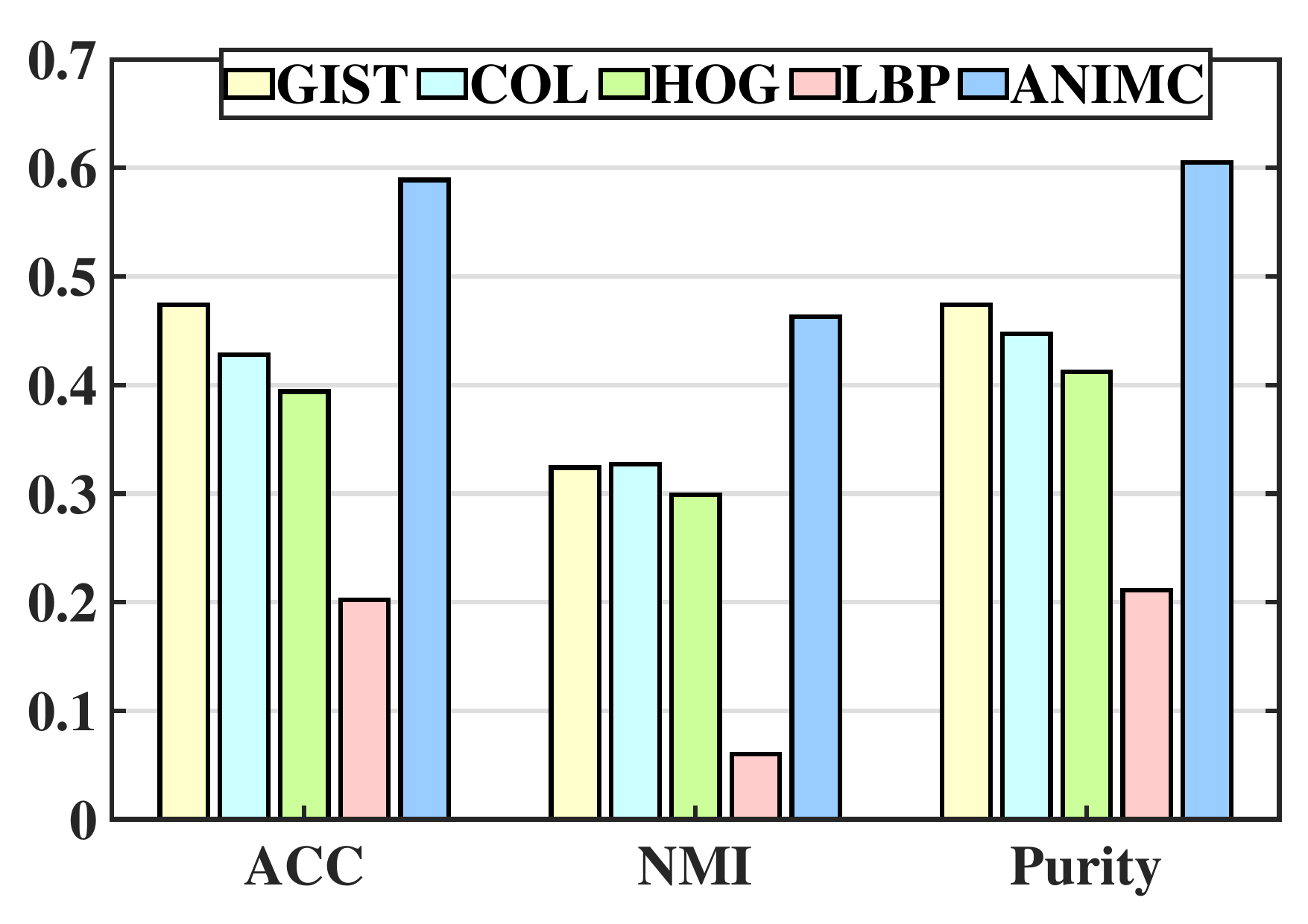}}
\subfigure[PER=0.3]{\label{fig:shituduibi3_scene} \includegraphics[width=0.32\textwidth]{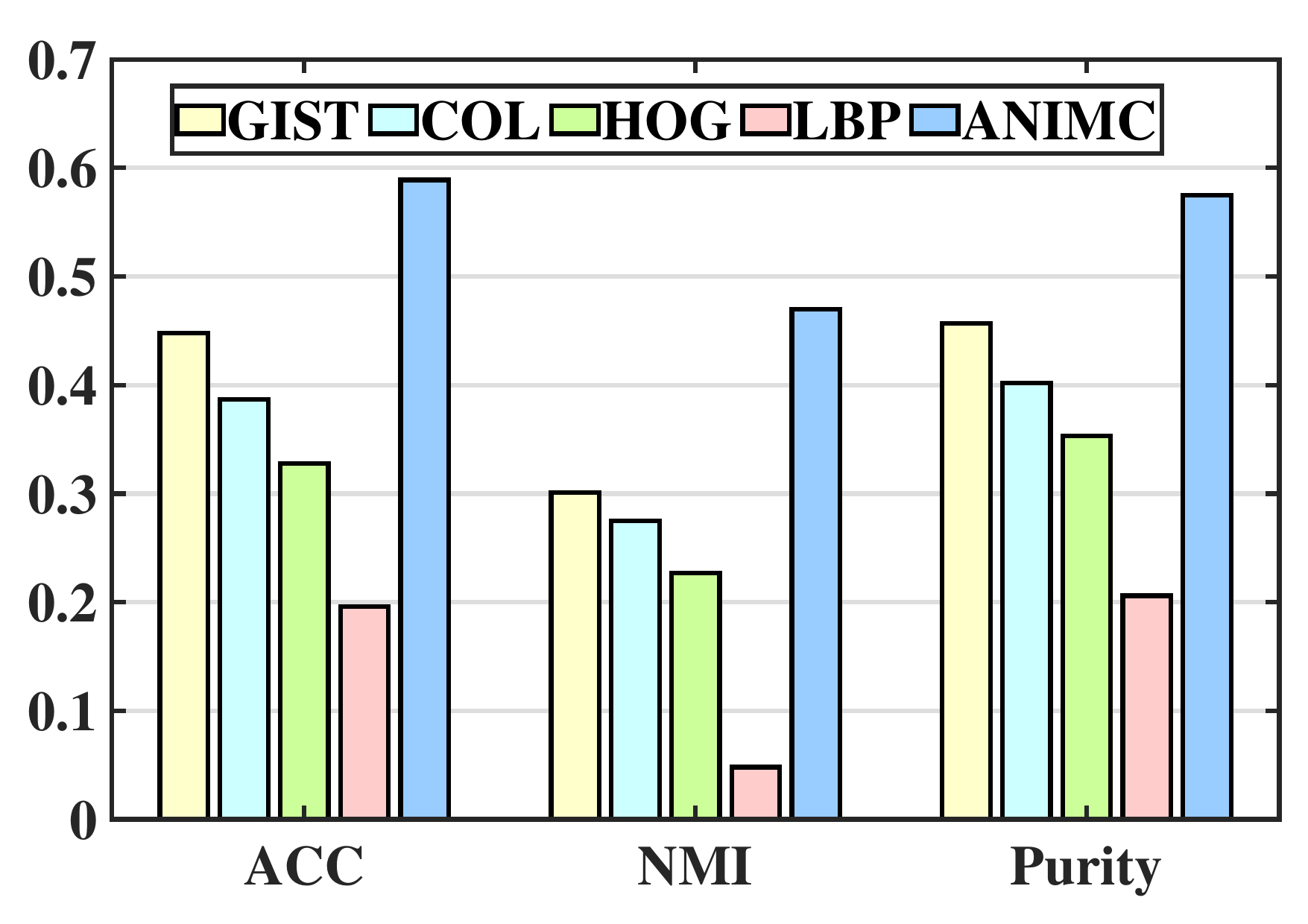}}
\subfigure[PER=0.4]{\label{fig:shituduibi4_scene} \includegraphics[width=0.32\textwidth]{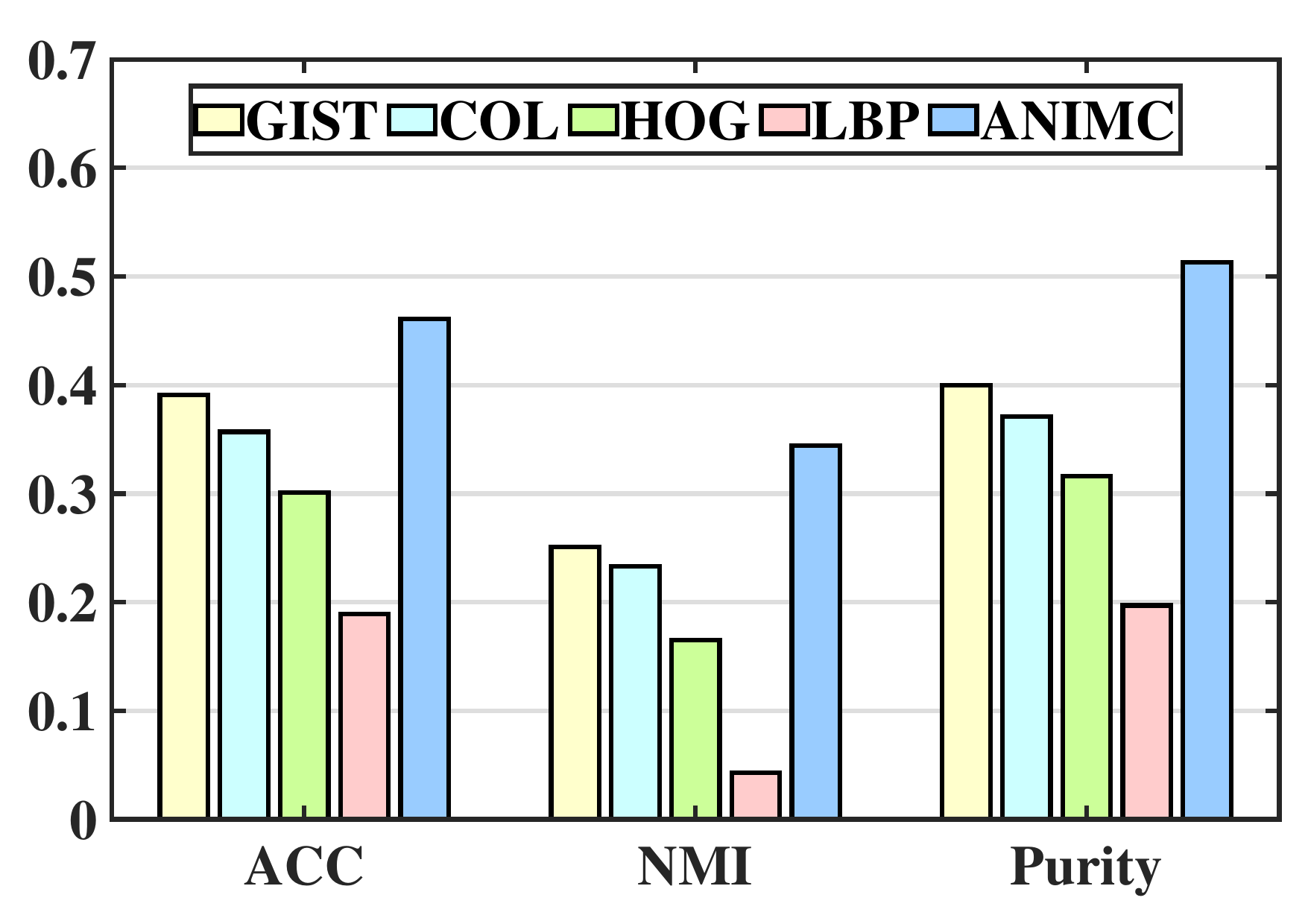}}
\subfigure[PER=0.5]{\label{fig:shituduibi5_scene} \includegraphics[width=0.32\textwidth]{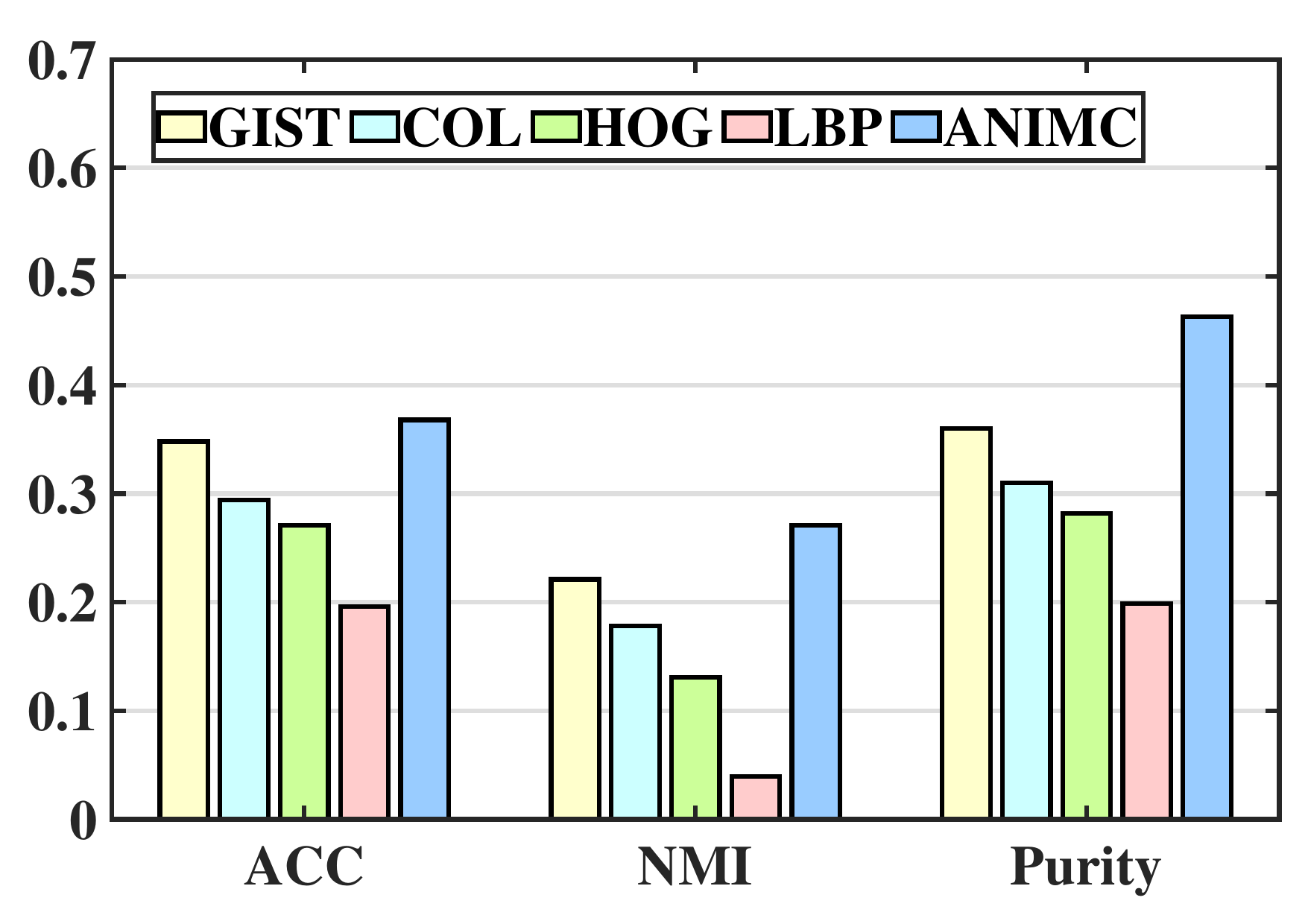}}
\caption{Availability comparison of different views for incomplete multi-view clustering on the Scene dataset.}
\label{fig:usability_scene}
\end{figure*}
To evaluate the effectiveness of our proposed ANIMC approach in real-world tasks, we conduct experiments on six real-world multi-view datasets as follows: BBCSport\footnote{\url{http://mlg.ucd.ie/datasets/segment.html}.} \cite{greene2006practical}, BUAA \cite{huang2012buaa}, Digit\footnote{\url{http://archive.ics.uci.edu/ml/datasets.html}.}, Scene \cite{monadjemi2002experiments}, Caltech7 \cite{cai2013multi}, and NH$\_$face \cite{wu2013constrained}, whose statistics are summarized in Table \ref{dataset}.
%Note that BUAA, Digit and Scene are multi-view image datasets, so we perform clustering on these image datasets to analyze the capabilities of ANIMC in image clustering. BBCSport is a text dataset and we cluster it to analyze the generalization ability of ANIMC.

Most multi-view clustering algorithms often cluster their subsets for simplicity. To compare fairly with these algorithms, we use the same subsets for clustering.
The detailed information of used datasets is as follows:

BBCSport: The original BBCSport dataset contains 737 documents (instances), which are described by 2-4 views and categorized into 5 clusters. Following \cite{wen2019unified}, we choose a subset with 544 instances and 2 views.

BUAA: The dataset is an image dataset, which is described by two views. Following \cite{wen2019unified}, we use its subset with 180 instances and 10 clusters.
%of the BUAA face database, which contains 90 visual images and 90 near infrared images of the first 10 clusters.
%All images were resized into an $10\times 10$ matrix and vectored in advance.

Caltech7: The dataset is a subset of the Caltech101 dataset. The dataset has 1474 instances consisting of 7 clusters. Each instance has 6 views.

Digit: It is a handwritten dataset. Following \cite{shao2015multiple}, we use its subset with 2000 instances, 10 clusters, and 5 views.

NH$\_$face: It is a movie dataset, which contains 4660 faces from 5 persons. Each face has 3 views.

Scene: The dataset is an outdoor Scene dataset. Following \cite{hu2020multi}, the dataset has 2688 images (instances) consisting of 8 clusters. Each image has 4 views.

\subsection{Compared Methods}\label{subsection:comp_meth}
%\noindent\textbf{4.2 Compared Methods}
We compare our proposed ANIMC approach with eleven state-of-the-art methods:
%We compare our proposed \textbf{V$^3$H} with eight state-of-the-art clustering methods:
%1) \textbf{DAIMC}~\cite{hu2018doubly} extends MIC based on weighted semi-NMF and $L_{2,1}$-Norm regularization regression;
%
%2) \textbf{IMG}~\cite{zhao2016incomplete} transforms the collected incomplete multi-view data to a complete representation in a latent space;
%
%3) \textbf{MIC}~\cite{shao2015multiple} extends MultiNMF based on weighted NMF and $L_{2,1}$-Norm regularization;
%
%4) \textbf{MultiNMF}~\cite{liu2013multi} extends NMF to multi-view scenes by jointing these views;
%
%5) \textbf{PVC}~\cite{li2014partial} aligns the same samples in  different views by constructing a latent subspace;
%
%6) \textbf{UEAF}~\cite{wen2019unified} learns a consensus representation for all views by extending MIC;
%where MultiNMF is the baseline method.
%We compare our proposed method AwIMC with the following state-of-the-art multi-view clustering methods:

1) \textbf{AGC$\_$IMC} \cite{wen2020adaptive} develops  a joint  framework  for  graph  completion  and  consensus  representation  learning.

2) \textbf{BSV}~\cite{zhao2016incomplete} first fills all missing instances in the average instance of each view, then implements K-means clustering on every single view, separately. Finally, BSV reports the best performance.

3) \textbf{Concat}~\cite{zhao2016incomplete} first combines all views into a single view by concatenating their view matrices. Then, Concat implements K-means clustering on the single view and reports the clustering results.

4) \textbf{DAIMC} \cite{hu2018doubly} extends MIC \cite{shao2015multiple} by designing weighted semi-NMF and $L_{2,1}$-norm regularized regression.

5) \textbf{EE-IMVC}~\cite{liu2019efficient} proposes a late fusion approach to simultaneously clustering and imputing the incomplete base clustering matrices.

6) \textbf{EE-R-IMVC}~\cite{liu2020efficient} improves EE-IMVC by incorporating prior knowledge to regularize the learned consensus clustering matrix.

7) \textbf{MIC} \cite{shao2015multiple} extends PVC via weighted NMF and $L_{2,1}$-norm regularization.

8) \textbf{MLAN}~\cite{nie2018auto} is a self-weighted framework for complete multi-view clustering. By learning the optimal graph, it performs clustering and local structure learning simultaneously.

9) \textbf{NMF-CC}~\cite{liang2020multi} aims to capture the intra-view diversity and learning many independent basis matrices in turn for a satisfactory clustering representation.

10) \textbf{UEAF}~\cite{wen2019unified} learns a consensus representation for all views by extending MIC.

%{\color{blue}{6) \textbf{MSGLMAIN}~\cite{kang2021structured} simultaneously considers graph structure, scalability, and out-of-sample problems by making use of the anchor idea, bipartite graph, and spectral graph property.}}

11) \textbf{UIMC} \cite{fang2021uimc} is the first effective method to cluster multiple views with different incompleteness.

Since NMF-CC and MLAN cannot directly handle incomplete multi-view data, following~\cite{hu2018doubly}, we fill the missing instances with average feature values. Note that our proposed ANIMC has two parameters ($\alpha$ and $\beta$), and we adjust them to obtain the best performance.
%For the parameter $r$, we set $r=1$ to obtain the incomplete multi-view clustering results and noisy and incomplete multi-view clustering results.

Following~\cite{li2014partial} and \cite{zhao2016incomplete}, we repeat each incomplete multi-view clustering experiment $10$ times to obtain the average performance. Following~\cite{wen2019unified}, we randomly delete some instances from each view to make these views incomplete and set the missing rate (PER) from $0.1$ to $0.5$ (each view has $50\%$ instances missing) with $0.2$ as an interval.

Following~\cite{wen2019unified}, we evaluate the experimental results by three popular metrics: Accuracy (ACC), Normalized Mutual Information (NMI), and Purity.
\cite{liang2020multi} provides the calculation method of these metrics.
For these metrics, the larger value represents better clustering performance.
All results of compared methods are produced by released codes, some of which may be inconsistent with published information due to different pretreatment processes. All the codes in the experiments are implemented in MATLAB R2019b and run on a Windows 10 machine with $3.30$ GHz E3-1225 CPU, $64$ GB main memory.

\subsection{Incomplete Multi-view Clustering}\label{subsection:result_incomp}
Table \ref{buwanzhengjuleijieguo} shows the ACC, NMI and Purity values on these real-world datasets with various PERs. As PER increases, the performance of each method often decreases.
Obviously, our proposed ANIMC significantly performs better than other state-of-the-art methods in all cases, which verifies its effectiveness and strong generalization ability. %Especially, when we cluster the BBCSport dataset with PER=0.3, ANIMC gains large improvements around $ 3.67\%$, $ 20.4\%$, and $9.89\%$ over the second-best algorithm (i.e., UIMC) in terms of ACC, NMI and Purity, respectively.
%Moreover, other state-of-the-art methods are difficult to maintain satisfactory clustering performance in all cases due to weak generalization ability. For example, although DAIMC can obtain a pretty clustering on Digit dataset, it performs poorly on the BUAA dataset.
\begin{figure*}[t]
\centering
\subfigure[Convergence Study]{\label{fig:digit_compare} \includegraphics[width=0.315\textwidth]{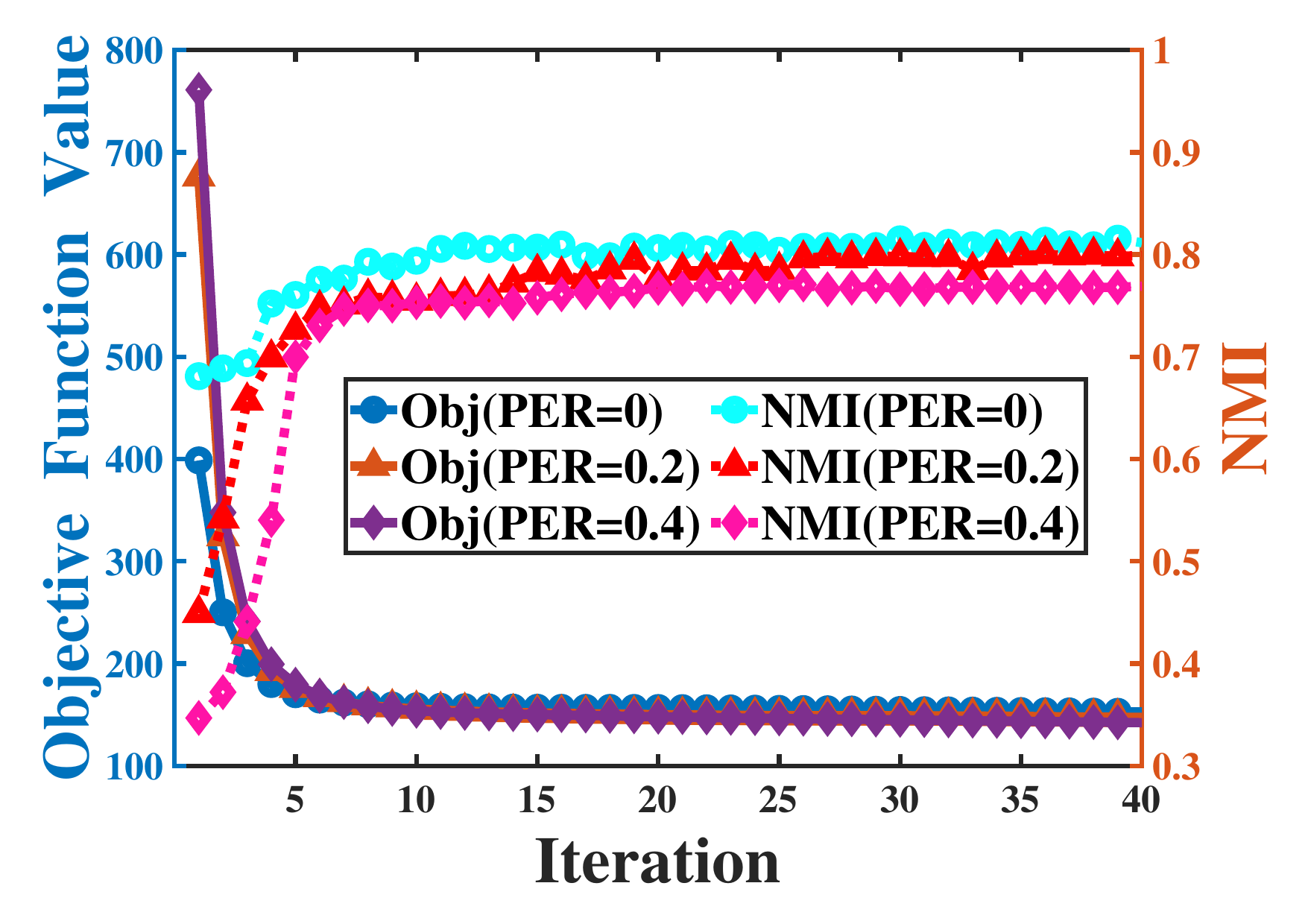}}
\subfigure[Convergence Comparison]{\label{fig:converage_com} \includegraphics[width=0.315\textwidth]{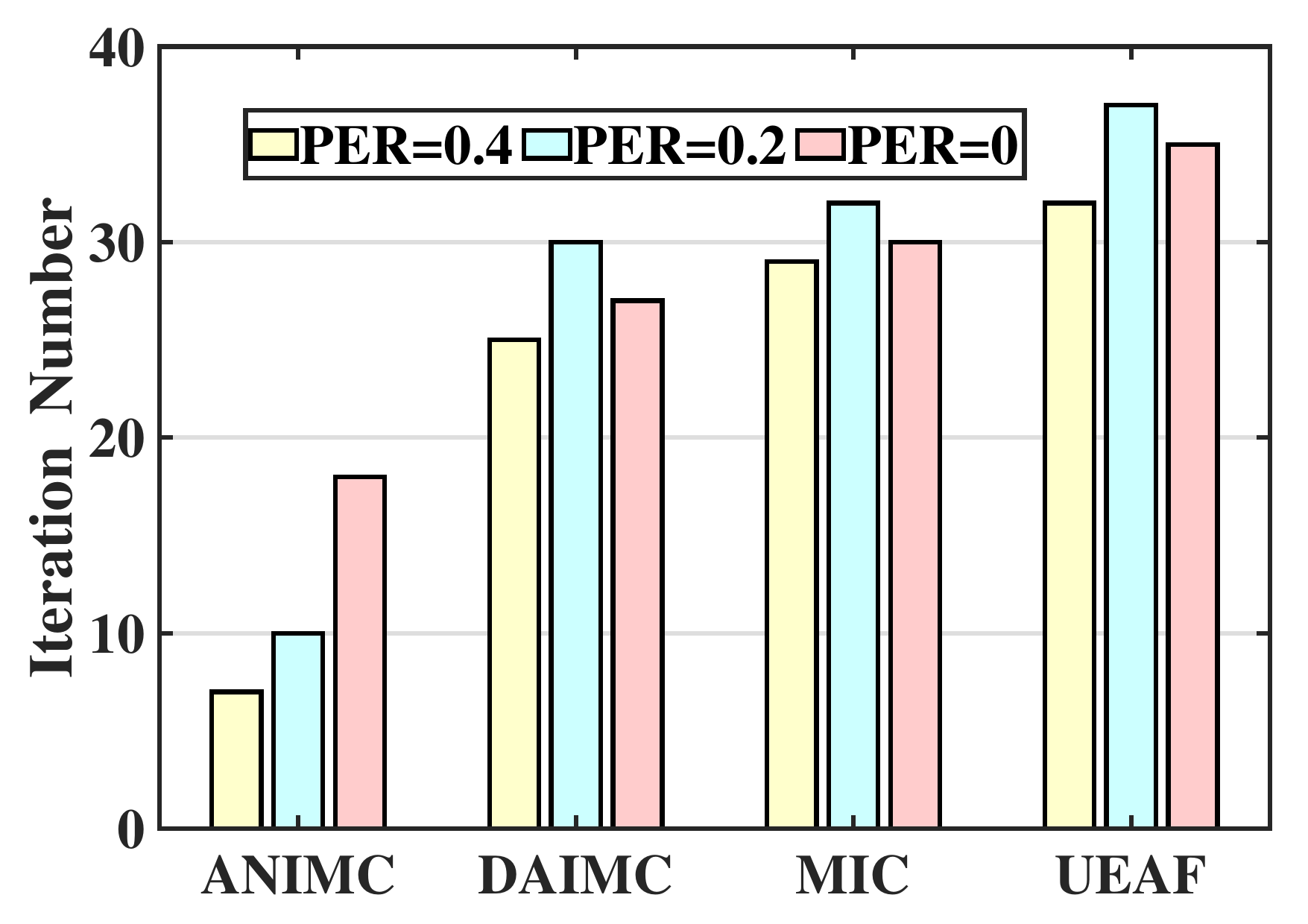} }
\subfigure[Parameter Study]{\label{fig:nmi_1} \includegraphics[width=0.315\textwidth]{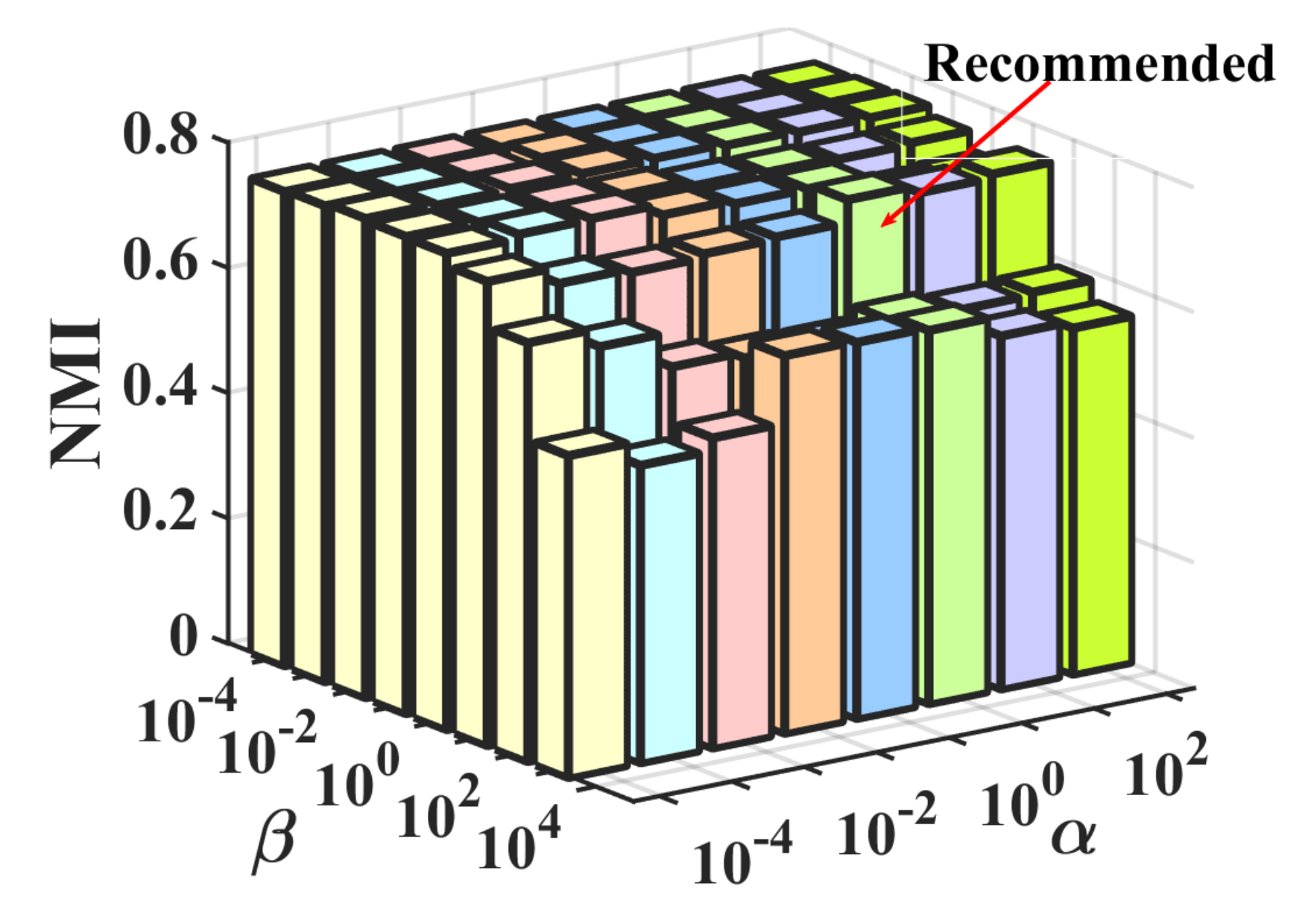} }
%%\subfigure[]{\label{fig:acc_3d} \includegraphics[width=0.23\textwidth]{ACC_3D}}
%\subfigure[Study for ($\beta=1$)]{\label{fig:nmi_1} \includegraphics[width=0.225\textwidth]{figure/b1} }
%%\subfigure[]{\label{fig:acc_3d} \includegraphics[width=0.23\textwidth]{ACC_3D}}
%\subfigure[Study for ($\beta=100$)]{\label{fig:nmi_100} \includegraphics[width=0.225\textwidth]{figure/b100} }
\caption{Convergence and parameter study on the Digit dataset.}
\label{fig:3D_b100}
\end{figure*}
\begin{table*}[ht]
\centering
\caption{Performance of our proposed Soft Boundary for noisy and incomplete multi-view clustering on the Digit dataset, where ``ANIMC(SB)'' denotes our ANIMC with Soft Boundary, ``ANIMC(woSB)'' denotes our ANIMC without Soft Boundary, and `ANIMC(SW)'' denotes our ANIMC with $w_v=1$. \textbf{Bold} numbers denote the best results.}
\scalebox{0.9}{
\begin{tabular}{c|ccc|ccc|ccccc}
\hline
%\multirow{2}*{PER}&BBC&REU&GUA
\multirow{2}*{PER}&\multicolumn{3}{c}{ACC (\%)}&\multicolumn{3}{|c|}{NMI (\%)}&\multicolumn{3}{c}{Purity (\%)}\\\cline{2-10}
~& ANIMC(SB) &ANIMC(woSB)&ANIMC(SW)& ANIMC(SB) &ANIMC(woSB)&ANIMC(SW)& ANIMC(SB) &ANIMC(woSB)&ANIMC(SW)\\\hline
0&\textbf{73.75}&72.63&68.52&\textbf{68.70}&66.09&64.26&\textbf{73.75}&72.03&70.52\\
0.1&\textbf{70.10}&69.87&67.31&\textbf{52.87}&51.74&47.96&\textbf{70.10}&68.31&65.14\\
0.2&\textbf{63.95}&60.88&52.94&\textbf{44.42}&40.48&38.92&\textbf{63.95}&60.17&57.64\\
0.3&\textbf{48.75}&44.93&41.27&\textbf{41.69}&37.61&35.98&\textbf{50.05}&47.35&45.71\\
0.4&\textbf{30.67}&25.39&21.30&\textbf{37.26}&36.94&33.02&\textbf{42.19}&40.16&39.15\\
0.5&\textbf{20.54}&13.75&11.06&\textbf{29.53}&28.55&26.69&\textbf{30.58}&28.39&26.71\\\hline
\end{tabular}
}
\label{ruanbianjieduibi}
\end{table*}

\begin{table}
\centering
\caption{Running time (seconds) of different methods for noisy and incomplete multi-view clustering on all datasets (missing rate=0.5, noise rate=0.2, and noise variance=0.1).}
\scalebox{0.9}{
\begin{tabular}{c|ccccccccc}
\hline
Method  &  BBCSport & BUAA  & Caltech7 &Digit& NH-face& Scene \\\hline
AGC$\_$IMC&26.37&43.80&54.71&38.45&107.94&28.76\\
BSV& 9.83&1.04&8.16&6.84&83.54&10.87 \\
Concat&11.27&1.63&10.21&8.74&90.49&12.36 \\
DAIMC&60.88&39.46&27.81&32.94&208.76&52.85  \\
EE-IMVC&65.53&27.31&26.42&48.59&284.76&20.48\\
EE-R-IMVC&77.24&38.66&19.53&52.87&377.40&15.83\\
MIC&50.71&16.82&31.60&40.72&384.83&37.15\\
MLAN&30.18&7.60&6.83&8.64&116.72&25.56\\
NMF-CC&73.55&10.82&8.54&35.72&120.94&33.67\\
UEAF&40.96&84.57&43.88&24.50&130.26&30.49 \\
UIMC&31.86&90.85&72.61&20.53&195.47&23.75\\
Our ANIMC&33.62&51.84&25.06&30.81&174.62&23.18  \\
\hline
\end{tabular}
}
\label{yunxingshijian}
\end{table}
%Reasons:
%1.fluctuation: the average of ACC and NMI. As the missing rate increases, the evaluation indicators of different views decrease at different speeds, which also indicates that different views should be given different weights.
%2.digit: 5 views, AwIMC and DAIMC can deal with well. Seim-NMF and the regression of basis matrix
%3.bbcsport: the regression of latent feature matrix
%4.buaa: the regression of latent feature matrix, high Incomplete Rate influence the regression.
%5.scene: 4 view, low Incomplete Rate the weight (0-0.3) view do not get a good weight in DAIMC; get a good view in AwIMC. High Incomplete Rate the weight (0.3-0.5) influence learning weight.
%%

%(1) Due to similar performance (ACC and NMI) in the four datasets, our comparison methods can be divided into the following three groups: {PVC, IMG}, {MIC, DAIMC}, {Multi-NMF}. This phenomenon can be explained by these algorithm structure. IMG extends PVC by considering the global structure in the latent space. DAIMC improves MIC with the help of weighted Semi-NMF and regression theory. Multi-NMF can only solve incomplete multi-view clustering problems in specific experimental settings, and it is difficult to apply directly to real-world applications.
%(2) On Digit dataset ($5$ views) and Scene dataset ($4$ views), DAIMC, MIC and our AwIMC are significantly superior to several other algorithms.
%This shows that in the task of incomplete multi-view clustering, the double-view clustering algorithm still has limitations because it ignores the information of some views.
%\paragraph{BBCSport:}
\noindent\textbf{BBCSport:} As shown in Table \ref{buwanzhengjuleijieguo},
compared with the other methods, MLAN often becomes the first worst. For example, when PER=0.5, compared with the other methods, MLAN reduces the performance at least about $1.98\%$ in ACC, $0.28\%$ in NMI and $2.35\%$ in Purity, respectively. The main reason is that although it can automatically learn the proper weights for each view from a complete multi-view dataset, MLAN does not effectively integrate incomplete views. Therefore, the self-weighting strategy in MLAN cannot be directly used for incomplete multi-view clustering.
%It is because PVC simply projects instances from each view into a common subspace, which overlooks the global information among the two views. Since each view of the BBCSport dataset has a large number of features, if the global information is not used effectively, it is difficult to obtain satisfactory clustering results. Thus, when the missing rate is small, the clustering results of PVC are still poor.
Obviously, our proposed ANIMC outperforms all the other methods significantly for various missing rates. Specifically, relative to DAIMC, when the missing rate is relatively small (PER=0.1), ANIMC improves ACC by at least $25.33\%$, NMI by at least $37.10\%$ and Purity by at least $18.86\%$; when the missing rates become larger (PER=0.5), ANIMC still improves ACC by at least $14.16\%$, NMI by at least $13.97\%$ and Purity by at least $16.52\%$.
One reason for ANIMC's outstanding performance is that each view of the BBCSport dataset has many features, and ANIMC can effectively integrate these features by minimizing the disagreement between the common latent feature matrix and the common consensus.
%{\color{blue}{This also shows that it is unreliable to only perform $L_{2,1}$-norm regularized regression for .}}
%Compared with the state-of-the-art incomplete multi-view clustering UEAF,

%the BBCSport dataset contains 5 views. AwIMC effectively uses information from different views, reduces the impact of missing samples, and obtains better experimental results.
%\paragraph{BUAA:}
\noindent\textbf{BUAA:} In the BUAA dataset, DAIMC and UEAF may obtain worse clustering results than other methods.
For instance, when PER=0.3, compared with DAIMC and UEAF, NMF-CC raises the performance at least about $1.15\%$ in ACC, $1.14\%$ in NMI and $2.50\%$ in Purity, respectively. This is mainly because the BUAA dataset contains a large number of noises that cause the learned basis matrix $\bm{U}^{(v)}$ to deviate from the true value, thereby hurting the clustering performance of DAIMC and UEAF.
ANIMC performs much better than the other methods for all various missing rates. When PER=0.5, ANIMC raises ACC by at least $4.63\%$, NMI by at least $3.79\%$ and Purity by at least $7.35\%$, relative to the compared methods. The reason is that our proposed adaptive semi-RNMF model can assign a smaller weight to the view with more noises (larger deviate for $\bm{U}^{(v)}$) to efficiently reduce the impact of noises.
%, ANIMC still obtains satisfactory clustering results.
%The possible reason for this phenomenon is similar to that in BBCSport dataset.
%; when the incomplete rate becomes too large (e.g., 0.4), the performance of AwIMC is almost the same as PVC.
%\paragraph{Digit:}

\noindent\textbf{Caltech7:} Compared with EE-IMVC and EE-R-IMVC, both BSV and Concat perform unsatisfactorily. For example, when PER=0.3, Concat decreases the performance at least about $29.42\%$ in ACC, $17.59\%$ in NMI and $30.86\%$ in Purity, respectively. It is because Concat simply concatenates these views, which makes it difficult to learn consistent information between views.

\noindent\textbf{Digit:} As PER increases, the clustering performance of MIC drops significantly.
Because MIC only simply fills the missing samples with average feature values, which will result in a serious deviation for the dataset with high incompleteness. Thus, this simply filling cannot effectively solve the incomplete multi-view clustering problem.
Obviously, our proposed ANIMC performs much better than other methods in all cases. Especially, when PER=0.1, compared with AGC$\_$IMC and UIMC, ANIMC improves ACC by at least $2.85\%$, NMI by at least $2.47\%$ and Purity by at least $2.68\%$. It is because ANIMC assigns a weight to each view for satisfactory performance.
%Secondly, ANIMC and DAIMC have close performance, and the difference between them is less than $1\%$. The reason is that Digit dataset is sparse and the sparse models ($L_{2,1}$-norm regularized regression in DAIMC and doubly soft regularized regression in ANIMC) can learn the effective information of the dataset for clustering. This close performance also shows that doubly soft regularized regression is not inferior to $L_{2,1}$-norm regularized regression.
%ANIMC inherits the strength of DAIMC with ability of regularized regression.
%\paragraph{Scene:}

\noindent\textbf{NH-face:} For the dataset, our proposed ANIMC outperforms other methods. For instance, when PER=0.5, ANIMC raises the performance at least about $11.16\%$ in ACC, $10.85\%$ in NMI and $15.14\%$ in Purity, respectively. It is because NH-face has a large number of instances and features, and ANIMC can effectively process these instances and features.

\noindent\textbf{Scene:} Obviously, our proposed ANIMC outperforms the other methods in all cases.
When PER=0.1, ANIMC raises ACC by at least $7.30\%$, NMI by at least $4.05\%$ and Purity by at least $6.04\%$. It is because ANIMC assigns a proper weight to each view thereby decreasing the influence of noises.

%\noindent\textbf{4.4 Noisy and Incomplete Multi-view Clustering}
\subsection{Noisy and Incomplete Multi-view Clustering}\label{subsection:result_noisy}
Gaussian noise is one of the most common noises in real-world applications, and many algorithms often use this noise to analyze their de-noising capabilities~\cite{yuan2017ell,elhoseiny2017link}. In this section, we use the Digit, BUAA, Caltech7 datasets. We add Gaussian noise (noisy rate=0.2, noisy variance=0.1) to these datasets. Before adding noises, we normalize each view matrix of the dataset. Since our proposed ANIMC is based on matrix factorization, we compare ANIMC with NMF-CC, MIC, DAIMC and UEAF, which are also based on matrix factorization.

Fig.~\ref{fig:AwIMC_noisy} shows the noisy and incomplete multi-view clustering results. Obviously, ANIMC significantly outperforms other methods in all cases.
%Note that both ANIMC and NMF-CC often obtain better clustering results than MIC, DAIMC and UEAF in most cases. It is because both ANIMC and NMF-CC try to align the latent feature matrix towards the consensus, which verifies the effectiveness and necessity of the doubly soft regularized regression model (in Section~\ref{subsection:doubly}).
For the Digit dataset with PER=0, compared with these methods, ANIMC raises the clustering performance around $13.52\%$ in ACC and $10.63\%$ in NMI. The outstanding performance of ANIMC verifies its ability to deal with both noises and incompleteness because our adaptive semi-RNMF model can balance the impact of noises and incompleteness based on soft auto-weighted strategy (in Section~\ref{subsection:adaptive}). By adaptively assigning a proper weight to each noisy and incomplete view, ANIMC can distinguish the availability of different views for satisfactory performance.

In Table~\ref{yunxingshijian}, we report the running time of different methods on all the datasets with the missing rate of 0.5, the noise rate of 0.2, and the noise  variance of 0.1. We can find that 1) BSV spends the least running time. It is because different from other methods, BSV only deals with the features in one view, which reduces its running time. 2) All methods spend the most running time in the NH-face dataset. The reason is that the dataset has more features and instances, which will increase the running time. 3) In most cases, our proposed ANIMC spends less running time than state-of-the-art UIMC, which shows the efficiency of ANIMC.

Besides, we seek the influence of exponential function with different $r$, shown in Fig.~\ref{fig:rbianhua}. When we scan it in the whole range (from 0 to 2), the clustering performance keeps a high-level value. Therefore, $r$ is insensitive to the missing rate (PER). Note that when $r=0$ (i.e., our adaptive semi-RNMF model is removed $w_v||\bm{G}^{(v)}\odot(\bm{X}^{(v)}-\bm{U}^{(v)}\bm{V}^T)||_F^2=0$), the clustering performance is the worst, which validates the importance of the adaptive semi-RNMF model. When $0.2\leq r\leq 1.3$, ANIMC can obtain relatively good performance. In our experiments, we choose $r=0.2$; for $\theta$, we set $\theta=0.01$ for $||\bm{V}||_{\theta}$ and $\theta=100$ for $||\bm{A}^{(v)}||_{\theta}$.

In summary, the performance of all the methods is analyzed as follows. Based on the auto-weighted strategy, MLAN can perform well in complete multi-view clustering tasks. But MLAN is difficult to be directly extended to incomplete multi-view clustering because missing instances will cause the graphs learned by MLAN to be unavailable. As the missing rate increases, the clustering results of both MIC and NMF-CC drop significantly because MIC and NMF-CC neglect the hidden information of the missing instances. Both DAIMC and UEAF rely too much on alignment information. When clustering the dataset without enough alignment information, they always obtain unsatisfactory clustering results because the loss of alignment information will reduce their availability.
%they learn the consensus representation by aligning these views.
These drawbacks make these methods difficult to be widely used in real-world applications. By assigning a proper weight to each view via soft auto-weighted strategy and learning the global structure via doubly soft regularized regression model, our proposed ANIMC can obtain satisfactory performance in most cases, which shows that its potential to cluster different real-world datasets well.
%In summary,
%When the incomplete rate is small ($0$ to $0.3$), DAIMC outperforms IMG and PVC. The main reason is that Scene contains 4 views and DAIMC can uses information from different views to improve clustering results.
%In summary, when dealing with text data or multi-view data that contains less alignment information, IMG often gets poor result. Meanwhile, although MIC can handle the clustering problem with more than two-views, simply filling the missing instances with the global feature average will lead to a deviation, especially when the incomplete rate is large. By utilizing the information of instance alignment and enforcing the alignment among basis matrices, the proposed ANIMC can get better performances no matter whether it is text dataset or image dataset. Especially when the number of views is large, ANIMC yields more better results.
%\subsection{Weight, Convergence and Parameter Study}
%\paragraph{Weight Study:}

%\noindent\textbf{4.5 Weight Study}
\subsection{View Availability Study}\label{subsection:availability}
%Since our proposed ANIMC is the first auto-weighted noisy and incomplete multi-view clustering framework, it is necessary to analyze the effects of noises and incompleteness on the view availability.
To comprehensively analyze the availability of different views, we compare the incomplete multi-view clustering performance of each view on two datasets (Digit and Scene) with different PERs. We compare our proposed ANIMC (multi-view clustering based on ANIMC) with single-view clustering based on ANIMC (e.g., ``FAC" in Fig.~\ref{fig:usability} denotes the clustering performance using ANIMC on view FAC.). The results of the comparison are shown in Fig.~\ref{fig:usability} and Fig.~\ref{fig:usability_scene}.

Obviously, ANIMC obtains better clustering performance in all cases. For example, when PER=0.5, ANIMC improves ACC by at least $42.90\%$, NMI by at least $38.80\%$ and Purity by at least $42.01\%$, which shows that ANIMC can effectively integrate different views for clustering. Moreover, different views have distinct clustering performance on single-view clustering. For instance, KAR and PIX have higher availability than FAC and ZER (see Fig.~\ref{fig:usability}).
%Besides, as PER increases, the clustering results of these views decline differently. When clustering the Digit dataset, compared with FAC, FOU can obtain better performance on PER=0 but get worse clustering results on PER=0.5 (see Fig.~\ref{fig:usability}).
Besides, as PER changes, distinct views have different relative availability. When clustering the Digit dataset, compared with FAC, FOU can obtain better performance on PER=0 but get worse clustering results on PER=0.5 (see Fig.~\ref{fig:usability}). The comparison of relative availability also shows that we are difficult to choose proper parameters to weight views, which illustrates the significance of our proposed soft auto-weighted strategy (in Section~\ref{subsection:adaptive}).

\subsection{Soft Boundary Study}\label{subsection:boundary}
To verify the effectiveness of our soft boundary, we perform experiments to verify the effectiveness of our proposed soft boundary. In this section, we will compare the performance of our proposed ANIMC with the soft boundary (represented by ANIMC(SB)) and without the soft boundary (represented by ANIMC(woSB)). For ANIMC without soft boundaries, we use Eq.~\eqref{wgengxin} to learn the view weight $w_v$. For ANIMC with soft boundaries, we use Eq.~\eqref{wzuixin} to obtain $w_v$. To better demonstrate the effectiveness of our soft auto-weighted strategy, we also compared a popular strategy that all views have the same weight (i.e., $w_v=1$, represented by ANIMC(SW)). We compared them on the Digit dataset, and report the comparison results in Table~\ref{ruanbianjieduibi}.

As shown in Table~\ref{ruanbianjieduibi}, ANIMC(SW) performs the worst, while ANIMC(SB) obtains the best clustering results in all cases. It is because ANIMC(SW) assigns all views the same weight, which cannot distinguish the importance and availability of each view. This also illustrates the importance of distinguishing the availability of each view. Although ANIMC(woSB) can also distinguish the importance of different views, it cannot measure the impact of noise and incompleteness on each view. Therefore, this method may not be able to obtain satisfactory clustering performance.
%\noindent\textbf{4.6 Convergence Study}
\subsection{Convergence Study and Parameter Sensitivity}\label{subsection:convergence}
By perform incomplete multi-view clustering on the Digit dataset, we study the convergence with different PERs, i.e., PER=0, PER=0.2 and PER=0.4. We set the parameters ${\alpha,\beta}$ as $0.1, 100$, respectively. Also, we set the maximum iteration time $i_{\max}=40$.
Fig.~\ref{fig:digit_compare} shows that the convergence curve and the NMI values versus the iteration number, where ``Obj'' denotes ``objective function value''. Note that our proposed ANIMC has converged just after $20$ iterations and NMI achieves the best at the convergence point for all PERs.
Fig.~\ref{fig:converage_com} shows the number of iterations when ANIMC, DAIMC, MIC and UEAF converge. ANIMC converges faster than the others. The reason is that ANIMC can make the falling direction of the objective function close to the gradient direction by adaptively learning a proper weight for each view (in Eq.~\eqref{w}), which shows the efficiency of our proposed soft auto-weighted strategy (in Section~\ref{subsection:adaptive}). Especially, when PER increases, the convergence of  ANIMC is accelerated. This is because as PER increases, each incomplete view is assigned a larger weight $w_v$, which increases the difference of the objective function value between before and after each iteration (i.e., the declining rate of the objective function curve increases). Therefore, the objective function can converge to a stable value faster as PER increases.
%\subsection{Parameter Sensitivity}\label{subsection:parameter}

For different datasets, it is difficult to adaptively select the optimal values for these parameters. Thus, we consider a simple method to obtain the optimal combination of two parameters for experiments.
In terms of $\{\alpha, \beta\}$ used in our model, we conduct the parameter experiments on the Digit dataset.
Similar to \cite{wen2019unified}, we set PER=0.3 and report the clustering performance of ANIMC versus $\alpha$ and $\beta$ within the set of $\{$$10^{-4}$, $10^{-3}$, $10^{-2}$, $10^{-1}$, $10^{0}$, $10^{1}$, $10^{2}$, $10^{3}$, $10^{4}$$\}$.
As shown in Fig.~\ref{fig:nmi_1}, ANIMC not only achieves excellent clustering results but also is insensitive to these parameters. Moreover, ANIMC obtains a relatively good performance when $\alpha=0.1$ and $\beta= 10$. Therefore, we give the recommended values ($\alpha=0.1$ and $\beta= 10$) and set the recommended values for all the datasets.
%Taking $\frac{obj^{(i)}-obj^{(i-1)}}{obj^{(i)}}\leq 10^{-6}$($obj^{(i)}$ is the value of the objective function obtained from the $i$-th iteration) as the convergence condition, we compare the convergence of these algorithms in four data sets.
%\begin{figtab}[t]
%\begin{minipage}[b]{0.45\linewidth}
%\centering
% \includegraphics[width=0.23\textwidth]{ACC_digit_compare}}
% \figcaption{Result on Digit}
% \end{minipage}\quad
%
% \begin{minipage}[b]{0.45\linewidth}
%    \centering
%    \tabcaption{Example Tabular}
%    \begin{tabular}{cc}
%    \hline
%      Method & Iteration for convergence \\
%      AwIMC & 10 \\
%      DAIMC & 30 \\
%      MIC & 10 \\
%    \end{tabular}
%  \end{minipage}
%\subfigure[Result on Scene]{\label{fig:scene_compare} \includegraphics[width=0.23\textwidth]{scene_value_compare} }
%
%
%\caption{Convergency study on Digit dataset.}
%% \setlength{\abovecaptionskip}{1in}
%\label{fig:compare}
%\end{figtab}

%\noindent\textbf{4.7 Hyper-parameter Study}

%\begin{figure}[t]
%\centering
%\subfigure[Result on Digit]{\label{fig:digit_compare} \includegraphics[width=0.23\textwidth]{NMI_value}}
%\subfigure[Converge Comparison]{\label{fig:converage_com} \includegraphics[width=0.23\textwidth]{converage} }
%\caption{Convergency study on Digit dataset.}
%% \setlength{\abovecaptionskip}{1in}
%\label{fig:compare}
%\end{figure}

\section{Conclusion and Future Work}  \label{section:con}
In this paper, we propose a novel and soft auto-weighted approach ANIMC for noisy and incomplete multi-view clustering.
ANIMC consists of a soft auto-weighted strategy and a doubly soft regularized regression model. On the one hand,
the soft auto-weighted strategy is designed to automatically assign a proper weight to each view. On the other hand, the doubly soft regularized regression model is proposed to align the same instances in all views.
%To reduce the effects of noises, ANIMC assigns proper weights to noisy and incomplete views by learning the optimal common latent feature matrix automatically. Moreover, doubly regularized regression is designed to align the basis matrices of different view and push the common latent feature matrix towards its consensus, which decreases the influence of missing instances.
Extensive experiments on four real-world multi-view datasets demonstrate the effectiveness of ANIMC.

%Our proposed ANIMC is based on machine learning.
In the future, we will incorporate our approach with deep learning algorithms.
By the incorporation, we can effectively deal with the missing instances and noises in a large-scale recommendation system.
% for incomplete multi-view clustering on large-scale data with noises.

%\appendices
%\section{Proof of the First Zonklar Equation}
%Appendix one text goes here.
%
%% you can choose not to have a title for an appendix
%% if you want by leaving the argument blank
%\section{}
%Appendix two text goes here.
%
%
%% use section* for acknowledgment
%\section*{Acknowledgment}
%
%
%The authors would like to thank...

% Can use something like this to put references on a page
% by themselves when using endfloat and the captionsoff option.
\ifCLASSOPTIONcaptionsoff
  \newpage
\fi
\bibliographystyle{IEEEtran}
\bibliography{IEEEtran}

@InProceedings{lu2013unified,
  author    = {Lu, Zhiwu and Peng, Yuxin},
  title     = {Unified constraint propagation on multi-view data},
  booktitle = {AAAI},
  year      = {2013},
}

@Article{sorensen2002sylvester,
  author    = {Sorensen, Danny C and Antoulas, AC},
  journal   = {Linear algebra and its applications},
  title     = {The Sylvester equation and approximate balanced reduction},
  year      = {2002},
  publisher = {Elsevier},
}

@Article{huang2012buaa,
  author  = {Huang, Di and Sun, Jia and Wang, Yunhong},
  title   = {The buaa-visnir face database instructions},
  journal = {School Comput. Sci. Eng., Beihang Univ., Beijing, China, Tech. Rep. IRIP-TR-12-FR-001},
  year    = {2012},
}

@InProceedings{nie2017self,
  author    = {Nie, Feiping and Li, Jing and Li, Xuelong and others},
  title     = {Self-weighted Multiview Clustering with Multiple Graphs.},
  booktitle = {IJCAI},
  year      = {2017},
  pages     = {2564--2570},
}

@InProceedings{kang2018self,
  author       = {Kang, Zhao and Lu, Xiao and Yi, Jinfeng and Xu, Zenglin},
  booktitle    = {IJCAI},
  title        = {Self-weighted multiple kernel learning for graph-based clustering and semi-supervised classification},
  year         = {2018},
  organization = {AAAI Press},
  pages        = {2312--2318},
}

@InProceedings{li2014partial,
  author    = {Li, Shao-Yuan and Jiang, Yuan and Zhou, Zhi-Hua},
  title     = {Partial multi-view clustering},
  booktitle = {AAAI},
  year      = {2014},
}

@InProceedings{ng2002spectral,
  author    = {Ng, Andrew Y and Jordan, Michael I and Weiss, Yair},
  booktitle = {NIPS},
  title     = {On spectral clustering: Analysis and an algorithm},
  year      = {2002},
  pages     = {849--856},
}

@Article{zhou2018multiview,
  author    = {Zhou, Tao and Zhang, Changqing and Gong, Chen and Bhaskar, Harish and Yang, Jie},
  title     = {Multiview Latent Space Learning With Feature Redundancy Minimization},
  journal   = {IEEE transactions on cybernetics},
  year      = {2018},
  publisher = {IEEE},
}

@InProceedings{liu2013multi,
  author       = {Liu, Jialu and Wang, Chi and Gao, Jing and Han, Jiawei},
  title        = {Multi-view clustering via joint nonnegative matrix factorization},
  booktitle    = {ICDM},
  year         = {2013},
  organization = {SIAM},
}

@InProceedings{shao2015multiple,
  author       = {Shao, Weixiang and He, Lifang and Philip, S Yu},
  title        = {Multiple Incomplete Views Clustering via Weighted Nonnegative Matrix Factorization with $L_ {2,1}$ Regularization},
  booktitle    = {ECMLPKDD},
  year         = {2015},
  pages        = {318--334},
  organization = {Springer},
}

@InProceedings{zhang2017latent,
  author    = {Zhang, Changqing and Hu, Qinghua and Fu, Huazhu and Zhu, Pengfei and Cao, Xiaochun},
  title     = {Latent multi-view subspace clustering},
  booktitle = {CVPR},
  year      = {2017},
  pages     = {4279--4287},
}

@InProceedings{zhao2016incomplete,
  author    = {Zhao, Handong and Liu, Hongfu and Fu, Yun},
  title     = {Incomplete Multi-Modal Visual Data Grouping.},
  booktitle = {IJCAI},
  year      = {2016},
  pages     = {2392--2398},
}

@InProceedings{hu2018doubly,
  author    = {Hu, Menglei and Chen, Songcan},
  title     = {Doubly Aligned Incomplete Multi-view Clustering.},
  booktitle = {IJCAI},
  year      = {2018},
  pages     = {2262--2268},
}

@Article{ding2010convex,
  author    = {Ding, Chris HQ and Li, Tao and Jordan, Michael I},
  title     = {Convex and semi-nonnegative matrix factorizations},
  journal   = {IEEE TPAMI},
  year      = {2010},
  volume    = {32},
  number    = {1},
  pages     = {45--55},
  publisher = {IEEE},
}

@InProceedings{blum1998combining,
  author       = {Blum, Avrim and Mitchell, Tom},
  title        = {Combining labeled and unlabeled data with co-training},
  booktitle    = {COLT},
  year         = {1998},
  pages        = {92--100},
  organization = {ACM},
}

@Article{nie2018auto,
  author    = {Nie, Feiping and Cai, Guohao and Li, Jing and Li, Xuelong},
  journal   = {IEEE TIP},
  title     = {Auto-weighted multi-view learning for image clustering and semi-supervised classification},
  year      = {2018},
  publisher = {IEEE},
}

@Book{higham2002accuracy,
  title     = {Accuracy and stability of numerical algorithms},
  publisher = {Siam},
  year      = {2002},
  author    = {Higham, Nicholas J},
  volume    = {80},
}

@Article{xu2013survey,
  author  = {Xu, Chang and Tao, Dacheng and Xu, Chao},
  title   = {A survey on multi-view learning},
  journal = {arXiv preprint arXiv:1304.5634},
  year    = {2013},
}

@Article{sun2013survey,
  author    = {Sun, Shiliang},
  title     = {A survey of multi-view machine learning},
  journal   = {Neural Computing and Applications},
  year      = {2013},
  volume    = {23},
  number    = {7-8},
  pages     = {2031--2038},
  publisher = {Springer},
}

@InProceedings{wen2019unified,
  author    = {Wen, Jie and Zhang, Zheng and Xu, Yong and Zhang, Bob and Fei, Lunke and Liu, Hong},
  title     = {Unified embedding alignment with missing views inferring for incomplete multi-view clustering},
  booktitle = {AAAI},
  year      = {2019},
  volume    = {33},
  pages     = {5393--5400},
}

@Article{gong2016multi,
  author    = {Gong, Chen and Tao, Dacheng and Maybank, Stephen J and Liu, Wei and Kang, Guoliang and Yang, Jie},
  journal   = {IEEE TIP},
  title     = {Multi-modal curriculum learning for semi-supervised image classification},
  year      = {2016},
  publisher = {IEEE},
}

@InProceedings{min2017delicious,
  author    = {Min, Weiqing and Jiang, Shuqiang and Wang, Shuhui and Sang, Jitao and Mei, Shuhuan},
  title     = {A delicious recipe analysis framework for exploring multi-modal recipes with various attributes},
  booktitle = {ACM MM},
  year      = {2017},
}

@Article{li2017implicit,
  author    = {Li, Haoran and Zhang, Jiajun and Zong, Chengqing},
  title     = {Implicit discourse relation recognition for English and Chinese with multiview modeling and effective representation learning},
  journal   = {ACM TALLIP},
  year      = {2017},
  volume    = {16},
  number    = {3},
  pages     = {19},
  publisher = {ACM},
}

@Article{bartels1972solution,
  author    = {Bartels, Richard H. and Stewart, George W},
  journal   = {Communications of the ACM},
  title     = {Solution of the matrix equation AX+ XB= C [F4]},
  year      = {1972},
  pages     = {820--826},
  publisher = {ACM New York, NY, USA},
}

@InProceedings{li2009relation,
  author    = {Li, Wu-Jun and Yeung, Dit-Yan},
  booktitle = {IJCAI},
  title     = {Relation regularized matrix factorization},
  year      = {2009},
}

@InProceedings{gunasekar2017implicit,
  author    = {Gunasekar, Suriya and Woodworth, Blake E and Bhojanapalli, Srinadh and Neyshabur, Behnam and Srebro, Nati},
  booktitle = {NIPS},
  title     = {Implicit regularization in matrix factorization},
  year      = {2017},
  pages     = {6151--6159},
}

@TechReport{monadjemi2002experiments,
  author      = {Monadjemi, Amir and Thomas, BT and Mirmehdi, Majid},
  institution = {tech. rep., University of Bristol},
  title       = {Experiments on high resolution images towards outdoor scene classification},
  year        = {2002},
}

@Article{hu2020multi,
  author    = {Hu, Zhanxuan and Nie, Feiping and Wang, Rong and Li, Xuelong},
  journal   = {Information Fusion},
  title     = {Multi-view spectral clustering via integrating nonnegative embedding and spectral embedding},
  year      = {2020},
  pages     = {251--259},
  volume    = {55},
  publisher = {Elsevier},
}

@Article{yuan2017ell,
  author    = {Yuan, Ganzhao and Ghanem, Bernard},
  journal   = {IEEE TPAMI},
  title     = {$\ell_0$ TV: A Sparse Optimization Method for Impulse Noise Image Restoration},
  year      = {2017},
  publisher = {IEEE},
}

@InProceedings{elhoseiny2017link,
  author       = {Elhoseiny, Mohamed and Zhu, Yizhe and Zhang, Han and Elgammal, Ahmed},
  booktitle    = {CVPR},
  title        = {Link the head to the" beak": Zero shot learning from noisy text description at part precision},
  year         = {2017},
  organization = {IEEE},
  pages        = {6288--6297},
}

@Article{chambolle2018stochastic,
  author    = {Chambolle, Antonin and Ehrhardt, Matthias J and Richt{\'a}rik, Peter and Schonlieb, Carola-Bibiane},
  journal   = {SIAM Journal on Optimization},
  title     = {Stochastic primal-dual hybrid gradient algorithm with arbitrary sampling and imaging applications},
  year      = {2018},
  publisher = {SIAM},
}

@Article{tan2019individuality,
  author    = {Tan, Qiaoyu and Yu, Guoxian and Wang, Jun and Domeniconi, Carlotta and Zhang, Xiangliang},
  journal   = {IEEE Transactions on Cybernetics},
  title     = {Individuality-and Commonality-Based Multiview Multilabel Learning},
  year      = {2019},
  publisher = {IEEE},
}

@InProceedings{gu2017empirical,
  author    = {Gu, Jiuxiang and Wang, Gang and Cai, Jianfei and Chen, Tsuhan},
  booktitle = {ICCV},
  title     = {An empirical study of language cnn for image captioning},
  year      = {2017},
  pages     = {1222--1231},
}

@Article{chang2000wavelet,
  author    = {Chang, S Grace and Yu, Bin and Vetterli, Martin},
  journal   = {IEEE TIP},
  title     = {Wavelet thresholding for multiple noisy image copies},
  year      = {2000},
  pages     = {1631--1635},
  publisher = {IEEE},
}

@Article{zhu2020structured,
  author    = {Zhu, Hong and Ng, Michael K},
  journal   = {IEEE TIP},
  title     = {Structured Dictionary Learning for Image Denoising under Mixed Gaussian and Impulse Noise},
  year      = {2020},
  publisher = {IEEE},
}

@Article{el2020blind,
  author    = {El Helou, Majed and S{\"u}sstrunk, Sabine},
  journal   = {IEEE TIP},
  title     = {Blind universal Bayesian image denoising with Gaussian noise level learning},
  year      = {2020},
  publisher = {IEEE},
}

@Article{wang2020weighted,
  author    = {Wang, Cong and Yan, Ziyue and Pedrycz, Witold and Zhou, MengChu and Li, Zhiwu},
  journal   = {IEEE TIP},
  title     = {A weighted fidelity and regularization-based method for mixed or unknown noise removal from images on graphs},
  year      = {2020},
  pages     = {5229--5243},
  volume    = {29},
  publisher = {IEEE},
}

@Article{liu2017robust,
  author    = {Liu, Ziqiong and Wang, Shengjin and Zheng, Liang and Tian, Qi},
  journal   = {IEEE TIP},
  title     = {Robust imagegraph: Rank-level feature fusion for image search},
  year      = {2017},
  publisher = {IEEE},
}

@Article{jiang2019discrete,
  author    = {Jiang, Qing-Yuan and Li, Wu-Jun},
  journal   = {IEEE TIP},
  title     = {Discrete latent factor model for cross-modal hashing},
  year      = {2019},
  publisher = {IEEE},
}

@Article{hestenes1952methods,
  author  = {Hestenes, Magnus R and Stiefel, Eduard and others},
  journal = {Journal of research of the National Bureau of Standards},
  title   = {Methods of conjugate gradients for solving linear systems},
  year    = {1952},
  number  = {6},
  pages   = {409--436},
  volume  = {49},
}

@Article{chen2011prediction,
  author    = {Chen, Zhifeng and Wu, Dapeng},
  journal   = {IEEE TIP},
  title     = {Prediction of transmission distortion for wireless video communication: Analysis},
  year      = {2011},
  publisher = {IEEE},
}

@Article{liu2018fast,
  author    = {Liu, Xialin and Shi, Jianhong and Wu, Xiaoyan and Zeng, Guihua},
  journal   = {Scientific reports},
  title     = {Fast first-photon ghost imaging},
  year      = {2018},
  pages     = {1--8},
  publisher = {Nature Publishing Group},
}

@Article{yang2020shearlet,
  author    = {Yang, Peihao and Kong, Linghe and Liu, Xiao-Yang and Yuan, Xin and Chen, Guihai},
  journal   = {IEEE TIP},
  title     = {Shearlet Enhanced Snapshot Compressive Imaging},
  year      = {2020},
  publisher = {IEEE},
}

@InProceedings{ma2019deep,
  author    = {Ma, Jiawei and Liu, Xiao-Yang and Shou, Zheng and Yuan, Xin},
  booktitle = {ICCV},
  title     = {Deep tensor admm-net for snapshot compressive imaging},
  year      = {2019},
  pages     = {10223--10232},
}

@Article{zhou2017tensor,
  author    = {Zhou, Pan and Lu, Canyi and Lin, Zhouchen and Zhang, Chao},
  journal   = {IEEE TIP},
  title     = {Tensor factorization for low-rank tensor completion},
  year      = {2017},
  publisher = {IEEE},
}

@Article{zhou2016bilevel,
  author    = {Zhou, Pan and Zhang, Chao and Lin, Zhouchen},
  journal   = {IEEE TIP},
  title     = {Bilevel model-based discriminative dictionary learning for recognition},
  year      = {2016},
  publisher = {IEEE},
}

@Article{yan2020deep,
  author    = {Yan, Chenggang and Gong, Biao and Wei, Yuxuan and Gao, Yue},
  journal   = {IEEE TPAMI},
  title     = {Deep multi-view enhancement hashing for image retrieval},
  year      = {2020},
  publisher = {IEEE},
}

@Article{jia2020semi,
  author    = {Jia, Xiaodong and Jing, Xiao-Yuan and Zhu, Xiaoke and Chen, Songcan and Du, Bo and Cai, Ziyun and He, Zhenyu and Yue, Dong},
  journal   = {IEEE TPAMI},
  title     = {Semi-supervised Multi-view Deep Discriminant Representation Learning},
  year      = {2020},
  publisher = {IEEE},
}

@InProceedings{shao2020intra,
  author    = {Shao, Dian and Zhao, Yue and Dai, Bo and Lin, Dahua},
  booktitle = {CVPR},
  title     = {Intra-and inter-action understanding via temporal action parsing},
  year      = {2020},
  pages     = {730--739},
}

@Article{yang2020fade,
  author    = {Yang, Hsiao-Chien and Chen, Po-Heng and Chen, Kuan-Wen and Lee, Chen-Yi and Chen, Yong-Sheng},
  journal   = {IEEE TIP},
  title     = {FADE: Feature Aggregation for Depth Estimation with Multi-View Stereo},
  year      = {2020},
  publisher = {IEEE},
}

@Article{he2020mv,
  author    = {He, Xin and Liu, Qiong and Yang, You},
  journal   = {IEEE TIP},
  title     = {MV-GNN: Multi-view Graph Neural Network for Compression Artifacts Reduction},
  year      = {2020},
  publisher = {IEEE},
}

@Article{hestenes1969multiplier,
  author    = {Hestenes, Magnus R},
  journal   = {Journal of optimization theory and applications},
  title     = {Multiplier and gradient methods},
  year      = {1969},
  number    = {5},
  pages     = {303--320},
  volume    = {4},
  publisher = {Springer},
}

@Article{liu2018ls,
  author    = {Liu, Xiao-Yang and Wang, Xiaodong},
  journal   = {IEEE TBD},
  title     = {Ls-decomposition for robust recovery of sensory big data},
  year      = {2018},
  publisher = {IEEE},
}

@Article{smola2001regularized,
  author  = {Smola, Alexander J and Mika, Sebastian and Sch{\"o}lkopf, Bernhard and Williamson, Robert C},
  journal = {JMLR},
  title   = {Regularized Principal Manifolds},
  year    = {2001},
  number  = {Jun},
  pages   = {179--209},
  volume  = {1},
}

@Article{fang2021,
  author    = {Fang, Xiang and Hu, Yuchong and Zhou, Pan and Wu, Dapeng Oliver},
  journal   = {IEEE TAI},
  title     = {V$^3$H: View Variation and View Heredity for Incomplete Multiview Clustering},
  year      = {2020},
  number    = {3},
  pages     = {233-247},
  volume    = {1},
  publisher = {IEEE},
}

@Article{9229197,
  author  = {C. {Wang} and C. {Xu} and D. {Tao}},
  journal = {IEEE TAI},
  title   = {Self-Supervised Pose Adaptation for Cross-Domain Image Animation},
  year    = {2020},
  number  = {1},
  pages   = {34-46},
  volume  = {1},
}

@Article{kang2021structured,
  author    = {Kang, Zhao and Lin, Zhiping and Zhu, Xiaofeng and Xu, Wenbo},
  journal   = {IEEE Transactions on Cybernetics},
  title     = {Structured Graph Learning for Scalable Subspace Clustering: From Single View to Multiview},
  year      = {2021},
  publisher = {IEEE},
}

@InProceedings{liu2019efficient,
  author    = {Liu, Xinwang and Zhu, Xinzhong and Li, Miaomiao and Tang, Chang and Zhu, En and Yin, Jianping and Gao, Wen},
  booktitle = {AAAI},
  title     = {Efficient and effective incomplete multi-view clustering},
  year      = {2019},
  number    = {01},
  pages     = {4392--4399},
  volume    = {33},
}

@Article{liu2020efficient,
  author    = {Liu, Xinwang and Li, Miaomiao and Tang, Chang and Xia, Jingyuan and Xiong, Jian and Liu, Li and Kloft, Marius and Zhu, En},
  journal   = {IEEE TPAMI},
  title     = {Efficient and effective regularized incomplete multi-view clustering},
  year      = {2020},
  publisher = {IEEE},
}

@Article{wen2020adaptive,
  author    = {Wen, Jie and Yan, Ke and Zhang, Zheng and Xu, Yong and Wang, Junqian and Fei, Lunke and Zhang, Bob},
  journal   = {IEEE Transactions on Multimedia},
  title     = {Adaptive graph completion based incomplete multi-view clustering},
  year      = {2020},
  publisher = {IEEE},
}

@Article{fang2021uimc,
  author    = {Fang, Xiang and Hu, Yuchong and Zhou, Pan and Wu, Dapeng Oliver},
  journal   = {IEEE TETCI},
  title     = {Unbalanced Incomplete Multi-view Clustering via the Scheme of View Evolution: Weak Views are Meat; Strong Views do Eat},
  year      = {2021},
  publisher = {IEEE},
}

@Article{liang2020multi,
  author    = {Liang, Naiyao and Yang, Zuyuan and Li, Zhenni and Sun, Weijun and Xie, Shengli},
  journal   = {Knowledge-Based Systems},
  title     = {Multi-view clustering by non-negative matrix factorization with co-orthogonal constraints},
  year      = {2020},
  pages     = {105582},
  volume    = {194},
  publisher = {Elsevier},
}

@Article{tao2005new,
  author    = {Tao, Qing and Wu, Gao-wei and Wang, Jue},
  journal   = {Pattern Recognition},
  title     = {A new maximum margin algorithm for one-class problems and its boosting implementation},
  year      = {2005},
  number    = {7},
  pages     = {1071--1077},
  volume    = {38},
  publisher = {Elsevier},
}

@Article{zhu2020self,
  author  = {Zhu, Xiaofeng and Zhang, Shichao and Zhu, Yonghua and Zheng, Wei and Yang, Yang},
  journal = {ACM TKDD},
  title   = {Self-weighted multi-view fuzzy clustering},
  year    = {2020},
  number  = {4},
  pages   = {1--17},
  volume  = {14},
}

@Article{shi2020auto,
  author  = {Shi, Shaojun and Nie, Feiping and Wang, Rong and Li, Xuelong},
  journal = {Neurocomputing},
  title   = {Auto-weighted multi-view clustering via spectral embedding},
  year    = {2020},
  pages   = {369--379},
  volume  = {399},
}

@Article{huang2020auto,
  author  = {Huang, Shudong and Kang, Zhao and Xu, Zenglin},
  journal = {Pattern Recognition},
  title   = {Auto-weighted multi-view clustering via deep matrix decomposition},
  year    = {2020},
  pages   = {107015},
  volume  = {97},
}

@Article{qi2018unsupervised,
  author  = {Qi, Miao and Wang, Ting and Liu, Fucong and Zhang, Baoxue and Wang, Jianzhong and Yi, Yugen},
  journal = {Neurocomputing},
  title   = {Unsupervised feature selection by regularized matrix factorization},
  year    = {2018},
  pages   = {593--610},
  volume  = {273},
}

@InProceedings{cai2013multi,
  author    = {Cai, Xiao and Nie, Feiping and Huang, Heng},
  booktitle = {IJCAI},
  title     = {Multi-view K-means clustering on big data},
  year      = {2013},
  pages     = {2598--2604},
}

@InProceedings{greene2006practical,
  author    = {Greene, Derek and Cunningham, P{\'a}draig},
  booktitle = {ICML},
  title     = {Practical solutions to the problem of diagonal dominance in kernel document clustering},
  year      = {2006},
  pages     = {377--384},
}

@InProceedings{wu2013constrained,
  author    = {Wu, Baoyuan and Zhang, Yifan and Hu, Bao-Gang and Ji, Qiang},
  booktitle = {CVPR},
  title     = {Constrained clustering and its application to face clustering in videos},
  year      = {2013},
  pages     = {3507--3514},
}

@InProceedings{carbonnelle2020intraclass,
  author    = {Carbonnelle, Simon and De Vleeschouwer, Christophe},
  booktitle = {ICLR},
  title     = {Intraclass clustering: an implicit learning ability that regularizes DNNs},
  year      = {2020},
}

@Article{deng2021tri,
  author  = {Deng, Ping and Li, Tianrui and Wang, Hongjun and Horng, Shi-Jinn and Yu, Zeng and Wang, Xiaomin},
  journal = {Knowledge-Based Systems},
  title   = {Tri-regularized nonnegative matrix tri-factorization for co-clustering},
  year    = {2021},
  pages   = {107101},
  volume  = {226},
}

@article{fang2025your,
  title={Your data is not perfect: Towards cross-domain out-of-distribution detection in class-imbalanced data},
  author={Fang, Xiang and Easwaran, Arvind and Genest, Blaise and Suganthan, Ponnuthurai Nagaratnam},
  journal={Expert Systems with Applications},
  year={2025}
}

@article{fang2023hierarchical,
  title={Hierarchical local-global transformer for temporal sentence grounding},
  author={Fang, Xiang and Liu, Daizong and Zhou, Pan and Xu, Zichuan and Li, Ruixuan},
  journal={IEEE Transactions on Multimedia},
  year={2023},
  publisher={IEEE}
}

@article{fang2022multi,
  title={Multi-modal cross-domain alignment network for video moment retrieval},
  author={Fang, Xiang and Liu, Daizong and Zhou, Pan and Hu, Yuchong},
  journal={IEEE Transactions on Multimedia},
  volume={25},
  pages={7517--7532},
  year={2022},
  publisher={IEEE}
}

@inproceedings{fang2026cogniVerse,
  title={CogniVerse: Revolutionizing Multi-modal Retrieval-Augmented Generation with Cognitive Reflection and Geometric Reasoning},
  author={Fang, Xiang and Fang, Wanlong and Wang, Changshuo},
  booktitle={Proceedings of the IEEE/CVF Conference on Computer Vision and Pattern Recognition},
  year={2026}
}

@inproceedings{fang2023you,
  title={You can ground earlier than see: An effective and efficient pipeline for temporal sentence grounding in compressed videos},
  author={Fang, Xiang and Liu, Daizong and Zhou, Pan and Nan, Guoshun},
  booktitle={Proceedings of the IEEE/CVF Conference on Computer Vision and Pattern Recognition},
  pages={2448--2460},
  year={2023}
}

@inproceedings{fang2025hierarchical,
  title={Hierarchical Semantic-Augmented Navigation: Optimal Transport and Graph-Driven Reasoning for Vision-Language Navigation},
  author={Fang, Xiang and Fang, Wanlong and Wang, Changshuo},
  booktitle={Advances in Neural Information Processing Systems},
  year={2025}
}

@inproceedings{fang2025adaptive,
  title={Adaptive Multi-prompt Contrastive Network for Few-shot Out-of-distribution Detection},
  author={Fang, Xiang and Easwaran, Arvind and Genest, Blaise},
  booktitle={International Conference on Machine Learning},
  year={2025}
}

@inproceedings{fang2026slap,
  title={SLAP: The Semantic Least Action Principle for Variational Video-Language Modeling},
  author={Fang, Xiang and Fang, Wanlong},
  booktitle={International Conference on Machine Learning},
  year={2026}
}

@inproceedings{fang2026immuno,
  title={Immuno-VLM: Immunizing Large Vision-Language Models via Generative Semantic Antibodies for Open-World Trustworthiness},
  author={Fang, Xiang and Fang, Wanlong and Ji, Wei},
  booktitle={International Conference on Machine Learning},
  year={2026}
}

@inproceedings{fang2026disentangling,
  title={Disentangling Adversarial Prompts: A Semantic-Graph Defense for Robust LLM Security},
  author={Fang, Xiang and Fang, Wanlong},
 booktitle={Proceedings of the AAAI Conference on Artificial Intelligence},
year={2026}
}

@inproceedings{fang2026advancing,
  title={Advancing Out-of-Distribution Detection Across Diverse Scenarios},
  author={Fang, Xiang},
  booktitle={Proceedings of the AAAI Conference on Artificial Intelligence},
  volume={40},
  number={48},
  pages={41042--41043},
  year={2026}
}

@inproceedings{fang2026unveiling,
  title={Unveiling the Fragility of Vision-Language Models: Multi-Modal Adversarial Synergy via Texture-Constrained Perturbations and Cross-Modal Optimization},
  author={Fang, Xiang and Fang, Wanlong and Wang, Changshuo},
 booktitle={Proceedings of the AAAI Conference on Artificial Intelligence},
year={2026}
}

@inproceedings{fang2026rethinking,
  title={Rethinking Video-language Model From the Language Input Perspective},
  author={Fang, Xiang and Fang, Wanlong and Wang, Changshuo and Qu, Xiaoye and Liu, Daizong},
 booktitle={Proceedings of the AAAI Conference on Artificial Intelligence},
year={2026}
}

@inproceedings{fang2026towards,
  title={Towards Unified Vision-Language Models With Incomplete Multi-Modal Inputs},
  author={Fang, Xiang and Fang, Wanlong and Wang, Changshuo and Tang, Keke and Liu, Daizong and Wang, Siyi and Ji, Wei},
 booktitle={Proceedings of the AAAI Conference on Artificial Intelligence},
year={2026}
}

@inproceedings{fang2025multi,
  title={Multi-pair temporal sentence grounding via multi-thread knowledge transfer network},
  author={Fang, Xiang and Fang, Wanlong and Wang, Changshuo and Liu, Daizong and Tang, Keke and Dong, Jianfeng and Zhou, Pan and Li, Beibei},
  booktitle={Proceedings of the AAAI Conference on Artificial Intelligence},
  volume={39},
  number={3},
  pages={2915--2923},
  year={2025}
}

@inproceedings{fang2024fewer,
  title={Fewer Steps, Better Performance: Efficient Cross-Modal Clip Trimming for Video Moment Retrieval Using Language},
  author={Fang, Xiang and Liu, Daizong and Fang, Wanlong and Zhou, Pan and Xu, Zichuan and Xu, Wenzheng and Chen, Junyang and Li, Renfu},
  booktitle={Proceedings of the AAAI Conference on Artificial Intelligence},
  volume={38},
  number={2},
  pages={1735--1743},
  year={2024}
}

@inproceedings{fang2024multi,
  title={Multi-Pair Temporal Sentence Grounding via Multi-Thread Knowledge Transfer Network},
  author={Fang, Xiang and Fang, Wanlong and Wang, Changshuo and Liu, Daizong and Tang, Keke and Dong, Jianfeng and Zhou, Pan and Li, Beibei},
  booktitle={Proceedings of the AAAI Conference on Artificial Intelligence},
  year={2025}
}

@inproceedings{fang2025turing,
  title={Turing Patterns for Multimedia: Reaction-Diffusion Multi-Modal Fusion for Language-Guided Video Moment Retrieval},
  author={Fang, Xiang and Fang, Wanlong and Ji, Wei and Chua, Tat-Seng},
  booktitle={ACM International Conference on Multimedia},
  year={2025}
}

@inproceedings{fang2024not,
  title={Not all inputs are valid: Towards open-set video moment retrieval using language},
  author={Fang, Xiang and Fang, Wanlong and Liu, Daizong and Qu, Xiaoye and Dong, Jianfeng and Zhou, Pan and Li, Renfu and Xu, Zichuan and Chen, Lixing and Zheng, Panpan and others},
  booktitle={Proceedings of the 32nd ACM International Conference on Multimedia},
  pages={28--37},
  year={2024}
}

@inproceedings{fang2024rethinking,
  title={Rethinking Weakly-supervised Video Temporal Grounding From a Game Perspective},
  author={Fang, Xiang and Xiong, Zeyu and Fang, Wanlong and Qu, Xiaoye and Chen, Chen and Dong, Jianfeng and Tang, Keke and Zhou, Pan and Cheng, Yu and Liu, Daizong},
  booktitle={European Conference on Computer Vision},
  year={2024},
  organization={Springer}
}

@inproceedings{fang2023annotations,
  title={Annotations Are Not All You Need: A Cross-modal Knowledge Transfer Network for Unsupervised Temporal Sentence Grounding},
  author={Fang, Xiang and Liu, Daizong and Fang, Wanlong and Zhou, Pan and Cheng, Yu and Tang, Keke and Zou, Kai},
  booktitle={Findings of the Association for Computational Linguistics: EMNLP 2023},
  pages={8721--8733},
  year={2023}
}

@article{fang2025adaptivetai,
  title={Adaptive Hierarchical Graph Cut for Multi-granularity Out-of-distribution Detection},
  author={Fang, Xiang and Easwaran, Arvind and Genest, Blaise and Suganthan, Ponnuthurai Nagaratnam},
  journal={IEEE Transactions on Artificial Intelligence},
  year={2025}
}

@article{fang2020v,
  title={V3H: View variation and view heredity for incomplete multiview clustering},
  author={Fang, Xiang and Hu, Yuchong and Zhou, Pan and Wu, Dapeng Oliver},
  journal={IEEE Transactions on Artificial Intelligence},
  volume={1},
  number={3},
  pages={233--247},
  year={2020},
  publisher={IEEE}
}

@article{fang2020double,
  title={Double self-weighted multi-view clustering via adaptive view fusion},
  author={Fang, Xiang and Hu, Yuchong},
  journal={arXiv preprint arXiv:2011.10396},
  year={2020}
}

@article{liu2023exploring,
  title={Exploring optical-flow-guided motion and detection-based appearance for temporal sentence grounding},
  author={Liu, Daizong and Fang, Xiang and Hu, Wei and Zhou, Pan},
  journal={IEEE Transactions on Multimedia},
  volume={25},
  pages={8539--8553},
  year={2023},
  publisher={IEEE}
}

@inproceedings{wang2025taylor,
  title={Taylor series-inspired local structure fitting network for few-shot point cloud semantic segmentation},
  author={Wang, Changshuo and He, Shuting and Fang, Xiang and Wu, Meiqing and Lam, Siew-Kei and Tiwari, Prayag},
  booktitle={Proceedings of the AAAI Conference on Artificial Intelligence},
  volume={39},
  number={7},
  pages={7527--7535},
  year={2025}
}

@inproceedings{wang2025point,
  title={Point clouds meets physics: Dynamic acoustic field fitting network for point cloud understanding},
  author={Wang, Changshuo and He, Shuting and Fang, Xiang and Han, Jiawei and Liu, Zhonghang and Ning, Xin and Li, Weijun and Tiwari, Prayag},
  booktitle={Proceedings of the Computer Vision and Pattern Recognition Conference},
  pages={22182--22192},
  year={2025}
}

@inproceedings{wang2025dypolyseg,
  title={DyPolySeg: Taylor Series-Inspired Dynamic Polynomial Fitting Network for Few-shot Point Cloud Semantic Segmentation},
  author={Wang, Changshuo and Fang, Xiang and Tiwari, Prayag},
  booktitle={Forty-second International Conference on Machine Learning},
  year={2025}
}

@article{wang2026reasoning,
  title={Reasoning beyond points: A visual introspective approach for few-shot 3d segmentation},
  author={Wang, Changshuo and He, Shuting and Fang, Xiang and Hu, Zhijian and Huang, Jia-Hong and Shen, Yixian and Tiwari, Prayag},
  journal={Advances in Neural Information Processing Systems},
  volume={38},
  pages={117394--117414},
  year={2026}
}

@article{wang2026from,
  title={From Coarse to Fine: Deep Prototype Refinement Network for Few-Shot Point Cloud Semantic Segmentation},
  author={Wang, Changshuo and He, Shuting and Fang, Xiang and Li, Weijun and Gao, Xingyu and Liu, Zhonghang and Tiwari, Prayag and Kanoulas, Dimitrios},
  journal={International Conference on Machine Learning},
  year={2026}
}

@article{wang2026topadapter,
  title={TopAdapter: Topology-Aware Prompt Tuning for Efficient Point Cloud Understanding},
  author={Wang, Changshuo and He, Shuting and Fang, Xiang and Li, Weijun and Shen, Yixian and Xu, Mingkun and Sun, Zhongtian and Tiwari, Prayag},
  journal={International Conference on Machine Learning},
  year={2026}
}

@inproceedings{wang2026biologically,
  title={Biologically-Inspired Evolutionary Domain Symbiosis for Few-shot and Zero-shot Point Cloud Semantic Segmentation},
  author={Wang, Changshuo and Hu, Zhijian and Fang, Xiang and Yu, Zai Yang and Wu, Yibin and Xu, Mingkun and Wang, Yusong and Gao, Xingyu and Tiwari, Prayag},
  booktitle={Proceedings of the AAAI Conference on Artificial Intelligence},
  volume={40},
  number={12},
  pages={9666--9674},
  year={2026}
}

@inproceedings{yang2025eood,
  title={EOOD: Entropy-based Out-of-distribution Detection},
  author={Yang, Guide and Hou, Chao and Peng, Weilong and Fang, Xiang and Nie, Yongwei and Zhu, Peican and Tang, Keke},
  booktitle={2025 International Joint Conference on Neural Networks (IJCNN)},
  pages={1--8},
  year={2025},
  organization={IEEE}
}

@inproceedings{wang2025reducing
,
  title={Reducing T-Depth and T-Count in Quantum Multiplication Using Compressor Primitives},
  author={Wang, Siyi and Dutta, Suman and Lee, Wei Jie Bryan and Feng, Jerrie and Fang, Xiang and Chattopadhyay, Anupam},
  booktitle={Proceedings of the Great Lakes Symposium on VLSI 2025},
  pages={35--40},
  year={2025}
}

@inproceedings{lei2025exploring,
  title={Exploring Disentangled Appearance-Motion Contexts for Temporal Activity Localization},
  author={Lei, Huashuo and Cai, Xiaowen and Liu, Daizong and Fang, Xiang and Qu, Xiaoye and Dong, Jianfeng and Yu, Jixiang and Jin, Keyan},
  booktitle={2025 International Joint Conference on Neural Networks (IJCNN)},
  pages={1--8},
  year={2025},
  organization={IEEE}
}

@inproceedings{zhang2025monoattack,
  title={MonoAttack: A Strong Attack Framework with Depth-Migration and Attribute-Tampering for Monocular 3D Object Detection},
  author={Zhang, Xiayue and Lei, Huashuo and Liu, Daizong and Qu, Xiaoye and Fang, Xiang and Guan, Runwei and Jin, Keyan},
  booktitle={2025 International Joint Conference on Neural Networks (IJCNN)},
  pages={1--8},
  year={2025},
  organization={IEEE}
}

@inproceedings{zhang2025manipulating,
  title={Manipulating the Bounding Box: Multimodal Controlled Backdoor Attacks on 3D Visual Grounding Models},
  author={Zhang, Xiayue and Lei, Huashuo and Liu, Daizong and Qu, Xiaoye and Fang, Xiang and Guan, Runwei and Jin, Keyan},
  booktitle={2025 International Joint Conference on Neural Networks (IJCNN)},
  pages={1--8},
  year={2025},
  organization={IEEE}
}

@article{wang2025prototype,
  title={Prototype-driven structure synergy network for remote sensing images segmentation},
  author={Wang, Junyi and Li, Jinjiang and Fan, Guodong and Ju, Yakun and Fang, Xiang and Kot, Alex C},
  journal={IEEE Transactions on Geoscience and Remote Sensing},
  year={2025},
  publisher={IEEE}
}

@inproceedings{wang2025seeing,
  title={Seeing the Overlooked: Bio-Visual Inspired Weak Saliency Feedback Transformer for Person Re-identification},
  author={Wang, Changshuo and He, Shuting and Fang, Xiang and Nan, Fangzhe and Tiwari, Prayag},
  booktitle={Proceedings of the 33rd ACM International Conference on Multimedia},
  pages={3192--3201},
  year={2025}
}

@inproceedings{fang2026align,
  title={To align or not to align: Strategic multimodal representation alignment for optimal performance},
  author={Fang, Wanlong and Zhang, Tianle and Chan, Alvin},
  booktitle={Proceedings of the AAAI Conference on Artificial Intelligence},
  volume={40},
  number={25},
  pages={21056--21064},
  year={2026}
}

@article{liu2023conditional,
  title={Conditional video diffusion network for fine-grained temporal sentence grounding},
  author={Liu, Daizong and Zhu, Jiahao and Fang, Xiang and Xiong, Zeyu and Wang, Huan and Li, Renfu and Zhou, Pan},
  journal={IEEE Transactions on Multimedia},
  volume={26},
  pages={5461--5476},
  year={2023},
  publisher={IEEE}
}

@article{liu2024pandora,
  title={Pandora's box: Towards building universal attackers against real-world large vision-language models},
  author={Liu, Daizong and Yang, Mingyu and Qu, Xiaoye and Zhou, Pan and Fang, Xiang and Tang, Keke and Wan, Yao and Sun, Lichao},
  journal={Advances in Neural Information Processing Systems},
  volume={37},
  pages={52127--52158},
  year={2024}
}

@inproceedings{liu2026attacking,
  title={Attacking Gray-Box Large Vision-Language Models with Adaptive SVD-Structured Adversarial Alignment},
  author={Liu, Daizong and Cai, Xiaowen and Dong, Junhao and Guo, Zhongliang and Qu, Xiaoye and Guan, Runwei and Fang, Xiang and Ye, Dengpan},
  booktitle={International Conference on Machine Learning},
  year={2026}
}

@inproceedings{liu2024unsupervised,
  title={Unsupervised domain adaptative temporal sentence localization with mutual information maximization},
  author={Liu, Daizong and Fang, Xiang and Qu, Xiaoye and Dong, Jianfeng and Yan, He and Yang, Yang and Zhou, Pan and Cheng, Yu},
  booktitle={Proceedings of the AAAI Conference on Artificial Intelligence},
  volume={38},
  number={4},
  pages={3567--3575},
  year={2024}
}

@inproceedings{liu2023hypotheses,
  title={Hypotheses tree building for one-shot temporal sentence localization},
  author={Liu, Daizong and Fang, Xiang and Zhou, Pan and Di, Xing and Lu, Weining and Cheng, Yu},
  booktitle={Proceedings of the AAAI Conference on Artificial Intelligence},
  volume={37},
  number={2},
  pages={1640--1648},
  year={2023}
}

@inproceedings{tang2024reparameterization,
  title={Reparameterization head for efficient multi-input networks},
  author={Tang, Keke and Zhao, Wenyu and Peng, Weilong and Fang, Xiang and Cui, Xiaodong and Zhu, Peican and Tian, Zhihong},
  booktitle={ICASSP 2024-2024 IEEE International Conference on Acoustics, Speech and Signal Processing (ICASSP)},
  pages={6190--6194},
  year={2024},
  organization={IEEE}
}

@article{xiong2024rethinking,
  title={Rethinking video sentence grounding from a tracking perspective with memory network and masked attention},
  author={Xiong, Zeyu and Liu, Daizong and Fang, Xiang and Qu, Xiaoye and Dong, Jianfeng and Zhu, Jiahao and Tang, Keke and Zhou, Pan},
  journal={IEEE Transactions on Multimedia},
  volume={26},
  pages={11204--11218},
  year={2024},
  publisher={IEEE}
}

@inproceedings{tang2025simplification,
  title={Simplification is all you need against out-of-distribution overconfidence},
  author={Tang, Keke and Hou, Chao and Peng, Weilong and Fang, Xiang and Wu, Zhize and Nie, Yongwei and Wang, Wenping and Tian, Zhihong},
  booktitle={Proceedings of the Computer Vision and Pattern Recognition Conference},
  pages={5030--5040},
  year={2025}
}

@article{cai2026towards,
  title={Towards building model/prompt-transferable attackers against large vision-language models},
  author={Cai, Xiaowen and Liu, Daizong and Qu, Xiaoye and Fang, Xiang and Dong, Jianfeng and Tang, Keke and Zhou, Pan and Sun, Lichao and Hu, Wei},
  journal={Advances in Neural Information Processing Systems},
  volume={38},
  pages={174022--174058},
  year={2026}
}

@article{yan2026fit,
  title={Fit the distribution: Cross-image/prompt adversarial attacks on multimodal large language models},
  author={Yan, Hai and Ma, Haijian and Cai, Xiaowen and Liu, Daizong and Yuan, Zenghui and Qu, Xiaoye and Dong, Jianfeng and Guan, Runwei and Fang, Xiang and He, Hongyang and others},
  journal={Advances in Neural Information Processing Systems},
  volume={38},
  pages={75204--75247},
  year={2026}
}

@inproceedings{liu2024towards,
  title={Towards robust temporal activity localization learning with noisy labels},
  author={Liu, Daizong and Qu, Xiaoye and Fang, Xiang and Dong, Jianfeng and Zhou, Pan and Nan, Guoshun and Tang, Keke and Fang, Wanlong and Cheng, Yu},
  booktitle={Proceedings of the 2024 Joint International Conference on Computational Linguistics, Language Resources and Evaluation (LREC-COLING 2024)},
  pages={16630--16642},
  year={2024}
}

@inproceedings{cai2025imperceptible,
  title={Imperceptible Beam-Sensitive Adversarial Attacks for LiDAR-based Object Detection in Autonomous Driving},
  author={Cai, Fuyao and Liu, Daizong and Fang, Xiang and Yu, Jixiang and Tang, Keke and Zhou, Pan},
  booktitle={2025 IEEE International Conference on Multimedia and Expo (ICME)},
  pages={1--6},
  year={2025},
  organization={IEEE}
}

@article{kuai2026dynamic,
  title={Dynamic Graph-enhanced Event Refinement for Temporal Sentence Grounding of Micro-moments},
  author={Kuai, Mingjin and Qin, You and Fang, Xiang and Ji, Wei and Zimmermann, Roger},
  journal={IEEE Transactions on Multimedia},
  year={2026},
  publisher={IEEE}
}

@inproceedings{fang2026towardsicml,
  title={Towards Understanding Modality Interaction in Multimodal Language Models via Partial Information Decomposition},
  author={Fang, Wanlong and Zhang, Tianle and Tao, Wen and Chan, Alvin},
  booktitle={International Conference on Machine Learning},
  year={2026}
}
\end{document}